\documentclass[letterpaper,11pt]{article}
\usepackage[margin=1in]{geometry}
\usepackage[bookmarks, colorlinks=true, plainpages = false, citecolor = blue,linkcolor=red,urlcolor = blue, filecolor = blue]{hyperref}
\usepackage{url}\urlstyle{rm}
\usepackage{amsmath,amsfonts,amsthm,amssymb,bm, verbatim,dsfont,mathtools}
\usepackage{color,graphicx,appendix}
\usepackage{subfigure}
\usepackage{etoolbox}
\usepackage{tikz}
\usepackage{xr,xspace}
\usepackage{todonotes}
\usepackage{paralist}
\usepackage{caption,soul}
\usepackage{algorithm}
\usepackage{algorithmic}
\makeatletter

\newtheorem{theorem}{Theorem}[section]
\newtheorem{lemma}[theorem]{Lemma}
\newtheorem{proposition}[theorem]{Proposition}
\newtheorem{corollary}[theorem]{Corollary}
\newtheorem{definition}[theorem]{Definition}

\newtheorem{conjecture}[theorem]{Conjecture}




\usepackage{xspace,prettyref}

\newcommand{\reals}{{\mathbb{R}}}

\newcommand{\naturals}{{\mathbb{N}}}


\newcommand{\polylog}{{\rm{polylog}}}

\newcommand{\eexp}{{\rm e}}

\newcommand{\diff}{{\rm d}}

\newcommand{\expect}[1]{\mathbb{E}\left[ #1 \right]}
\newcommand{\expectp}[1]{\mathbb{E}^{+} \left[ #1 \right]}
\newcommand{\expectm}[1]{\mathbb{E}^{-}\left[ #1 \right]}

\newcommand{\prob}[1]{ \mathbb{P}\left\{ #1 \right\} }

\newcommand{\var}{\mathsf{var}}

\newcommand{\Binom}{{\rm Binom}}
\newcommand{\Pois}{{\rm Pois}}

\newcommand{\eg}{e.g.\xspace}
\newcommand{\ie}{i.e.\xspace}

\newrefformat{eq}{(\ref{#1})}
\newrefformat{chap}{Chapter~\ref{#1}}
\newrefformat{sec}{Section~\ref{#1}}
\newrefformat{alg}{Algorithm~\ref{#1}}
\newrefformat{fig}{Fig.~\ref{#1}}
\newrefformat{tab}{Table~\ref{#1}}
\newrefformat{rmk}{Remark~\ref{#1}}
\newrefformat{clm}{Claim~\ref{#1}}
\newrefformat{def}{Definition~\ref{#1}}
\newrefformat{cor}{Corollary~\ref{#1}}
\newrefformat{lmm}{Lemma~\ref{#1}}
\newrefformat{prop}{Proposition~\ref{#1}}
\newrefformat{app}{Appendix~\ref{#1}}
\newrefformat{hyp}{Hypothesis~\ref{#1}}
\newrefformat{thm}{Theorem~\ref{#1}}



\newcommand{\indc}[1]{{\mathbf{1}_{\left\{{#1}\right\}}}}

\newcommand{\calE}{{\mathcal{E}}}
\newcommand{\calF}{{\mathcal{F}}}
\newcommand{\calG}{{\mathcal{G}}}

\newcommand{\calN}{{\mathcal{N}}}

\newcommand{\ER}{Erd\H{o}s-R\'enyi\xspace}

\renewcommand{\tilde}{\widetilde}
\renewcommand{\hat}{\widehat}

\newcommand{\sgn}{\mathrm{sgn}}

\begin{document}

\title{Local Algorithms for Block Models with Side Information}
\date{\today}
\author{
Elchanan Mossel\thanks{Research supported by NSF grants CCF 1320105, DOD ONR grant N00014-14-1-0823, and grant 328025 from the Simons Foundation. E.M is with Department of Statistics, The Wharton School, University of Pennsylvania, Philadelphia, PA and with 
the Departments of Statistics and Computer Science, U.C. Berkeley, Berkeley CA, \texttt{mossel@wharton.upenn.edu}.
}
 \and Jiaming Xu\thanks{Research supported by
DOD ONR Grant N00014-14-1-0823, and Grant 328025 from the Simons Foundation.
J. X is with Department of Statistics, The Wharton School, University of Pennsylvania, Philadelphia, PA, \texttt{jiamingx@wharton.upenn.edu}.
}}

\maketitle

\begin{abstract}
There has been a recent interest in understanding the power of local algorithms
for optimization and inference problems on sparse graphs. Gamarnik and Sudan (2014) showed that local algorithms are weaker than global algorithms for finding large independent sets in sparse random regular graphs thus refuting a conjecture by Hatami, Lov\'asz, and Szegedy (2012).
Montanari (2015) showed that local algorithms are  suboptimal for finding a community with high connectivity in the sparse \ER random graphs. 
For the symmetric planted partition problem (also named community detection for the block models) on sparse graphs, a simple observation is that  local algorithms cannot have 
non-trivial performance.

In this work we consider the effect of \emph{side information} on local algorithms for community detection under the binary symmetric stochastic block model.
In the block model with side information 
each of the $n$ vertices is labeled $+$ or $-$ independently
and uniformly at random; each pair of vertices is connected independently with probability $a/n$ if both of them
have the same label or $b/n$ otherwise. The goal is to estimate the
underlying vertex labeling given 1) the graph structure and 2) side information in the form of a vertex labeling positively correlated
with the true one. Assuming that the ratio between in and out degree $a/b$ is
$\Theta(1)$ and the average degree $ (a+b) / 2 = n^{o(1)}$, 
we show that a local algorithm, namely, belief propagation  run on the local neighborhoods, maximizes the expected fraction of vertices
labeled correctly in the following three regimes:
\begin{itemize}
\item $|a-b|<2$ and all $0 < \alpha < 1/2$ 
\item $(a-b)^2 > C (a+b)$ for some constant $C$ and all $0 < \alpha < 1/2$ 
\item For all $a,b$ if the probability that each given vertex label is incorrect is at most $\alpha^\ast$ for some constant $\alpha^\ast \in (0,1/2)$. 
\end{itemize}
Thus, in contrast to the case of independent sets or a single community in random graphs and to the case of symmetric block models without side information,
we show that local algorithms achieve optimal performance in the above three regimes for the block model with side information.

To complement our results, in the large degree limit $a \to \infty$, we give a formula of the expected fraction of vertices
labeled correctly by the local belief propagation, in terms of a fixed point of a recursion derived from the density evolution analysis with Gaussian approximations.
\end{abstract}

\newpage

\section{Introduction}

In this work we study the performance of {\em local algorithms} for {\em community detection}
in sparse graphs thus combining two lines of work which saw recent breakthroughs.

The optimality of the performance of local algorithm for  optimization problems on large graphs was raised by  Hatami, Lov\'asz, and Szegedy~\cite{HaLoSz:12} in the context of a theory of graph limits for sparse graphs.   The conjecture, regarding the optimality for finding independent sets in random graphs was refuted by Gamarnik and Sudan~\cite{GamarnikSudan:14}. More recently, Montanari~\cite{Montanari:15OneComm} showed that local algorithms are strictly suboptimal comparing to the global exhaustive search for finding a community with high connectivity in the sparse \ER random graph. 

In a different direction, following a beautiful conjecture from physics~\cite{DKMZ:2011}, new efficient  algorithms for the stochastic block models (i.e.\ planted partition) were developed and shown to detect the blocks whenever this is information theoretically possible~\cite{MONESL:15,MNS:2013b,Massoulie:2013,BordenaveLelargeMassoulie:2015dq}. 
It is easy to (see e.g.~\cite{KaMoSc:14}) that no local algorithm with access to neighborhoods of radius $o(\log n)$
 can have non-trivial performance for this problem. 


Our interest in this paper is in the application of local algorithms for community detection with side information on community structures. The motivations are two-folded: 1) from a theoretical perspective it is interesting to ask what is the effect of side information on the existence of optimal local algorithms 2) from the application perspective, it is important to know how to efficiently exploit side information in addition to the graph structure for community detection. 
We show that unlike the cases of independent sets on regular graphs or the case of community detection on sparse random graphs, local algorithms do have optimal performance.

\subsection{Local algorithms}


Local algorithms for optimization problems on sparse graphs are algorithms that determine if each vertex belongs to the solution or not based only on a small radius neighborhood around the node. Such algorithms are allowed to have an access to independent random variables associated to each node.

A simple example for a local algorithm is the following classical algorithm for finding independent sets in graphs. Attach to each node $v$ an independent uniform random variable $U_v$. Let the independent set consist of all the vertices whose $U_v$ value is greater than that of all of their neighbors. See Definition \ref{def:localalgorithm} for a formal definition of a local algorithm and ~\cite{LyonsNazarov:11,HaLoSz:12,GamarnikSudan:14, Montanari:15OneComm} for more background on  local algorithms.

There are many motivations for studying local algorithms: These algorithms are efficient: for example, for bounded degree graphs they run in linear time in the size of the graph and for graphs with maximal degree $\polylog(n)$ they run in time $n \times \polylog(n)$. Moreover, by design, these algorithms are easy to run in a distributed fashion.
Moreover, the existence of local algorithms implies correlation decay properties that are of interest in statistical physics, graph limit thoery and ergodic theory. Indeed the existence of a local algorithm implies that the solution in one part of the graph is independent of the solution in a far away part, see~\cite{LyonsNazarov:11,HaLoSz:12,GamarnikSudan:14} for a more formal and comprehensive discussion.

A striking conjecture of   Hatami, Lov\'asz, and Szegedy~\cite{HaLoSz:12} stated that local algorithms are able to find independent sets of the maximal possible density in random regular graphs.  
This conjecture was refuted by Gamarnik and Sudan~\cite{GamarnikSudan:14}. 
The work of Gamarnik and Sudan~\cite{GamarnikSudan:14} highlights the role of long range correlation and clustering in the solution space as obstacles for the optimality of local algorithms. Refining the methods of Gamarnik and Sudan, Rahman and Virag~\cite{RahmanVirag:14} showed that 
local algorithms cannot find independent sets of size larger  than half of the optimal density.

\subsection{Community detection in sparse graphs}

The stochastic block model is one of the most popular models for networks with
clusters.  The model has been extensively studied in statistics~\cite{HLL:1983,
SN:1997, BC:2009,cai2014robust,Zhangzhou15,Gao15}, computer science (where it is called the planted partition
problem)~\cite{DF:1989,JS:1998,CK:2001,McSherry:2001,Coja-Oghlan05,Coja-oghlan10,Chen12, anandkumar2013tensormixed,ChenXu14} and theoretical
statistical physics~\cite{DKMZ:2011, ZKRZ:2012, DKMZ:2011a}. In the simplest binary symmetric form,
it assumes that $n$ vertices are assigned into two clusters, or equivalently labeled with $+$ or $-$,
 independently and uniformly at random;
each pair of vertices is connected independently with probability $a/n$ if both of them
are in the same cluster or $b/n$ otherwise.

In the dense regime with $a=\Omega(\log n)$, it is possible to exactly recover the clusters from the observation of the graph. 
 A sharp exact recovery threshold has been found in \cite{Abbe14,Mossel14} and it is further shown that semi-definite programming can achieve the sharp threshold in \cite{HajekWuXuSDP14,Bandeira15}.
More recently, exact recovery thresholds have been identified in a more general setting with a fixed number of clusters \cite{HajekWuXuSDP15,YunProutiere14}, and with heterogeneous cluster sizes and edge probabilities \cite{AbbeSandon15,PerryWein15}.

Real networks are often sparse with bounded average degrees.
In the sparse setting with $a=\Theta(1)$, exact recovery of the clusters from the graph becomes hopeless as
the resulting graph under the stochastic block model will have many isolated vertices. Moreover, it is easy to see that even
vertices with constant degree cannot be labeled accurately given all the other
vertices' labels are revealed. Thus the goal in the sparse regime is to find a labeling that has a non-trivial
or maximal correlation with the true one (up to permutation of cluster labels).
It was conjectured in~\cite{DKMZ:2011} and
proven in~\cite{MONESL:15,MNS:2013b,Massoulie:2013} that nontrivial detection is feasible if and only if $(a-b)^2>2(a+b)$.
A spectral method based on
the non-backtracking matrix is shown to achieve the sharp threshold in \cite{BordenaveLelargeMassoulie:2015dq}.
In contrast, a simple argument in ~\cite{KaMoSc:14} shows that no local algorithm running on neighborhoods of radius $o(\log n)$
can attain nontrivial detection. 


\subsection{Community detection with side information}
The community detection problem under stochastic block model is an idealization of a network inference problem.
In many realistic settings, in addition to network information, some partial information about vertices' labels is
also available.  There has
been much recent work in the machine learning and applied networks communities
on combining vertex and network information (see for
example~\cite{ChWeSc:02,BaBaMo:02,BaBiMo:04,NewmanClauset15}).
In this paper, we ask the following natural but fundamental question:
\\

\emph{With the help of partial information about vertices' labels,
can local algorithms achieve the optimal detection probability?}
\\

This question has two motivations: 1) from a theoretical perspective we would like to understand how side information affects the existence of optimal local algorithms; 
2) from the application perspective, it is important to develop fast community detection algorithms which exploit side information in addition to the graph structure. 

There are two natural models for side information of community structures:
\begin{itemize}
\item
A model where a small random fraction of the vertices' labels is given accurately.
This model was considered in a number of recent works in physics and computer science~\cite{DKMZ:2011,vSMGE:2013,AvSG:2010,KaMoSc:14}.
The emerging conjectured picture is that in the case of  the binary symmetric stochastic block model, the local application of BP is able to achieve  the optimal detection probability.
This is stated formally as one of the main conjectures of~\cite{KaMoSc:14}, where it is proven in an asymptotic regime where the fraction of revealed information goes to $0$ {\em and}
 assuming $(a-b)^2 > C(a+b)$ for some large constant $C$.
\item
The model considered in this paper is where noisy information is provided for each vertex.
Specifically, for each vertex,  we observe a noisy label which is the same as its true label with probability $1-\alpha$
and different with probability $\alpha$, independently at random, for some $\alpha \in [0,1/2)$.

For this model, by assuming that $a/b=\Theta(1)$ and the average degree $ (a+b) / 2 = n^{o(1)}$ is smaller than all powers of $n$, we prove that local application of belief propagation maximizes the expected fraction of vertices labelled correctly, \ie, achieving the optimal detection probability, in the following regimes
\begin{itemize}
\item $|a-b|<2$,
\item $(a-b)^2 > C(a+b)$ for some constant $C$,
\item  $\alpha \le \alpha^\ast$ for some constant $0<\alpha^\ast < 1/2$.
\end{itemize}
Note that this proves the conjectured picture in a wide range of the parameters.
In particular, compared to the results of~\cite{KaMoSc:14}, we prove the conjecture in the whole regime $((a-b)^2 > C(a+b)) \times (\alpha \in (0,1/2))$ while in~\cite{KaMoSc:14}
the result is only proven for the limiting interval of this region $((a-b)^2 > C(a+b)) \times (\alpha' \to 0^{+})$,
where each vertex's true label is revealed with probability $\alpha'$.

In the large degree limit $a \to \infty$
we further provide a simple formula of the expected fraction of vertices labeled correctly by BP, in terms of a fixed point
of a recursion, based on the density evolution analysis. Density evolution has been used for the analysis of sparse
graph codes \cite{Urbanke08,Mezard09}, and more recently for the analysis of finding a single community in a sparse graph \cite{Montanari:15OneComm}.
\end{itemize}

\section{Model and main results}
We next present a formal definition of the model followed by a formal statement of the main results.

\subsection{Model}
We consider the binary symmetric stochastic block model with two clusters.
This is a random graph model on $n$ vertices,
where we first independently assign each vertex into one of the clusters uniformly at random, and then
independently draw an edge between each pair of vertices with probability $a/n$ if two
vertices are in the same clusters or $b/n$ otherwise. Let $\sigma_i=+$ if vertex $i$ is in the first cluster and $\sigma_i=-$ otherwise.

Let $G=G_n=(V,E)$ denote the observed graph (without the labels $\sigma$).
Let $\tilde{\sigma}$ be an $\alpha$ noisy version of $\sigma$:
for each vertex $i$ independently, $\tilde{\sigma}_i=\sigma_i$ with probability $1-\alpha$
and $\tilde{\sigma}_i=-\sigma_i$ with probability $\alpha$, where $\alpha \in [0, 1/2)$ is a fixed constant.
Hence, $\tilde{\sigma}$ can be viewed as the side information for the cluster structure.

\begin{definition}
The {\em detection problem with side information} is the inference problem of inferring $\sigma$
 from the observation of $(G, \tilde{\sigma})$. The estimation accuracy for an estimator $\hat{\sigma}$ is defined by
\begin{align}
p_{G_n}(\hat{\sigma})= \frac{1}{n}  \sum_{i=1}^n \prob{\sigma_i=\hat{\sigma}_i}, \label{eq:estimationaccuracy}
\end{align}
which equals to the expected fraction of vertices labeled correctly. Let $p^\ast_{G_n}$  denote the optimal estimation accuracy.
\end{definition}
The optimal estimator in maximizing the success probability $ \prob{\sigma_i=\hat{\sigma}_i}$ is the maximum a posterior (MAP) estimator, which is given by
$
 2 \times \indc{ \prob{\sigma_i=+  | G, \tilde{\sigma} } \ge  \prob{\sigma_i=-  |G, \tilde{\sigma}  } } -1,
$
and the maximum success probability is $ \frac{1}{2}  \expect{ | \prob{\sigma_i=+  | G, \tilde{\sigma} } -  \prob{\sigma_i=-  |G, \tilde{\sigma} } |} +\frac{1}{2}$.
Hence, the optimal estimation accuracy $p^\ast_{G_n}$ is given by
\begin{align}
p_{G_n} ^\ast  & = \frac{1}{2n}  \sum_{i=1}^n \expect{ \big| \prob{\sigma_i=+  | G, \tilde{\sigma} } -  \prob{\sigma_i=-  |G, \tilde{\sigma} } \big|} + \frac{1}{2} \nonumber  \\
&= \frac{1}{2}  \expect{ | \prob{\sigma_i=+  | G, \tilde{\sigma} } -  \prob{\sigma_i=-  |G, \tilde{\sigma} } |} +\frac{1}{2}, \label{eq:optimalaccuracygraph}
\end{align}
where the second equality holds due to the symmetry. However, computing the MAP estimator is computationally intractable in general, and it is unclear
whether the optimal estimation accuracy $p^\ast_{G_n}$ can be achieved by some estimator computable in polynomial-time.


In this paper, we focus on the regime:
\begin{align}
\frac{a}{b} = \Theta(1), \quad   a =n^{o(1)}, \quad  \text{ as } n \to \infty,  \label{eq:asymptotics}
\end{align}

It is well know that in the regime $a=n^{o(1)}$, a local neighborhood of a vertex is with high probability a tree. Thus, it is natural
to study the performance of local algorithms. We next present a formal definition of local algorithms which is a slight variant of the definition in~\cite{Montanari:15OneComm}.

Let $\calG_\ast$ denote the space of graphs with one distinguished vertex and labels $+$ or $-$ on each vertex.  
For an estimator $\hat{\sigma}$,
it can be viewed as a function $\hat{\sigma} : \calG_\ast \to \{ \pm \}$, which maps $(G, \tilde{\sigma}, u)$ to $\hat{\sigma}_u$
for every $(G, \tilde{\sigma}, u ) \in \calG_\ast.$
\begin{definition}\label{def:localalgorithm}
Given a $t \in \naturals$, an estimator $\hat{\sigma}$ is $t$-local if there exist a function $\calF: \calG_\ast \to \{\pm\}$ such that 
for all $(G, \tilde{\sigma}, u ) \in \calG_\ast$, 
\begin{align*}
\hat{\sigma} ( G, \tilde{\sigma}, u ) = \calF( G_u^t, \tilde{\sigma}_{G_u^t} ), 
\end{align*}
where $G_u^t$ is the  subgraph of $G$ induced by vertices whose distance to  $u$ is at most $t$; the distinguished vertex is $u$,
and  each vertex $i$ in $G_u^t$ has label $\tilde{\sigma}_i$; $\tilde{\sigma}_{G_u^t}$ is the restriction of $\tilde{\sigma}$ to vertices 
in $G_u^t$. Moreover, we call an estimator $\hat \sigma$ local, if it is $t$-local for some fixed $t$, regardless of the graph size $n$. 
\end{definition}
We can potentially allow local algorithms to access local independent uniform random variables as defined in \cite{HaLoSz:12,GamarnikSudan:14}. Since our main results show that the local BP algorithm which does not need to access external randomness, is already optimal, the extra randomness is not needed in our context.



\subsection{Local belief propagation algorithm} \label{sec:BP}
It is well known that, see \eg \cite[Lemma 4.3]{Montanari:15OneComm}, local belief propagation algorithm as defined in \prettyref{alg:MP_commun}
maximizes the estimation accuracy among local algorithms, provide that the graph is locally tree-like. Thus we focus on studying the local BP. 
\begin{algorithm}[htb]
\caption{Local belief propagation with side information}\label{alg:MP_commun}
\begin{algorithmic}[1]
\STATE Input: $n \in \naturals,$ $a>b >0$, $\alpha \in [0,1/2)$, adjacency matrix $A \in \{0,1\}^{n\times n}$,  and $t \in \naturals$.
\STATE Initialize: Set  $R^{0}_{i \to j}=0$ for all $i \in [n]$ and $ j \in \partial i$.
\STATE Run $t-1$ iterations of message passing as in \prettyref{eq:mp_commun} to compute $R^{t -1 }_{i\to j}$ for all $i \in [n]$ and $j \in \partial i$.
\STATE Compute  $R_{i}^{t}$ for all $i \in [n]$ as per \prettyref{eq:mp_combine_commun}.
\STATE Return $\hat{\sigma}_{\rm BP}^t$ with $\hat{\sigma}^t_{\rm BP} (i)= 2 \times \indc{ R^{t}_i \ge 0 }  -1.$
\end{algorithmic}
\end{algorithm}
Specifically, let $\partial i$ denote the set of neighbors of $i$ and define
\begin{align}
R_{i \to j}^t  = h_i + \sum_{\ell \in \partial i \backslash \{j\} }  F (R^{t-1}_{\ell \to i} ), \label{eq:mp_commun}
\end{align}
with initial conditions $R_{i \to j}^{0} = \gamma $ if $\tau_i=+$ and $R_{i \to j}^{0} =- \gamma$ if $\tau_i=-$, for all $i \in [n]$ and $j \in \partial i$.
Then we approximate $\frac{1}{2} \log \frac{\prob{G, \tilde{\sigma} | \sigma_u=+ } }{\prob{G, \tilde{\sigma} | \sigma_u=-}}$
by $R_u^t$ given by
\begin{align}
R_{u}^t  = h_u + \sum_{\ell \in \partial u }  F(\Lambda^{t-1}_{\ell \to u} ) \label{eq:mp_combine_commun}.
\end{align}
We remark that in each BP iteration, the number of outgoing messages to compute is $O(|E|)$, where $|E|$ is the total number of edges; each outgoing message needs to process $d'$ incoming
messages on average, where $d'$ is the average number of edges incident to an edge chosen uniformly at random.
Thus each BP iteration runs in time $O(|E| d')$ and $\hat{\sigma}_{\rm BP}^t$ is computable in time $O(t |E| d' )$. In the sparse graph with $a=\Theta(1)$,
$\hat{\sigma}_{\rm BP}^t$ runs in time linear in the size of the graphs. For graphs with maximal degree $\text{polylog}(n)$, it runs in time $n\; \text{polylog}(n).$

\subsection{Main results}
\begin{theorem}\label{thm:optimality}
Consider the detection problem with side information assuming that $a/b = \Theta(1)$ and
that $a = n^{o(1)}$.
Let $\hat{\sigma}_{\rm BP}^t$ denote the estimator given by Belief Propagation applied for $t$ iterations, as defined in \prettyref{alg:MP_commun}.
Then
\[
\lim_{t \to \infty} \limsup_{n \to \infty}  \left( p_{G_n}^\ast - p_{G_n}(\hat{\sigma}_{\rm BP}^t  ) \right) = 0
\]
in the following three regimes:
\begin{itemize}
\item $|a-b|<2$,
\item $(a-b)^2 > C(a+b)$ for some constant $C$,
\item $\alpha \le \alpha^\ast $ for some $0<\alpha^\ast <1/2$.
\end{itemize}
In other words, in each of these regimes a local application of belief propagation provides an optimal detection probability. 
\end{theorem}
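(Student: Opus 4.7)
The plan is to reduce the problem to an infinite Galton-Watson tree via local weak convergence and then show that, in each regime, the root posterior given the full tree (together with all side information) is already captured by depth-$t$ BP for $t$ large. First, since $a = n^{o(1)}$, the depth-$t$ neighborhood of a typical vertex $u$ in $G_n$ contains $n^{o(1)}$ vertices and is asymptotically a two-type Galton-Watson tree $T$ with offspring Poisson$(a/2)$ in-type and Poisson$(b/2)$ opposite-type, each vertex carrying an independent $\alpha$-noisy label. A standard edge-exploration coupling yields vanishing total variation between the joint law of $(\sigma_u, G_u^t, \tilde\sigma_{G_u^t})$ and the corresponding tree law for each fixed $t$. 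On the tree, BP is Bayes-exact: the recursion in \prettyref{alg:MP_commun} computes $\tfrac{1}{2}\log\bigl[\Prob(\sigma_o = + \mid T_o^t, \tilde\sigma_{T_o^t})/\Prob(\sigma_o = - \mid T_o^t, \tilde\sigma_{T_o^t})\bigr]$ once $h$, $F$, and the initialization $\gamma$ are chosen to match the Ising broadcast log-likelihoods. Writing $p_{T,t}^\ast$ for the tree Bayes accuracy at depth $t$ and $p_{T,\infty}^\ast$ for the infinite-tree Bayes accuracy, the coupling gives $\lim_{n\to\infty} p_{G_n}(\hat\sigma_{\rm BP}^t) = p_{T,t}^\ast$, and the martingale convergence theorem applied to the tower of nested tree posteriors gives $p_{T,t}^\ast \uparrow p_{T,\infty}^\ast$. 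The whole theorem therefore reduces to the single inequality $\limsup_{n\to\infty} p_{G_n}^\ast \le p_{T,\infty}^\ast$, i.e.\ that information from beyond any bounded radius in the random graph is no more useful than the deep-tree information already absorbed into $p_{T,\infty}^\ast$.

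To prove this bound I would treat the three regimes separately. In regime (i), $|a-b|<2$, the Ising broadcast on $T$ is below the Kesten-Stigum threshold and hence non-reconstructible; a variance-contraction argument in the spirit of Mossel-Neeman-Sly, adapted to the noisy-observation model, shows that the weak per-vertex side information cannot be amplified through the subcritical broadcast, so the infinite-tree root posterior coincides in $L^1$ with the depth-$t$ posterior as $t\to\infty$, and transferring back to $G_n$ via the coupling yields the required inequality. In regime (ii), $(a-b)^2 > C(a+b)$ with $C$ large, the density-evolution recursion for the distribution of the BP log-likelihood ratio is a strict contraction in a suitable Wasserstein (or truncated $L^2$) metric; this gives a unique BP fixed point and geometric forgetting of initialization, which in particular forces the graph posterior and the finite-depth tree posterior to agree up to $o_t(1)$. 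In regime (iii), $\alpha \le \alpha^\ast$, the side information alone provides a strong prior on every vertex, and one shows directly that BP messages initialised with versus without the additional far-away graph information couple within total variation $O(\lambda^t)$ for some $\lambda<1$ depending only on $\alpha^\ast$, independently of $a,b$.

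The main obstacle is regime (i). Standard contraction arguments fail below Kesten-Stigum, so one must prove a noisy analogue of the classical non-reconstruction theorem: that for the Ising broadcast on $T$ with independent $\alpha$-noisy observations at every vertex, $\Prob(\sigma_o \mid T, \tilde\sigma)$ is determined in the $L^1$ sense by the depth-$t$ data as $t\to\infty$. This is subtler than the noiseless case because the side information never decays with depth and must be prevented from coherently aligning through deep subtrees; the natural route is an information-percolation or two-point correlation-decay estimate tracking how accumulated noisy evidence from distant subtrees influences the root, exploiting that below Kesten-Stigum the reconstruction kernel is contracting on fluctuations about the prior. Once this noisy non-reconstruction statement is secured, together with the local-coupling step and the tree-BP optimality, the theorem follows in all three regimes.
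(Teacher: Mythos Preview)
Your reduction to the tree is right in spirit, but there are two genuine gaps.

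\textbf{The upper bound on $p_{G_n}^\ast$ is not handled.} Local weak convergence only tells you that $(G_u^t,\tilde\sigma_{G_u^t})$ looks like $(T_u^t,\tilde\tau_{T_u^t})$; it says nothing about how much the \emph{rest} of the graph can help the optimal estimator. You assert the key inequality $\limsup_n p_{G_n}^\ast \le p_{T,\infty}^\ast$ and then try to prove it regime by regime via tree arguments, but tree arguments alone cannot bound a graph quantity. The paper's mechanism, which your outline is missing, is to introduce an \emph{oracle} that reveals the \emph{exact} labels $\sigma_{\partial G_u^t}$ on the depth-$t$ boundary. On a tree this boundary is a Markov blanket, so conditioning on it decouples the interior from everything outside; one then shows (\prettyref{lmm:asymptoticalIndependence}) that this decoupling survives on the graph up to $o(1)$. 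This yields $p_{G_n}^\ast \le p_{T^t}^\ast + o(1)$ where $p_{T^t}^\ast$ is the tree accuracy with \emph{exact} boundary data. The entire theorem then reduces to a single concrete statement: $\expect{|X_u^t - Y_u^t|}\to 0$, where $X_u^t$ and $Y_u^t$ are the depth-$t$ magnetizations initialized respectively with exact and with $\alpha$-noisy boundary labels. Your martingale argument for $p_{T,t}^\ast\uparrow p_{T,\infty}^\ast$ is fine but does not supply this upper bound.

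\textbf{Regime (i) is misidentified.} The condition $|a-b|<2$ is \emph{not} the Kesten--Stigum threshold; it is the \emph{uniqueness} threshold. With $\theta=\tanh\beta=(a-b)/(a+b)$ and $d=(a+b)/2$ one has $|\theta|d=|a-b|/2$, whereas Kesten--Stigum is $\theta^2 d=1$, i.e.\ $(a-b)^2=2(a+b)$. In the uniqueness regime the BP map $F$ has $|F'|\le|\tanh\beta|$, so the difference between the all-$+$ and all-$-$ boundary initializations contracts geometrically at rate $|\tanh\beta|\,d=|a-b|/2<1$ per level; the proof is three lines (\prettyref{lmm:uniqueness}) and sandwiches $|X_u^t-Y_u^t|$ between the two extreme boundary conditions. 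Your proposed route through a ``noisy non-reconstruction theorem'' is both unnecessary here and aimed at the wrong threshold: the gap between uniqueness and Kesten--Stigum (e.g.\ $a=100$, $b=96$) is not covered by $|a-b|<2$ at all, so arguing via non-reconstruction below KS would not match the stated regime. Far from being the ``main obstacle,'' regime (i) is the easiest case; the real work in the paper is the contraction for $\expect{(X_u^t-Y_u^t)^2}$ (small $|\theta|$) and $\expect{\sqrt{|X_u^t-Y_u^t|}}$ (large $|\theta|$) in regimes (ii) and (iii), which your sketch for those regimes gestures at but does not carry out.
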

The above results should be contrasted with the case with no side information available, where it is known, see e.g.~\cite{KaMoSc:14},  that BP applied for $t = o(\log n)$ iterations  cannot recover a partition better than random, \ie, achieving
the non-trivial detection.

In the large degree regime, we further derive an asymptotic formula for $p_{G_n}(\hat{\sigma}_{\rm BP}^t )$
in terms of a fixed point of a recursion.

\begin{theorem}\label{thm:accuracy}
Consider the regime \prettyref{eq:asymptotics}. Assume further that as $n \to \infty$,
 $a \to \infty$ and $\frac{a-b}{\sqrt{b} } \to \mu,$
 where $\mu$ is a fixed constant.
 Let $h(v) = \expect{ \tanh ( v + \sqrt{v} Z + U )   }$, where $Z \sim \calN(0,1)$; $U$ is independent of $Z$ and $U= \gamma$ with probability $1-\alpha$ and $U=-\gamma$ with probability $\alpha$,
 where $\gamma= \frac{1}{2} \log \frac{1-\alpha}{\alpha}$.
 Define $\underline{v}$ and $\overline{v}$ to be the smallest and largest fixed point of $v= \frac{\mu^2}{4} h(v)$, respectively.
 let $\hat{\sigma}_{\rm BP}$ denote the estimator given by Belief Propagation applied for $t$ iterations, as defined in \prettyref{alg:MP_commun}.
  Then,
 \begin{align*}
\lim_{t \to \infty} \lim_{n \to \infty} p_{G_n } (\hat{\sigma}_{\rm BP}^t ) & = 1-  \expect{ Q \left(\frac{ \underline{v}  + U }{ \sqrt{\underline{v}  } } \right)}, \\
 \limsup_{n \to \infty}p_{G_n}^\ast & \le 1- \expect{ Q \left(\frac{ \overline{v}  + U }{ \sqrt{\overline{v}  } } \right)} ,
 \end{align*}
 where $Q(x)=\int_{x}^{\infty} \frac{1}{\sqrt{2\pi}} \eexp^{-y^2/2} \diff y$.  Moreever, $\underline{v}=\overline{v}$
 and $\lim_{t \to \infty} \lim_{n \to \infty} p_{G_n}(\hat{\sigma}_{\rm BP}^t )= \limsup_{n \to \infty}p_{G_n}^\ast$ in the following three regimes:
\begin{itemize}
\item $|\mu|<2$,
\item $|\mu|>C$ for some constant $C$,
\item $\alpha \le \alpha^\ast $ for some $0<\alpha^\ast <1/2$.
\end{itemize}
\end{theorem}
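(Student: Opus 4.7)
The plan is to reduce the global problem to an analysis of BP on the limiting labeled Galton-Watson tree, carry out a density evolution under Gaussian approximation in the large-degree limit, and sandwich the Bayes accuracy between two fixed points of the resulting scalar recursion. Under $a=n^{o(1)}$, for each fixed $t$ the $t$-neighborhood $(G_u^t,\tilde{\sigma}_{G_u^t})$ at a uniform root $u$ converges in total variation to a labeled Poisson tree $T^t$: the root's label is uniform; conditional on a vertex label $s$, the vertex has $\Pois(a/2)$ children of label $s$ and $\Pois(b/2)$ children of label $-s$; every vertex's noisy label is its true label flipped independently with probability $\alpha$. On $T^t$, BP computes the exact posterior half-LLR, so conditional on the root being $+$, an upward message $R^t$ obeys the distributional recursion
\begin{align*}
R^{t+1} \eqdistr U + \sum_{i=1}^{N_+} F(R_i^t) + \sum_{j=1}^{N_-} F(\tR_j^t),
\end{align*}
where $N_+\sim\Pois(a/2)$, $N_-\sim\Pois(b/2)$, $F(x)=\tanh^{-1}(\tanh(\beta)\tanh(x))$ with $\beta=\tfrac12\log(a/b)$, and $R_i^t,\tR_j^t$ are i.i.d.\ copies of $R^t$ under the conditioning that the corresponding child's label is $+$ or $-$.

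In the limit $a\to\infty$ with $(a-b)/\sqrt{b}\to\mu$ we have $\beta=\mu/(2\sqrt{b})(1+o(1))$ and $F(x)=\tanh(\beta)\tanh(x)+O(\beta^3)$, so each summand is of order $1/\sqrt{b}$ while the number of summands is of order $b$. Postulating the Gaussian-symmetric ansatz (conditional on $+$) $R^t\eqdistr v_t+\sqrt{v_t}\,Z+U$ with $Z\sim\calN(0,1)$, the mean of the Poissonized sum reduces, via antisymmetry of $\mathbb{E}[\tanh(R^t)\mid\pm]$ across the two branches, to $\tfrac{a-b}{2}\tanh(\beta)\mathbb{E}[\tanh(R^t)\mid +]+o(1)\to \tfrac{\mu^2}{4}h(v_t)$; the Poisson second-moment formula gives variance $\tfrac{a+b}{2}\mathbb{E}[F(R^t)^2]\to (\mu^2/4)\mathbb{E}[\tanh^2(R^t)]$, which by the Nishimori identity $\mathbb{E}[\tanh^2(R^t)]=\mathbb{E}[\tanh(R^t)\mid +]=h(v_t)$ agrees with the mean. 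A Lindeberg CLT on the small-summand Poissonized sum propagates the ansatz one step, yielding the scalar recursion $v_{t+1}=(\mu^2/4)h(v_t)$ initialized at $v_0=0$.

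The map $g(v)=(\mu^2/4)h(v)$ is nonnegative, monotone nondecreasing, and bounded by $\mu^2/4$; iterating from $0$ produces a nondecreasing sequence converging to the smallest fixed point $\underline{v}$, and the root error probability at iterate $t$ equals $\mathbb{E}[Q((v_t+U)/\sqrt{v_t})]$ by direct Gaussian calculation, yielding the first claimed formula as $t\to\infty$. For the upper bound on $p^\ast_{G_n}$, revealing the truth on the depth-$t$ boundary of $T^t$ gives a task dominating the original, whose Bayes-optimal posterior is computed by BP with infinite-certainty boundary messages; by monotonicity of $g$ this corresponds to density evolution initialized at $v=+\infty$, which decreases to the largest fixed point $\overline{v}$, yielding $\limsup_n p^\ast_{G_n}\le 1-\mathbb{E}[Q((\overline{v}+U)/\sqrt{\overline{v}})]$.

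When $\underline{v}=\overline{v}$ the two bounds coincide and local BP is Bayes-optimal, so it remains to prove uniqueness of the fixed point in each regime. Stein integration by parts gives $g'(v)=\tfrac{\mu^2}{4}\mathbb{E}[\mathrm{sech}^2(v+\sqrt{v}Z+U)(1-\tanh(v+\sqrt{v}Z+U))]\le \mu^2/4$, so for $|\mu|<2$ the map $g$ is a strict contraction and the fixed point is unique. For $|\mu|>C$ large and for $\alpha\le\alpha^\ast$, any fixed point is forced into the saturation region $v\approx \mu^2/4$ where $h\approx 1$ and $g'$ is bounded well below $1$; a careful monotonicity analysis of $h$ (in $v$ for the former regime, in $\alpha$ for the latter, using that $h$ is increasing in $1-2\alpha$) shows that $g$ meets the diagonal exactly once. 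The principal obstacle is the density-evolution step: one must control the Lindeberg CLT error uniformly in $t$ and verify that Nishimori symmetry is preserved under the finite-$a$ recursion despite $F=\tanh^{-1}(\cdot)$ being unbounded, which requires a uniform tail / integrability bound on $R^t$. Uniqueness in the strong-side-information regime is also delicate, since the fixed point need not be large but must nonetheless be pushed into a regime where $g'<1$ uniformly.
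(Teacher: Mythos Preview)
Your high-level architecture matches the paper: reduce to the labeled Galton--Watson tree, run density evolution with a Gaussian approximation in the large-degree limit, sandwich the Bayes accuracy between iterates started from noisy versus exact boundary, and identify the limits with the least and greatest fixed points of $v=\tfrac{\mu^2}{4}h(v)$. Your treatment of the $|\mu|<2$ case via the contraction bound $h'\le 1$ is also the paper's argument.

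There are, however, two substantive issues.

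\textbf{The ``hard'' regimes.} For $|\mu|>C$ and for $\alpha\le\alpha^\ast$ you propose to prove uniqueness of the fixed point directly, by arguing that any fixed point lies in a saturation region where $g'<1$. The paper does \emph{not} do this, and in fact treats direct uniqueness of the fixed point for general $\mu$ as an open conjecture (and notes that $h$ is not concave for small $\alpha$, so the obvious route fails). Instead, the paper imports the already-established tree-level results: in those regimes it is shown separately (by analyzing moments of $|X_u^t-Y_u^t|$ on the tree, not by studying the scalar recursion) that $\lim_t\lim_n p^\ast_{T^t}=\lim_t\lim_n q^\ast_{T^t}$, and this equality forces $\underline{v}=\overline{v}$ a posteriori. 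Your sketch in these regimes is vague (``a careful monotonicity analysis of $h$\ldots shows that $g$ meets the diagonal exactly once'') and would need a genuinely new idea; the paper's route bypasses the scalar analysis entirely here.

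\textbf{Technical misconceptions.} First, $F(x)=\tanh^{-1}(\tanh(\beta)\tanh(x))$ is bounded by $|\beta|$, so your worry about unboundedness is unfounded; Nishimori symmetry holds automatically because the messages are exact half-LLRs on the tree. Second, the exact-boundary recursion is not ``initialized at $v=+\infty$'': after one BP step the boundary messages $\pm\infty$ become $\pm\beta$, and the correct initialization is $w_1=\mu^2/4$, from which the sequence decreases. Third, you do not need the CLT error to be uniform in $t$: the order of limits is $n\to\infty$ first for each fixed $t$ (a Berry--Esseen bound for Poisson sums gives error $O(a^{-1/2})$), then $t\to\infty$ on the deterministic scalar sequence.
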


\subsection{Proof ideas}
The proof of \prettyref{thm:optimality} follows ideas from~\cite{MONESL:15,MNS:2013a}.
\begin{itemize}
\item To bound from above the accuracy of an arbitrary estimator, we bound its accuracy for a specific random vertex $u$. Following~\cite{MONESL:15}, we consider an estimator, which in addition to the graph structure and the noisy labels, the exact labels of all vertices at distance exactly $t$ from $u$ is also given. As in~\cite{MONESL:15}, it is possible to show that the best estimator in this case is given by BP for $t$ levels using the exact labels at distance $t$.
\item
The only difference between our application of BP and the BP upper bound above is the quality of information at distance exactly $t$ from vertex $u$. Our goal is to now analyze the recursion of random variables defining BP in both cases and show they converge to the same value given exact or noisy information at level $t$.
\item 
In the two cases where 1) $(a-b)^2 > C(a+b)$  and 2) where $\alpha$ is small, our proof follows the pattern of~\cite{MNS:2013a}. We note however that the paper~\cite{MNS:2013a} did not consider side information and the adaptation of the proof is far from trivial. 
Similar to the setup in~\cite{MNS:2013a}, the noisy labels at the boundary, \ie, level $t$, play the role as an initialization of the recursion.
However, the noisy labels inside the tree results in less symmetric recursions that need to be controlled. 
Finally in the case where $\alpha$ is small they play a novel role as the reason behind the contraction of the recursion. 
\item
The case where $a-b<2$ corresponds to the uniqueness regime. Here  the recursion converges to the same value if all the vertices at level $t$ are $+$ or all vertices at level $t$ are $-$. This implies that it converges to the same value for all possible values at level $t$.
\end{itemize}
The proof of \prettyref{thm:accuracy} instead follows the idea of density evolution \cite{Urbanke08,Mezard09},
which was recently used for analyzing the problem of finding a single community in a sparse graph \cite{Montanari:15OneComm}.
\begin{itemize}
\item The neighborhood of a vertex $u$ is locally tree-like and thus the incoming messages to vertex $u$ from its neighbors  in BP iterations are independent. In the large degree limit, the sum of incoming messages is distributed as Gaussian conditional on its label. Moreover, its mean and variance admit a simple recursion over $t$, which
    converge to a fixed point as $t\to \infty$.
\item
As we pointed out earlier, the only difference between our application of BP and the BP upper bound discussed above is the quality of information at distance exactly $t$ from vertex $u$. Hence, the mean and variance for
both BPs satisfy the same recursion but with different initialization. If there is a unique fixed point of the recursion for mean and variance, then the mean and variance for
both BPs  converge to the same values as $t\to \infty$.
\item The case $|\mu|<2$ exactly corresponds to the regime below the Kesten-Stigum bound \cite{KestenStigum:66}. In this case, we can show that the recursion is a contraction mapping 
and thus has a unique fixed point. 
\end{itemize}

\subsection{Conjectures and open problems}\label{sec:numerical}
There are many interesting conjectures and open problems resulting from this work.  
First, we believe that local BP with side information always achieves optimal estimation accuracy. 
\begin{conjecture}
Under the binary symmetric stochastic block model with $\alpha$-noisy side information,
 $\lim_{t \to \infty} \lim_{n \to \infty} p_{G_n}(\hat{\sigma}_{\rm BP}^t )= \limsup_{n \to \infty}p_{G_n}^\ast$ holds for all $a$, $b$, and $\alpha$.
\end{conjecture}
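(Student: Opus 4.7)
The plan is to extend the comparison-of-recursions framework used to prove \prettyref{thm:optimality} to all values of $(a,b,\alpha)$. Recall that the framework runs two BP iterations on the local two-type Poisson Galton-Watson tree $T$: the honest $\hat\sigma_{\rm BP}^t$, whose messages start at $0$ on the depth-$t$ boundary and use only the noisy labels $\tilde\sigma$ inside, and an oracle variant that additionally reveals the true labels $\sigma$ on the boundary. The oracle upper-bounds $p^\ast_{G_n}$ up to vanishing error while the honest BP realizes $p_{G_n}(\hat\sigma_{\rm BP}^t)$, so the conjecture reduces to showing that the two conditional message laws at the root agree in the double limit $t,n\to\infty$ for every $(a,b,\alpha)$ with $a/b=\Theta(1)$ and $a=n^{o(1)}$.

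The BP update rule \prettyref{eq:mp_commun} is monotone in its incoming messages (because $F$ is monotone), so when lifted to the space of message distributions it preserves stochastic ordering. Initializing from $0$ and from $+\infty$ therefore yields the smallest and largest fixed point $\underline{Q}$ and $\overline{Q}$ of the cavity recursion on $T$, respectively, and the conjecture becomes $\underline{Q}=\overline{Q}$ for all parameters. To prove this I would pursue three complementary strategies. \emph{Analytic continuation in $\alpha$.} \prettyref{thm:optimality} already supplies uniqueness for small $\alpha$ (by the third bullet) and for both very weak ($|a-b|<2$) and very strong ($(a-b)^2>C(a+b)$) signal; one tries to show that the ``uniqueness set'' is open and closed in $\alpha$, exploiting real-analyticity of the BP fixed-point equations. \emph{Free-energy convexity / I-MMSE.} Establish that the limiting free energy $\lim_n n^{-1}\log\Prob\{G_n,\tilde\sigma\}$ is $C^1$ in $\alpha$, so that its derivative---the single-vertex posterior magnetization---can be read off unambiguously from both $\underline{Q}$ and $\overline{Q}$. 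Any gap would force a first-order jump in $\alpha$, which interpolation-style convexity arguments in the spirit of~\cite{KaMoSc:14} should rule out. \emph{Contraction in an adapted distance.} Introduce a Wasserstein-type distance on message laws that exploits the external field $\gamma=\tfrac12\log\tfrac{1-\alpha}{\alpha}$ and show that any two fixed points of the BP recursion operator are at distance $0$.

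The main obstacle is the intermediate regime: signal strength $(a-b)^2$ only modestly above the Kesten-Stigum threshold $2(a+b)$, together with $\alpha$ bounded away from $0$ and $1/2$. Here the pure (no side information) cavity recursion on the Poisson tree can plausibly admit several fixed points, so naive contraction or small-$\alpha$ perturbation arguments fail; the conjecture asserts that any strictly positive external field $\gamma$ collapses this multiplicity to a single attracting solution. Proving this will require a quantitative understanding of how $\gamma$ breaks the symmetric landscape of BP solutions, likely combined with a second-moment or replica-symmetric check along the lines of~\cite{MONESL:15,KaMoSc:14}. A secondary difficulty is that the tree reduction underlying \prettyref{thm:optimality} uses a rather loose local weak convergence; proving the conjecture may require a \emph{quantitative} coupling between the $t$-neighborhood of a random vertex in $G_n$ and $T$ that remains effective as $t$ grows slowly with $n$.
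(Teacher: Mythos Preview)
The statement you are attempting to prove is stated in the paper as a \emph{conjecture} (in the ``Conjectures and open problems'' subsection), not as a theorem; the paper offers no proof and explicitly leaves the intermediate regime open. So there is no ``paper's own proof'' to compare your proposal against.

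Your write-up is a research plan rather than a proof. The reduction you describe---coupling the honest BP and the oracle BP on the Galton--Watson tree and reducing the question to uniqueness of the cavity fixed point ($\underline{Q}=\overline{Q}$)---is exactly the framework the paper uses for the three regimes it \emph{does} settle, and is the right starting point. But none of your three strategies is carried far enough to close the gap, and you acknowledge this yourself. Concretely: (i) the ``analytic continuation in $\alpha$'' idea needs the uniqueness set to be both open and closed, yet you give no argument for either, and real-analyticity of the fixed-point equation does not by itself preclude a bifurcation at some interior $\alpha$; (ii) the I-MMSE/free-energy route would require establishing differentiability of the limiting free energy in $\alpha$ for the sparse block model with side information, which is itself an open problem at this level of generality and is not supplied by the references you cite; (iii) the Wasserstein-contraction idea is precisely what fails in the intermediate regime---the paper's own proofs already exhaust the parameter ranges where a direct contraction (in second moment for small $|\theta|$, in square-root moment for large $|\theta|$) can be pushed through, and you offer no new metric or inequality that would extend this.

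In short, your proposal correctly identifies the target (uniqueness of the BP fixed point on the tree with external field) and the obstacle (the intermediate-SNR, intermediate-$\alpha$ regime), but does not supply a mechanism that overcomes it. This matches the state of affairs in the paper: the authors conjecture uniqueness holds everywhere, verify it numerically in the large-degree limit (their Conjecture on the fixed points of $v=\mu^2 h(v)/4$), and note that even the natural concavity approach to that simpler recursion fails because $h$ is empirically non-concave for small $\alpha$. Any genuine proof will need a new idea beyond the three you list.
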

In the large degree regime with $a \to \infty $, $a=n^{o(1)}$, and $ \frac{a-b}{\sqrt{b} } \to \mu$, \prettyref{thm:accuracy} implies that
the above conjecture is true if $v=\mu^2 h(v)/4$ always has a unique fixed point. Through simulations, we find that
$v=\mu^2 h(v)/4$ seems to have a unique fixed point for all $\mu$ and $\alpha$, and the asymptotically optimal estimation accuracy
is depicted in Fig.~\ref{fig:h_ErrorProb}.
\begin{figure}[ht]
\centering
\includegraphics[width=4in]{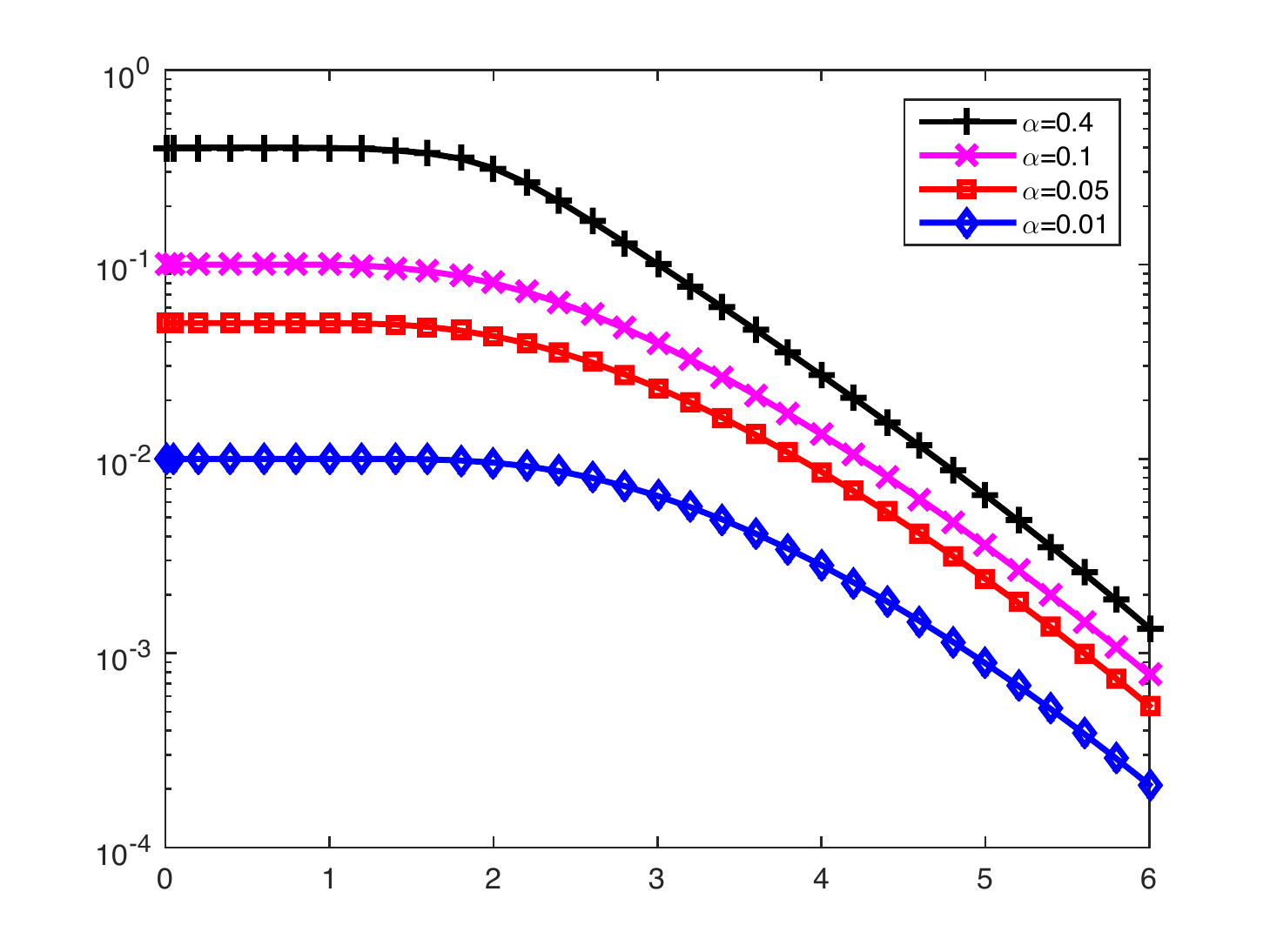}
\caption{Numerical calculation of $\expect{ Q \left(\frac{ v + U }{ \sqrt{v  } } \right) } $ for $v=\underline{v}=\overline{v} $ (y axis) versus $\mu$ (x axis) with different
$\alpha$. }
\label{fig:h_ErrorProb}
\end{figure}
\begin{figure}[ht]
\centering
\includegraphics[width=4in]{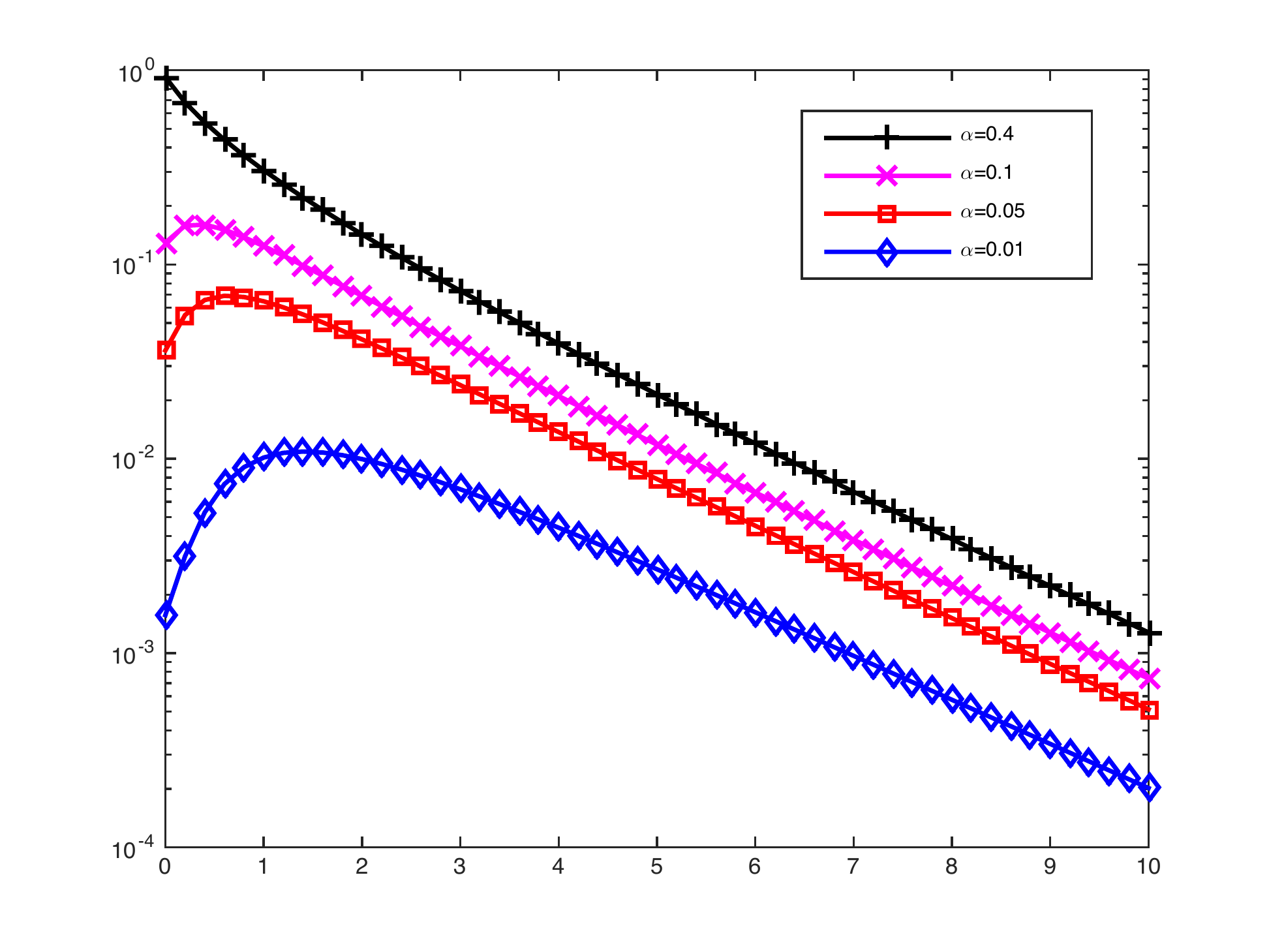}
\caption{Numerical calculation of $h'(v)$ (y axis) versus $v \in [0, 10]$ (x axis) with different $\alpha$.}
\label{fig:h_derivative_new}
\end{figure}

We are only able to show $h(v)$ has a unique fixed point if $|\mu|<2.$, but we believe that it is true for all $\mu$. 
\begin{conjecture}\label{conj:uniquenss}
For all $|\mu| \ge 2$ and $\alpha \in (0,1/2)$, $v=\mu^2 h(v)/4$ has a unique fixed point.
\end{conjecture}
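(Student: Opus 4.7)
The plan is to deduce uniqueness from a concavity property of $h$, which I would establish via an information-theoretic identification with an MMSE function. First, if $h$ is concave on $[0,\infty)$, setting $g(v) := \frac{\mu^2}{4} h(v) - v$ produces a concave function. Since $h(0) = (1-2\alpha) \tanh \gamma > 0$ whenever $\alpha \in (0, 1/2)$, one has $g(0) > 0$; and $g(v) \to -\infty$ as $v \to \infty$ because $h \le 1$. A concave function which is positive at the origin and diverges to $-\infty$ has exactly one zero on $(0,\infty)$, yielding the unique fixed point.

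To prove concavity of $h$, I would identify $h(v) = 1 - \mathrm{mmse}(v)$ for a scalar binary-input Gaussian channel. Consider $Y = \sqrt{v}\, \sigma + Z'$ with $\sigma \in \{\pm 1\}$ uniform, $Z' \sim \calN(0,1)$, together with an independent noisy label $\tilde \sigma$ satisfying $\prob{\tilde \sigma \neq \sigma} = \alpha$. The posterior log-likelihood ratio is $L = 2\sqrt{v}\,Y + 2U$ with $U := \gamma \tilde \sigma$, and conditional on $\sigma = +1$ it has the same law as $2v + 2\sqrt{v} Z + 2U$ appearing in the definition of $h$. Writing $m := \tanh(L/2) = \expect{\sigma \mid Y, \tilde \sigma}$ and using the Nishimori identity yields $h(v) = \expect{m^2} = 1 - \mathrm{mmse}(v)$, so concavity of $h$ is equivalent to convexity of the Bayes MMSE as a function of the SNR $v$.

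Applying Gaussian integration by parts twice to $h(v) = \expect{\tanh(v + U + \sqrt{v} Z)}$ and using the Nishimori averaging identity $\expect{f(m) \mid \sigma = +1} = \expect{f(m)(1+m)}$ to pass from the $(+)$-conditional law to the symmetric marginal law of $m$, I would obtain the compact formula
\begin{align*}
h''(v) = -2 \expect{(1 - m^2)^2 (1 - 3 m^2)},
\end{align*}
with the expectation taken under the symmetric marginal law. Concavity of $h$ therefore reduces to the inequality $\expect{(1-m^2)^2(1-3m^2)} \ge 0$. In terms of the even moments $M_k := \expect{m^{2k}}$, this is $1 - 5 M_1 + 7 M_2 - 3 M_3 \ge 0$; Nishimori further reduces each $M_k$ to a Gaussian integral $\expect{\tanh^{2k-1}(v + U + \sqrt{v} Z)}$ of an odd power of $\tanh$.

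The main obstacle is this last inequality. Since $(1-t^2)^2(1-3t^2)$ changes sign on $[-1,1]$, no pointwise bound works: the sign can only emerge from the specific Gaussian-mixture structure of the law of $m$. My plan would be to attack it along two lines: (a) invoke or derive a general convexity-of-MMSE theorem for scalar binary-input Gaussian channels, in the spirit of the functional-property analyses of Wu and Verd\'u; or (b) express $(1-m^2)^2(1-3m^2)$ as a covariance of observables on two independent replicas drawn from the posterior, and verify the sign via a Gaussian correlation or FKG-type argument on the Nishimori line. As a fallback one could treat the extreme regimes of $v$ separately (where $m$ concentrates near $0$ or $\pm 1$, making the sign of $h''(v)$ easy to read off from the formula above) and interpolate, which would at least close the case $|\mu| \ge 2$ where the relevant fixed point can be localized a priori.
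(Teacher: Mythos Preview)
The statement you are trying to prove is a \emph{conjecture} in the paper; the authors do not prove it and explicitly flag it as open. More importantly, the paper already anticipates and rules out your approach: immediately after stating the conjecture, the authors write that it is tempting to prove it by showing $h$ is concave on $[0,\infty)$, but that numerical experiments show $h$ is \emph{convex} near $v=0$ when $\alpha \le 0.1$. So your plan cannot succeed as stated.

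The failure is easy to see directly in your own formula. You reduce concavity to the inequality $\expect{(1-m^2)^2(1-3m^2)} \ge 0$. Take $v \to 0^+$: then $m \to \tanh(U)$, so $m^2 \to \tanh^2(\gamma) = (1-2\alpha)^2$ almost surely, and the expectation tends to
\[
\bigl(1-(1-2\alpha)^2\bigr)^2 \bigl(1-3(1-2\alpha)^2\bigr).
\]
The first factor is strictly positive for $\alpha \in (0,1/2)$, while the second is negative whenever $(1-2\alpha)^2 > 1/3$, i.e.\ for all $\alpha < \tfrac{1}{2}\bigl(1-\tfrac{1}{\sqrt{3}}\bigr) \approx 0.211$. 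Hence for any such $\alpha$ one has $h''(v) > 0$ for small $v$, so $h$ is strictly convex there and your key inequality fails. This is consistent with the paper's numerical figure and explains why the MMSE-convexity route (your option (a)) cannot go through either: the scalar MMSE in this binary channel with side information is simply not convex in $v$ for small $\alpha$. Your fallback ``treat extreme $v$ separately and interpolate'' does not help, because the obstruction is precisely in an intermediate range of $v$ where $h$ changes from convex to concave, and nothing in your outline controls the number of fixed points there.
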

It is tempting to prove Conjecture \ref{conj:uniquenss} by showing that $h(v)$ is concave in $v$ for all $\alpha \in (0,1/2)$.
However, through
numerical experiments depicted in Fig.~\ref{fig:h_derivative_new}, we find that $h(v)$ is convex around $v=0$ when $\alpha \le 0.1.$

In this work, we assume that there is a noisy label for every vertex in the graph. Previous work \cite{KaMoSc:14} instead assumes
that a fraction of vertices have true labels. 
However, in practice,
it is neither easy to get noisy labels for every vertex or true labels. 
Thus, there arises an interesting question: are local algorithms still optimal  with noisy labels available only for a small fraction of vertices? 

Moreover, we only studied the binary symmetric stochastic block model as a starting point. 
It would be of great interest to study to what extent our results generalize 
to the case with multiple clusters. Finally, the local algorithms are powerless in the symmetric stochastic block model simply 
because the local neighborhoods are statistically uncorrelated with the cluster structure. 
It is intriguing to investigate whether the local algorithms are optimal
when the clusters are of different sizes or connectivities.

\subsection{Paper outline}
The rest of the paper is organized as follows. We introduce the related inference problems on Galton-Watson tree model
in the next section, and show that the estimation accuracy of BP and the optimal estimation accuracy can be related to 
the estimation accuracy on the tree model. Section \ref{sec:uniqueness} shows the optimality of BP in the uniqueness regime $|a-b|<2$.
The proof of the optimality of BP in the high SNR regime $(a-b)^2>C(a+b)$ is presented in Section \ref{sec:highsnr}. 
Section \ref{sec:accurateSI} proves the optimality of BP when the side information is very  accurate. 
Finally, section \ref{sec:density evolution} characterizes the estimation accuracy of BP and the optimal estimation accuracy 
in the large degree regime based on density evolution analysis with Gaussian approximations.

\section{Inference problems on Galton-Watson tree model} \label{sec:tree model}

A key to understanding the inference problem on the graph is understanding the corresponding inference problem
on Galton-Watson trees. We introduce the problem now.
\begin{definition}
For a vertex $u$, we denote by $(T_u, \tau, \tilde{\tau})$ the following Poisson two-type branching process tree rooted at $u$, where $\tau$ is a $
\pm$ labeling of the vertices of $T$.
Let $\tau_u$ is chosen  from $\{\pm\}$ uniformly at random.
Now recursively for each vertex $i$ in $T_u$, given its label $\tau_i$, $i$ will have  $L_i \sim \Pois(a/2)$ children $j$ with
$\tau_j= + \tau_i$ and $M_i \sim \Pois(b/2)$ children $j$ with $\tau_j = -\tau_i$.
Finally for each vertex $i$,  let $\tilde{\tau}_i=\tau_i$ with probability $1-\alpha$
and $\tilde{\tau}_i= -\tau_i$ with probability $\alpha$.
\end{definition}

It follows that the distribution of $\tau$ conditional on $\tilde{\tau}$  and a finite $T_u$ is given by
\begin{align*}
\prob{ \tau | \tilde{\tau}, T_u } \propto \exp \left( \beta \sum_{i \sim j} \tau_i \tau_j + \sum_{i} h_i \tau_i \right),
\end{align*}
where $\beta=\frac{1}{2} \log \frac{a}{b}$, $h_i= \frac{1}{2} \tilde{\tau}_i \log \frac{1-\alpha}{\alpha}$, and $i \sim j$ means that $i$ and $j$ are connected in $T_u$. Observe that
$\prob{ \tau | \tilde{\tau}, T_u } $ is an Ising distribution on tree $T_u$ with external fields given by $h$.

Let $T_i^t$ denote the subtree of $T_u$ rooted at vertex $i$ of depth $t$, and $\partial T_i^t$ denote the set of vertices at the boundary of $T_i^t$.
With a bit abuse of notation, let $\tau_{A}$ denote the vector consisting of labels of vertices in $A$,
where $A$ could be either a set of vertices or a subgraph in $T_u$.
Similarly we define $\tilde{\tau}_{A}$.  We first consider the problem of estimating $\tau_u$ given observation of $ T_u^t,$ $\tau_{\partial T_u^t} ,$ and $\tilde{\tau}_{T_u^t }  $. Notice that the true labels of vertices in $T_u^{t-1}$ are not observed in this case.
\begin{definition}\label{def:treeexact}
The \emph{detection problem with side information in the tree and exact information at the boundary of the tree} is the inference problem
of inferring $\tau_u$ from the observation of $ T_u^t,$ $\tau_{\partial T_u^t} ,$ and $\tilde{\tau}_{T_u^t }  $. The success probability for an estimator
$\hat{\tau}_u(T_u^t, \tau_{\partial T_u^t} , \tilde{\tau}_{T_u^t }  )$ is defined by
\begin{align*}
p_{T^t} (\hat{\tau}_u) = \frac{1}{2} \prob{ \hat{\tau}_u=0 | \tau_u=0} + \frac{1}{2} \prob{ \hat{\tau}_u=1 | \tau_u=1}.
\end{align*}
Let $p_{T^t} ^\ast$ denote the optimal success probability.
\end{definition}
It is well-known that the optimal estimator in maximizing $p_T^t$, is the maximum a posterior (MAP) estimator. Since the prior distribution of $\tau_u$ is uniform over $\{\pm\}$, the MAP estimator
is the same as the maximum likelihood (ML) estimator, which can be expressed in terms of log likelihood ratio:
\begin{align*}
\hat{\tau}_{\rm ML} = 2 \times  \indc{\Lambda_u^t  \ge 0} -1,
\end{align*}
where
\begin{align*}
\Lambda^t_i \triangleq \frac{1}{2} \log \frac{ \prob{ T_i^t, \tau_{\partial T_i^t} , \tilde{\tau}_{T_i^t}   | \tau_i=+ }} { \prob{ T_i^t, \tau_{ \partial T_i^t} , \tilde{\tau}_{T_i^t}   |  \tau_i=-} },
\end{align*}
for all $i$ in $T_u$.
Moreover, the optimal success probability $p_{T^t} ^\ast$  is given by
\begin{align}
p_{T^t} ^\ast= \frac{1}{2} \expect{ |X_u^t| } + \frac{1}{2},  \label{eq:optimalestimationaccuracytree}
\end{align}
where $X_i^t$ is known as \emph{magnetization} given by
\begin{align*}
X_i^t  \triangleq \prob{ \tau_i=+ | T_i^t, \tau_{\partial T_i^t} ,  \tilde{\tau}_{T_i^t}  } - \prob{ \tau_i=- | T_i^t,  \tau_{\partial T_i^t } ,  \tilde{\tau}_{T_i^t}  }
\end{align*}
for all $i$ in $T_u$. In view of the identity $\tanh^{-1} (x) = \frac{1}{2} \log \left( \frac{1+x}{1-x}\right)$ for $x \in [-1,1]$, we have that $\tanh^{-1} (X_i^t) = \Lambda_i^t$.

We then consider the problem of estimating $\tau_u$ given observation of $ T_u^t$ and $\tilde{\tau}_{T_u^t }  $. Notice that in this case the true labels of vertices in $T_u^{t}$ are not observed.
\begin{definition}\label{def:treefree}
The \emph{detection problem with side information in the tree} is the inference problem
of inferring $\tau_u$ from the observation of $ T_u^t$ and $\tilde{\tau}_{T_u^t }  $. The success probability for an estimator
$\hat{\tau}_u (T_u^t, \tilde{\tau}_{T_u^t })$ is defined by
\begin{align*}
q_{T^t} (\hat{\tau}_u) = \frac{1}{2} \prob{ \hat{\tau}_u=0 | \tau_u=0} + \frac{1}{2} \prob{ \hat{\tau}_u=1 | \tau_u=1}.
\end{align*}
Let $q_{T^t}^\ast$ denote the optimal success probability.
\end{definition}
We remark that the only difference between \prettyref{def:treeexact} and \prettyref{def:treefree} is that  the exact labels at the boundary of the tree is revealed to  estimators in the former
and hidden in the latter. The optimal estimator in maximizing $q_{T^t}$ can be also expressed in terms of log likelihood ratio:
\begin{align*}
\hat{\tau}_{\rm ML} = 2 \times  \indc{\Gamma_u^t  \ge 0} -1,
\end{align*}
where
\begin{align*}
\Gamma^t_i \triangleq \frac{1}{2} \log \frac{ \prob{ T_i^t,  \tilde{\tau}_{T_i^t}   | \tau_i=+ }} { \prob{ T_i^t, \tilde{\tau}_{T_i^t}   |  \tau_i=-} },
\end{align*}
for all $i$ in $T_u$.
Moreover, the optimal success probability $q_{T^t} ^\ast$  is given by
\begin{align}
q_{T^t} ^\ast= \frac{1}{2} \expect{ |Y_u^t| } + \frac{1}{2},  \label{eq:optimalestimationaccuracytreefree}
\end{align}
where magnetization $Y_i^t$ is given by
\begin{align*}
Y_i^t  \triangleq \prob{ \tau_i=+ | T_i^t,  \tilde{\tau}_{T_i^t}  } - \prob{ \tau_i=- | T_i^t,  \tilde{\tau}_{T_i^t}  }
\end{align*}
for all $i$ in $T_u$. Again we have that $\tanh^{-1} (Y_i^t) = \Gamma_i^t$.

The log likelihood ratios and magnetizations can be computed via the  belief propagation algorithm.
The following lemma gives  recursive formula to compute $\Lambda_i^t$ ($\Gamma_i^t$) and $X_i^t$ ($Y_i^t$): no approximations are needed.
Notice that the  $\Lambda_i^t$ and $\Gamma_i^t$ satisfy the same recursion but with different initialization; similarly for
$X_i^t$ and $Y_i^t$.
Let $\partial i$ denote the set of children of vertex $i$.
\begin{lemma}\label{lmm:recursion}
 Define $ F(x) = \tanh^{-1} \left( \tanh(\beta) \tanh(x) \right)$. Then for $t\ge 1$,
\begin{align}
\Lambda_{i  } ^t & =  h_{i} + \sum_{\ell \in \partial i  } F( \Lambda_{\ell  }^{t-1} ) \label{eq:Likelihoodrecursion} \\
\Gamma_{i  } ^t & =  h_{i} + \sum_{\ell \in \partial i  } F( \Gamma_{\ell  }^{t-1} ) \label{eq:Likelihoodrecursionfree},
\end{align}
where $\Lambda_i^0 = \infty$ if $\tau_i=+$ and $\Lambda_i^0=-\infty$ if $\tau_i=-$; $\Gamma_i^0=\gamma$ if $\tau_i=+$ and $\Gamma_i^0=-\gamma$ if $\tau_i=-$.
It follows that  for $t \ge 1$.
\begin{align}
X_i^t & =\frac{\eexp^{h_i} \prod_{\ell \in \partial i} (1+ \theta X_{i }^{t-1} ) - \eexp^{-h_i}\prod_{\ell \in \partial i}  (1- \theta X_{i}^{t-1} ) }{\eexp^{h_i} \prod_{\ell \in \partial i} (1+ \theta X_{i }^{t-1} ) + \eexp^{-h_i}\prod_{\ell \in \partial i}  (1- \theta X_{i }^{t-1} ) }
\label{eq:Magentizationrecursion} \\
Y_i^t  &=\frac{\eexp^{h_i} \prod_{\ell \in \partial i} (1+ \theta Y_{i }^{t-1} ) - \eexp^{-h_i}\prod_{\ell \in \partial i}  (1- \theta Y_{i}^{t-1} ) }{\eexp^{h_i} \prod_{\ell \in \partial i} (1+ \theta Y_{i }^{t-1} ) + \eexp^{-h_i}\prod_{\ell \in \partial i}  (1- \theta Y_{i }^{t-1} ) },
\label{eq:Magentizationrecursionfree}
\end{align}
where $X_i^0=1$ if $\tau_i=+$ and $X_i^0=-1$ if $\tau_i=-$; $Y_i^0= 1-2\alpha$ if $\tau_i=+$ and $Y_i^0=2\alpha-1$ if $\tau_i=-$.
\end{lemma}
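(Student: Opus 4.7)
The proof is a textbook derivation of belief propagation on a Galton--Watson tree, relying on the branching Markov structure together with the explicit Ising and noise likelihood ratios. I will carry out the $\Lambda$-recursion in detail; the $\Gamma$-recursion has the same form and differs only at the boundary, and the two magnetization recursions then follow from $X_i^t = \tanh(\Lambda_i^t)$ and $Y_i^t = \tanh(\Gamma_i^t)$ via a short algebraic identity.

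Fix a vertex $i$ with children $\partial i$ in $T_u$. Conditionally on $\tau_i$, the observations attached to the subtree $T_i^t$ split into two pieces: the noisy label $\tilde{\tau}_i$, which depends on $\tau_i$ only through the $\alpha$-noise channel, and the descendants of $i$. By the Poisson branching construction, the tuples $(\tau_\ell, T_\ell^{t-1}, \tau_{\partial T_\ell^{t-1}}, \tilde{\tau}_{T_\ell^{t-1}})$ for $\ell \in \partial i$ are mutually conditionally independent given $\tau_i$. Consequently $2\Lambda_i^t$ decomposes into the noisy-label log-ratio $2h_i = \tilde{\tau}_i \log((1-\alpha)/\alpha)$, obtained directly from the $\alpha$-noise channel, plus a sum of independent per-child contributions.

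For a single child $\ell$, the label $\tau_\ell$ is not directly observed, so the child-wise factor is obtained by marginalizing over $\tau_\ell\in\{\pm\}$. A child of $i$ with label $+\tau_i$ arrives at Poisson rate $a/2$ and with label $-\tau_i$ at rate $b/2$; conditional on $\tau_\ell$ the subtree data below $\ell$ has likelihood ratio $R = e^{2\Lambda_\ell^{t-1}}$ between $\tau_\ell = +$ and $\tau_\ell = -$. Combining these gives
\begin{align*}
\frac{P(\text{data at }\ell \mid \tau_i = +)}{P(\text{data at }\ell \mid \tau_i = -)} = \frac{(a/2)R + (b/2)}{(b/2)R + (a/2)} = \frac{aR+b}{bR+a}.
\end{align*}
Using $\tanh\beta = (a-b)/(a+b)$ and $\tanh\Lambda_\ell^{t-1} = (R-1)/(R+1)$, a few lines of algebra confirm that this ratio equals $e^{2F(\Lambda_\ell^{t-1})}$ for $F(x) = \tanh^{-1}(\tanh\beta \tanh x)$, which produces \eqref{eq:Likelihoodrecursion} after summing over $\ell$. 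The boundary conditions follow by inspection: for $\Lambda_i^0$ the boundary $\partial T_i^0 = \{i\}$ reveals $\tau_i$ exactly, forcing the log-ratio to $\pm\infty$; for $\Gamma_i^0$ only $\tilde{\tau}_i$ is observed, giving $\Gamma_i^0 = h_i = \pm\gamma$.

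Finally, \eqref{eq:Magentizationrecursion} and \eqref{eq:Magentizationrecursionfree} are rewrites of \eqref{eq:Likelihoodrecursion} and \eqref{eq:Likelihoodrecursionfree} under $X_i^t = \tanh(\Lambda_i^t)$ and $Y_i^t = \tanh(\Gamma_i^t)$. Using $e^{2F(x)} = (1+\theta\tanh x)/(1-\theta\tanh x)$ with $\theta = \tanh\beta$ together with $\tanh u = (e^{2u}-1)/(e^{2u}+1)$, expanding $\tanh\bigl(h_i + \sum_\ell F(\Lambda_\ell^{t-1})\bigr)$ and clearing the common factor $e^{-h_i}$ yields the stated ratio. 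No step is conceptually deep; the only care needed is the one-child algebraic simplification from $(aR+b)/(bR+a)$ to $e^{2F(\Lambda_\ell^{t-1})}$, and the correct bookkeeping of the two distinct initializations that distinguish $\Lambda$ from $\Gamma$.
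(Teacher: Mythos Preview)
Your argument is correct and follows essentially the same route as the paper: factor the likelihood at $i$ into the noisy-label contribution $h_i$ and conditionally independent child contributions, marginalize each child over $\tau_\ell$ with weights $a/(a+b)$ and $b/(a+b)$, identify the resulting ratio $(aR+b)/(bR+a)$ with $e^{2F(\Lambda_\ell^{t-1})}$, and then apply $\tanh$ for the magnetization form. The only point the paper makes more explicit is the Poisson splitting step---rewriting the two independent $\Pois(a/2)$ and $\Pois(b/2)$ offspring counts as a single $\Pois((a+b)/2)$ count with i.i.d.\ labels---which is what guarantees that the tree topology $T_i^t$ itself carries no information about $\tau_i$ and hence drops out of the likelihood ratio; you use this implicitly when you pass directly to the per-child ratio, and it would be worth stating.
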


\begin{proof}
By definition, the claims for $\Lambda_i^0 $ ($\Gamma_i^0$) and $X_i^0$ ($Y_i^0$) hold.
We prove the claims for $\Lambda_u^t$ with $t\ge 1$;
the claims for $\Lambda_{i }^t$ for $i \neq u$ and $\Gamma_i^t$ follow similarly.

A key point is
to use the independent splitting property of the Poisson distribution to give an equivalent description of the numbers of children
with each label for any vertex in the tree.     Instead of separately generating the number of children of with each label,  we can first
generate the total number of children and then independently and randomly label each child. Specifically,
for every vertex $i$ in $T_u$, let $N_i$  denote the total number of its children and assume $N_i \sim \Pois(d)$ with $d=(a+b)/2$.
For each child $j\in \partial i,$ independently of everything else,
$\tau_j=\tau_i$ with probability $a/(a+b)$ and $\tau_j=- \tau_i$ with probability $b/(a+b)$.  With this view, the observation
of tree $T_u^t$ itself gives no information on the label of $u$. The side information $\tilde{\tau}_u$
and then the conditionally independent messages from those children provide information.   To be precise, we have that
\begin{align*}
\Lambda_u^t  &= \frac{1}{2} \log \frac{ \prob{ \tilde{\tau}_u | \tau_u =+ } \prod_{i \in \partial u} \prob{ T_i^{t-1}, \tau_{\partial T_i^{t-1}}, \tilde{\tau}_{T_i^{t-1}} | \tau_u=+ }  }{ \prob{ \tilde{\tau}_u | \tau_u =- } \prod_{i \in \partial u} \prob{ T_i^{t-1}, \tau_{\partial T_i^{t-1}} \tilde{\tau}_{T_i^{t-1}} | \tau_u=- } } \\
& =  h_u + \frac{1}{2} \sum_{i \in \partial u} \log \frac{  \sum_{x \in \{\pm \} } \prob{T_i^{t-1},  \tau_{\partial T_i^{t-1}}, \tilde{\tau}_{T_i^{t-1}} , \tau_i =x | \tau_u=+ }  } {   \sum_{x \in \{\pm \} }  \prob{ T_i^{t-1}, \tau_{\partial T_i^{t-1}} \tilde{\tau}_{T_i^{t-1}}, \tau_i = x | \tau_u=- } } \\
& =  h_u + \frac{1}{2} \sum_{i \in \partial u} \log \frac{  \sum_{x \in \{\pm \} } \prob{  \tau_i =x | \tau_u=+ } \prob{T_i^{t-1}, \tau_{\partial T_i^{t-1}}, \tilde{\tau}_{T_i^{t-1}} | \tau_i=x}   } {   \sum_{x \in \{\pm \} }  \prob{ \tau_i = x | \tau_u=- } \prob{ T_i^{t-1},  \tau_{\partial T_i^{t-1}} \tilde{\tau}_{T_i^{t-1}} | \tau_i =x } }  \\
& =  h_u + \frac{1}{2} \sum_{i \in \partial u} \log \frac{  a \prob{T_i^{t-1}, \tau_{\partial T_i^{t-1}}, \tilde{\tau}_{T_i^{t-1}} | \tau_i=+ }  +  b \prob{T_i^{t-1}, \tau_{\partial T_i^{t-1}}, \tilde{\tau}_{T_i^{t-1}} | \tau_i= - }  } {  b \prob{T_i^{t-1}, \tau_{\partial T_i^{t-1}}, \tilde{\tau}_{T_i^{t-1}} | \tau_i=+ }  +  a\prob{T_i^{t-1}, \tau_{\partial T_i^{t-1}}, \tilde{\tau}_{T_i^{t-1}} | \tau_i= - }  }  \\
& = h_u + \frac{1}{2} \sum_{i \in \partial u} \log \frac{  {\eexp^{2\beta+ 2 \Lambda_{i } ^{t-1} }  } + 1 } {  \eexp^{2 \Lambda_{i } ^{t-1} }  + \eexp^{2\beta}  } ,
\end{align*}
where the first equality holds because $T_u^t$ is independent of $\tau_u$, and conditional on $\tau_u$, $\tilde{\tau}_u$ and $( \tau_{\partial T_i^{t-1}}, \tilde{\tau}_{T_i^{t-1}} )$ for all $i \in \partial u$ are independent;
the second equality holds due to the fact that $\tilde{\tau_u}=\tau_u$ with probability $1-\alpha$ and $\tilde{\tau_u}=- \tau_u$ with probability $\alpha$;
the third equality follows because conditional on $\tau_i$, $\tau_u$ is independent of  $( \tau_{\partial T_i^{t-1}}, \tilde{\tau}_{T_i^{t-1}} )$; the fourth equality holds because
$\tau_i=\tau_u$ with probability $a/(a+b)$ and $\tau_i=-\tau_u$ with probability $b/(a+b)$; the last equality follows from the definition of $\beta$ and $\Lambda_i^t$.
By the definition of $\tanh(x)$ and the fact that $\tanh^{-1} (x) = \frac{1}{2} \log \left( \frac{1+x}{1-x}\right)$ for $x \in [-1,1]$, it follows that
\begin{align*}
\Lambda_u^t &=  h_u + \frac{1}{2} \sum_{i \in \partial u} \log \frac{ 1 + \tanh(\beta) \tanh(\Lambda_i^{t-1} ) } {  1 - \tanh(\beta) \tanh(\Lambda_i^{t-1} )  } \\
& = h_{u} + \sum_{ i \in \partial u}  \tanh^{-1} \left( \tanh(\beta)  \tanh(\Lambda_i^{t-1} ) \right).
\end{align*}
Finally, notice that  $X_u^t= \tanh( \Lambda_u^t)$ and thus
\begin{align*}
X_u^t &= \tanh \left(  h_{u} + \sum_{ i \in \partial u}  \tanh^{-1} \left( \tanh(\beta) X_i^{t-1} \right) \right) \\
&=\tanh \left(  h_u+ \frac{1}{2} \sum_{i \in \partial u} \log \frac{ 1 + \tanh(\beta) X_i^{t-1}  } {  1 - \tanh(\beta) X_i^{t-1} } \right) \\
& = \frac{\eexp^{h_u} \prod_{i \in \partial u} (1+ \theta X_{i }^{t-1} ) - \eexp^{-h_u}\prod_{i \in \partial u}  (1- \theta X_{i}^{t-1} ) }{\eexp^{h_u} \prod_{i \in \partial u} (1+ \theta X_{i }^{t-1} ) + \eexp^{-h_u}\prod_{i \in \partial u}  (1- \theta X_{i }^{t-1} ) }.
\end{align*}
\end{proof}
As a corollary of \prettyref{lmm:recursion}, $\Lambda_u$ is monotone with respect to the boundary conditions.
For any vertex $i$ in $T_u$, let
\begin{align*}
\Lambda^t_i (\xi) \triangleq \frac{1}{2} \log \frac{ \prob{ T_i^t, \tau_{\partial T_i^t} =\xi, \tilde{\tau}_{T_i^t}   | \tau_i=+ }} { \prob{ T_i^t, \tau_{ \partial T_i^t} =\xi , \tilde{\tau}_{T_i^t}   |  \tau_i=-} },
\end{align*}
where $\xi \in \{ \pm\}^{|\partial T_i^t |}$.
\begin{corollary}\label{cor:monotone}
Fix any vertex $i$ in $T_u$ and $t \ge 1$.
If the boundary conditions $\xi$ and $\hat{\xi }$ are such that
$\hat{\xi}_\ell \ge \xi_\ell$ for all $\ell \in \partial T_i^t$, then $\Lambda^t_i ( \hat{\xi} ) \ge \Lambda^t_i(\xi )$ for $a \ge b$
and $\Lambda^t_i ( \hat{\xi} ) \le \Lambda^t_i(\xi )$ otherwise.
In particular, $\Lambda^t_i(\xi)$ is maximized for $a \ge b$ and minimized for $a \le b$,
when $\xi_\ell= +$ for all $\ell \in \partial T_i^t$.
\end{corollary}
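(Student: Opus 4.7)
The plan is to proceed by induction on $t$ using the recursion $\Lambda_i^t(\xi) = h_i + \sum_{\ell \in \partial i} F(\Lambda_\ell^{t-1}(\xi|_{\partial T_\ell^{t-1}}))$ from \prettyref{lmm:recursion}, where $\xi|_{\partial T_\ell^{t-1}}$ denotes the restriction of the boundary labeling $\xi$ to the leaves of the subtree rooted at child $\ell$ of $i$. The only analytic ingredient needed is the monotonicity of the transfer function $F(x) = \tanh^{-1}(\tanh(\beta)\tanh(x))$. Differentiating yields
\[
F'(x) \;=\; \frac{\tanh(\beta)\,\mathrm{sech}^2(x)}{1 - \tanh^2(\beta)\tanh^2(x)},
\]
whose sign equals $\sgn(\beta)$. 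Since $\beta = \tfrac{1}{2}\log(a/b)$, the map $F$ is non-decreasing when $a \ge b$ and non-increasing when $a \le b$.

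For the base case $t=1$, the boundary $\partial T_i^1$ coincides with the set of children $\partial i$, and the initialization $\Lambda_\ell^0 = \pm\infty$ yields $F(\Lambda_\ell^0(\xi_\ell)) = \beta\,\sgn(\xi_\ell)$. Hence $\Lambda_i^1(\xi) = h_i + \beta \sum_{\ell \in \partial i} \sgn(\xi_\ell)$. Flipping any coordinate $\xi_\ell = -$ up to $\hat{\xi}_\ell = +$ changes $\Lambda_i^1$ by exactly $+2\beta$, which is non-negative for $a \ge b$ and non-positive for $a \le b$, so the claim follows by performing the flips from $\xi$ to $\hat{\xi}$ one at a time.

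For the inductive step, observe that the pointwise ordering $\hat{\xi} \ge \xi$ on $\partial T_i^t$ restricts to the same ordering on each subtree boundary $\partial T_\ell^{t-1}$, $\ell \in \partial i$. In the regime $a \ge b$, the inductive hypothesis applied to each child gives $\Lambda_\ell^{t-1}(\hat{\xi}|_{\partial T_\ell^{t-1}}) \ge \Lambda_\ell^{t-1}(\xi|_{\partial T_\ell^{t-1}})$; composing with the non-decreasing $F$ preserves this termwise, and summing over $\ell \in \partial i$ (and adding $h_i$) yields $\Lambda_i^t(\hat{\xi}) \ge \Lambda_i^t(\xi)$. The case $a \le b$ is handled by a parallel induction. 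The ``in particular'' clause is then immediate: take $\hat{\xi}$ to be the all-$+$ boundary and $\xi$ an arbitrary boundary condition.

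The argument is essentially mechanical once the sign of $F'$ is pinned down. The main (and really only) subtlety is careful sign-tracking when $\beta < 0$, since the base-case direction, the monotonicity of $F$, and the direction of the inductive inequality must be kept mutually consistent; once this is done correctly, the induction closes in a single step per level.
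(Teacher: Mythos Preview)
Your argument is essentially the paper's own: induction on $t$, with the sign of
\[
F'(x)=\frac{\tanh(\beta)\,(1-\tanh^2 x)}{1-\tanh^2\beta\,\tanh^2 x}
\]
determining the monotonicity of $F$, the base case handled via $F(\pm\infty)=\pm\beta$, and the inductive step pushing the inequality through the recursion termwise over the children.

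One caveat worth flagging (which the paper's proof also glosses over): the ``parallel induction'' for $a<b$ does not actually close for the inequality as stated. When $\beta<0$ the map $F$ is non-increasing, so applying $F$ to the reversed inequality coming from the induction hypothesis reverses it once more; concretely, from $\Lambda_\ell^{t-1}(\hat\xi)\le\Lambda_\ell^{t-1}(\xi)$ one obtains $F(\Lambda_\ell^{t-1}(\hat\xi))\ge F(\Lambda_\ell^{t-1}(\xi))$ and hence $\Lambda_i^{t}(\hat\xi)\ge\Lambda_i^{t}(\xi)$, the wrong direction. The correct statement for $a<b$ is that $\xi\mapsto\Lambda_i^t(\xi)$ is still monotone, but its direction alternates with the parity of $t$. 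This does not affect the downstream applications in the paper, which only use that $\Lambda_i^t$ is monotone in \emph{some} direction, so that $\Gamma_i^t$ and $\Lambda_i^t$ are sandwiched between $\Lambda_i^t(+)$ and $\Lambda_i^t(-)$.
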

\begin{proof}
Recall that $ F(x) = \tanh^{-1} \left( \tanh(\beta) \tanh(x) \right)$. Then $F( \pm \infty)= \pm \beta$.
We prove the corollary by induction.
Suppose that $a \ge b$ and thus $\beta \ge 0$. We first check the base case $t=1$.
By definition, $\Lambda^0_i ( + ) = \infty$
and $\Lambda^0_i (-) = -\infty$. In view of \prettyref{eq:Likelihoodrecursion},
\begin{align*}
\Lambda_i^{1}(\hat{\xi}) - \Lambda_i^{1} (\xi) = \sum_{\ell \in \partial i} \left[ F \left(\Lambda_\ell^{0} (\hat{\xi}_{\ell} ) \right) - F\left(\Lambda_\ell^{0} ( \xi_{\ell} ) \right)  \right] = 2 \beta \sum_{\ell \in \partial i} \left( \indc{ \hat{\xi}_{\ell} > \xi_{\ell} } - \indc{ \hat{\xi}_{\ell} < \xi_{\ell} } \right) \ge 0.
\end{align*}
Suppose the claim holds for $t \ge 1$; we need to prove that the claim holds for $t+1$. In particular, in view of \prettyref{eq:Likelihoodrecursion},
\begin{align*}
\Lambda_i^{t+1}(\hat{\xi}) - \Lambda_i^{t+1} (\xi) = \sum_{\ell \in \partial i} \left[ F \left(\Lambda_\ell^{t} (\hat{\xi}_{\partial T_\ell^t}  ) \right) - F\left(\Lambda_\ell^{t} ( \xi_{\partial T_\ell^t} ) \right)  \right].
\end{align*}
Notice that
\begin{align*}
F'(x) = \frac{\tanh(\beta) (1-\tanh^2(x) ) }{1-\tanh^2(\beta)\tanh^2(x)}.
\end{align*}
Since $\beta \ge 0$ and thus $0 \le\tanh(\beta)<1$, it follows that $0 \le F'(x) \le \tanh(\beta)$. By the induction hypothesis, $\Lambda_\ell^{t} (\hat{\xi} ) \ge \Lambda_\ell^{t} ( \xi)$.
Hence, $\Lambda_i^{t+1}(\hat{\xi}) \ge \Lambda_i^{t+1}(\xi) $ and the corollary follows in case $a \ge b$. The proof for $a<b$ is the same except that
$\beta <0$; thus  $\Lambda_i^{1}(\hat{\xi}) \le \Lambda_i^{1} (\xi)$ and $ -\tanh (\beta) \le F'(x) \le 0$.
\end{proof}


\subsection{Connection between the graph problem and tree problems}

For the detection problem on graph, recall that $p_{G_n} (\hat{\sigma}_{\rm BP}^t ) $ denote the estimation accuracy of $\hat{\sigma}_{\rm BP}^t$ as per \prettyref{eq:estimationaccuracy};
$p_{G_n}^\ast$ is the optimal estimation accuracy.
For the detection problems on tree, recall that $p_{T^t}^\ast$ is the optimal estimation accuracy of estimating $\tau_u$ based on $T_u^t$, $\tilde{\tau}_{T_u^t}$, and
$\tau_{\partial T_u^t}$ as per \prettyref{eq:optimalestimationaccuracytree}; $q_{T^t}^\ast$ is the optimal estimation accuracy of estimating $\tau_u$ based on $T_u^t$ and $\tilde{\tau}_{T_u^t}$
as per \prettyref{eq:optimalestimationaccuracytreefree}.
In this section, we show that $  p_{G_n} (\hat{\sigma}_{\rm BP}^t ) $ equals to $q_{T^t}^\ast$ asymptotically, 
and $p_{G_n}^\ast$ is bounded by $ p_{T^t}^\ast$ from above for any $t\ge 1.$
Notice that the dependency of $q_{T^t}^\ast$ and $p_{T^t}^\ast$ on $n$ is only through the dependency of $a$ and $b$ on $n$.
Hence, if $a$ and $b$ are fixed constants, then both $q_{T^t}^\ast$ and $p_{T^t}^\ast$ do not depend on $n$.

A key ingredient  is to show $G$ is locally tree-like in the regime $a=n^{o(1)}$.  
Let $G_u^t$ denote the subgraph of $G$ induced by vertices whose distance to  $u$ is at most $t$ and let $\partial G_u^t$ denote the
set of vertices whose distance  from $u$ is precisely $t$. With a bit abuse of notation, let $\sigma_{A}$ denote the vector consisting of labels of vertices in $A$,
where $A$ could be either a set of vertices or a subgraph in $G$. Similarly we define $\tilde{\sigma}_A$.
The following lemma proved in \cite{MONESL:15} shows that we can construct a coupling such that
$(G^t, \sigma_{G^t}, \tilde{\sigma}_{G^t} ) = (T^t, \tau_{T^t}, \tilde{\tau}_{T^t} )$ with probability converging to $1$ when $a^{t} =n^{o(1)}.$
\begin{lemma}\label{lmm:couplingtree}
For $t =t(n)$ such that $a^{t} =n^{o(1)}$, 
there exists a coupling between $(G, \sigma, \tilde{\sigma} )$ and $(T, \tau, \tilde{\tau} )$
such that $(G^t, \sigma_{G^t}, \tilde{\sigma}_{G^t} ) = (T^t, \tau_{T^t}, \tilde{\tau}_{T^t} )$ with probability converging to $1$.
\end{lemma}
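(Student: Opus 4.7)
The plan is to construct the coupling by performing simultaneous breadth-first explorations of the graph neighborhood of a uniformly random (or fixed) vertex $u$ in $G$ and of the Galton-Watson tree $T_u$, level by level. At each step, I would first couple the labels so that $\sigma_u = \tau_u$, and then couple the number and labels of the children. On the tree side, conditional on $\tau_i$, vertex $i$ has $L_i \sim \Pois(a/2)$ same-sign children and $M_i \sim \Pois(b/2)$ opposite-sign children, independently. On the graph side, if $U_t$ is the set of already-explored vertices and $i \in \partial G_u^{t-1}$ with label $\sigma_i$, the number of unexplored neighbors of $i$ with label $+\sigma_i$ is $\Binom(n/2 - |U_t^+|, a/n)$ and with label $-\sigma_i$ is $\Binom(n/2 - |U_t^-|, b/n)$, independently, where $U_t^{\pm}$ are the numbers of already-explored vertices of each label. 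I would use the standard coupling of $\Binom(m,p)$ to $\Pois(mp)$ (via thinning of a Poisson) together with Le Cam's inequality to couple these counts to their Poisson analogs. Finally, the side information $\tilde\sigma_i$ is generated from $\sigma_i$ by an independent BSC$(\alpha)$ flip both on the graph and on the tree, so it couples exactly once $\sigma_{G_u^t} = \tau_{T_u^t}$.

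The coupling can fail in two ways at level $s \le t$. \emph{Collision failures}: a new neighbor drawn on the graph side coincides with a previously explored vertex. Let $N_s = |V(G_u^s)|$. Given the tree grows with mean offspring $d = (a+b)/2$, a straightforward induction yields $\Expect[N_s] \le (d+1)^s$, and by Markov $N_s \le a^s \log n$ with probability $1-o(1)$ under the assumption $a^t = n^{o(1)}$. Conditionally on no prior collisions, the probability that the next drawn neighbor lies in $U_s$ is at most $N_s/(n/2 - N_s)$, and summing over all edges drawn up to level $t$ (whose total count is also $O(a^t)$ w.h.p.) gives a collision probability bounded by $O(a^{2t}/n) = n^{-1+o(1)} = o(1)$. \emph{Distributional failures}: the Binomial-Poisson coupling fails. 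Using $\dTV(\Binom(m,p),\Pois(mp)) \le mp^2$, the discrepancy at a single vertex is $O(a^2/n + b^2/n) = O(a^2/n)$; summing over the at most $O(a^t)$ vertices explored, the total failure probability is $O(a^{t+2}/n) = o(1)$. Also, the $n/2 - |U_t^{\pm}|$ versus $n/2$ correction introduces an additional term of the same order, absorbed in the bound.

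Putting these pieces together: on the complementary event, the BFS on the graph produces a tree isomorphic to $T_u^t$, with matching labels $\sigma_{G_u^t} = \tau_{T_u^t}$; then coupling $\tilde\sigma = \tilde\tau$ by using the same $\Bern(\alpha)$ flip bits vertex-by-vertex finishes the coupling. Taking a union bound over all failure modes yields $\Prob\{(G_u^t,\sigma_{G_u^t},\tilde\sigma_{G_u^t}) = (T_u^t,\tau_{T_u^t},\tilde\tau_{T_u^t})\} = 1 - o(1)$.

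The main technical obstacle is bookkeeping the dependence between levels: the Binomial parameters at level $s$ depend on the numbers of already-explored $\pm$-labeled vertices, so the one-step Poisson coupling must be applied \emph{conditionally} on the history. The standard way to handle this is to maintain, as an invariant of the exploration, that as long as no collision has occurred and all one-step couplings have succeeded, the conditional law of the next level on the graph side is exactly a size-biased/truncated version of the tree's law, which the Le Cam bound controls uniformly. Once this invariant is set up, the union bound above goes through cleanly, and since this lemma is essentially the one already proved in \cite{MONESL:15} with only the trivial addition of the i.i.d.\ noisy-label coordinate, citing that result (augmented with the independent noise coupling) suffices.
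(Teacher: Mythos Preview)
Your proposal is correct and matches the approach of the cited reference: note that the paper itself does not give a proof of this lemma but simply attributes it to \cite{MONESL:15}, and your sketch (BFS exploration, Binomial--Poisson coupling via Le Cam, collision bound of order $a^{2t}/n$, followed by the trivial coupling of the i.i.d.\ BSC$(\alpha)$ noise) is exactly the argument carried out there, with the noisy-label coordinate being the only new and straightforward addition you correctly isolate. One small imprecision: the counts of same/opposite-label unexplored vertices are $\Binom(n_{\pm}-|U_t^{\pm}|,\,\cdot\,)$ with $n_{\pm}\sim\Binom(n,1/2)$ rather than exactly $n/2$, but since $|n_{\pm}-n/2|=O(\sqrt{n}\log n)$ with high probability this is absorbed into the same $O(a^{t+2}/n)$ error term and does not affect the argument.
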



In the following, for ease of notation, we write  $T_u^t$ as $T^t$  and $G_u^t$ as $G^t$ when there is no ambiguity. 
Suppose that $(G^t, \sigma_{G^t}, \tilde{\sigma}_{G^t} ) = (T^t, \tau_{T^t}, \tilde{\tau}_{T^t} )$, then by comparing 
BP iterations \prettyref{eq:mp_commun} and \prettyref{eq:mp_combine_commun} with the recursions of log likelihood ratio $\Gamma^t$ \prettyref{eq:Likelihoodrecursionfree},
we find that $R_u^t$ exactly equals to $\Gamma_u^t$. In other words, when local neighborhood of $u$ is a tree, 
the BP algorithm defined in \prettyref{alg:MP_commun} exactly computes the log likelihood ratio $\Gamma_u^t$ for the tree model.
Building upon this intuition, the following lemma shows that $  p_{G_n} (\hat{\sigma}_{\rm BP}^t ) $ equals to $q_{T^t}^\ast$ asymptotically.

\begin{lemma}\label{lmm:optBPcondition}
For $t =t(n)$ such that $a^{t} =n^{o(1)}$,
\begin{align*}
\lim_{n \to \infty} | p_{G_n} (\hat{\sigma}_{\rm BP} ) - q_{T^t}^\ast | =0.
\end{align*}
\end{lemma}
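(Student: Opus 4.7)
The plan is to combine the coupling between the graph's local neighborhood and the Galton-Watson tree (Lemma~\ref{lmm:couplingtree}) with the observation already noted in the text, namely that when the local neighborhood of $u$ is a tree, the BP iterations in \prettyref{eq:mp_commun}--\prettyref{eq:mp_combine_commun} compute exactly the tree log-likelihood ratio $\Gamma_u^t$ given by \prettyref{eq:Likelihoodrecursionfree}. By vertex-exchangeability of the block model, it suffices to prove, for a fixed vertex $u$, that $\prob{\sigma_u = \hat{\sigma}_{\rm BP}^t(u)}$ equals $q_{T^t}^\ast$ up to $o(1)$, since $p_{G_n}(\hat{\sigma}_{\rm BP}^t)$ is the average of these probabilities over $u$ and $q_{T^t}^\ast$ does not depend on $u$.

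First I would invoke \prettyref{lmm:couplingtree} to obtain, under the hypothesis $a^t = n^{o(1)}$, a coupling under which the event $\calE = \{(G_u^t,\sigma_{G_u^t},\tilde{\sigma}_{G_u^t}) = (T_u^t,\tau_{T_u^t},\tilde{\tau}_{T_u^t})\}$ has probability $1-o(1)$. On $\calE$ two things happen simultaneously: (i) the output $\hat{\sigma}_{\rm BP}^t(u)$, which by Definition~\ref{def:localalgorithm} is a function only of $(G_u^t,\tilde{\sigma}_{G_u^t})$, coincides with the tree estimator $2\cdot\indc{\Gamma_u^t \ge 0}-1$ since the BP recursion run on a tree with the initialization prescribed in \prettyref{alg:MP_commun} produces $R_u^t = \Gamma_u^t$ by the derivation of \prettyref{lmm:recursion}; and (ii) the true label $\sigma_u$ equals the tree root label $\tau_u$. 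Consequently,
\[
\prob{\sigma_u = \hat{\sigma}_{\rm BP}^t(u),\, \calE} = \prob{\tau_u = \hat{\tau}_{\rm ML}(u),\, \calE},
\]
where $\hat{\tau}_{\rm ML}$ is the optimal estimator for the tree-free problem in \prettyref{def:treefree}.

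To finish, I would sandwich both unconditional probabilities between their conditional-on-$\calE$ versions and their versions plus $\prob{\calE^c}$. Since $\prob{\calE^c} = o(1)$, the two unconditional probabilities differ by $o(1)$; the latter is exactly $q_{T^t}^\ast$ by \prettyref{eq:optimalestimationaccuracytreefree} (which does not depend on $n$ once we condition on the coupled tree, and whose only $n$-dependence comes through $a$ and $b$, not through $n$ itself). Averaging over $u$ and noting the average on the graph side is $p_{G_n}(\hat{\sigma}_{\rm BP}^t)$ yields the claim.

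The main obstacle is purely bookkeeping rather than conceptual: one has to check that the BP messages on the graph and on the tree coincide not only at the root but on every message used during the $t$ iterations, which is automatic once the $t$-neighborhoods are identified by the coupling. Nothing else is needed because the locality of $\hat{\sigma}_{\rm BP}^t$ and the fact that the coupling matches the side information $\tilde{\sigma}$ to $\tilde{\tau}$ (not just the graph structure and true labels) ensure that the map from the coupled neighborhood to the BP output is identical in the two models.
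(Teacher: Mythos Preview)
Your proposal is correct and follows essentially the same approach as the paper's proof: invoke the coupling of \prettyref{lmm:couplingtree}, observe that on the coupling event $R_u^t=\Gamma_u^t$ (so the BP decision equals the tree MAP decision and simultaneously $\sigma_u=\tau_u$), and absorb the $o(1)$ coupling error. Your write-up is in fact more careful than the paper's in spelling out the symmetry over $u$ and the sandwiching, but there is no methodological difference.
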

\begin{proof}
In view of \prettyref{lmm:couplingtree}, we can construct a coupling such that
$(G^t, \sigma_{G^t}, \tilde{\sigma}_{G^t} ) = (T^t, \tau_{T^t}, \tilde{\tau}_{T^t} )$ with probability converging to $1$.
On the event $(G^t, \sigma_{G^t}, \tilde{\sigma}_{G^t} ) = (T^t, \tau_{T^t}, \tilde{\tau}_{T^t} )$, 
we have that  $R_u^t=\Gamma_u^t$. Hence,
\begin{align}
p_{G_n} (\hat{\sigma}_{\rm BP} )  = q_{T^t}^\ast +o(1),  \label{eq:BPaccuracy}
\end{align}
where $o(1)$ term comes from the coupling error.
\end{proof}

We are going to  show that as $n \to \infty$, $ p_{G_n}^\ast $  is bounded by $ p_{T^t}^\ast$ from above for any $t\ge 1.$
Before that, we need a key lemma which shows that conditional on $(G^t,  \tilde{\sigma}_{G^t}, \sigma_{\partial G^t })$, $\sigma_u$ is almost independent of the graph structure and noisy labels outside of $G^t$.

\begin{lemma}\label{lmm:asymptoticalIndependence}
For $t =t(n)$ such that $a^{t} =n^{o(1)}$,  there exists a sequence of events $\calE_n$ such that $\prob{\calE_n} \to 1$ as $n \to \infty$, and 
on event $\calE_n$,
\begin{align}
\prob{ \sigma_ u  =x | G^t,  \tilde{\sigma}_{G^t}, \sigma_{\partial G^t } } = (1+o(1))  \prob{ \sigma_ u =x | G, \tilde{\sigma}, \sigma_{\partial G^t} }, \quad \forall x \in\{\pm\}.
\label{eq:asymInd}
\end{align}
Moreover, on event $\calE_n$,  $(G^t, \sigma_{G^t}, \tilde{\sigma}_{G^t} ) = (T^t, \tau_{T^t}, \tilde{\tau}_{T^t} )$ holds. 
\end{lemma}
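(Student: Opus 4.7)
My plan is to construct a high-probability event $\calE_n$ and then prove the approximate independence via a Bayes likelihood-ratio computation.

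First, take $\calE_n$ as the intersection of (i) the coupling event from \prettyref{lmm:couplingtree}, on which $(G^t,\sigma_{G^t},\tilde{\sigma}_{G^t})=(T^t,\tau_{T^t},\tilde{\tau}_{T^t})$; (ii) the size bound $|V(G^t)| \le n^{o(1)}$, from tail estimates on the Poisson two-type branching process; and (iii) the global balance $\bigl|\sum_{j \notin V(G^t)}\sigma_j\bigr| \le \sqrt{n \log n}$, by Hoeffding for the i.i.d.\ uniform prior on $\sigma$. Each occurs with probability $1-o(1)$, so $\prob{\calE_n}\to 1$, and the second assertion of the lemma is immediate from (i).

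For \prettyref{eq:asymInd}, apply Bayes' rule under the uniform prior on $\sigma$ to reduce to proving
\[
\frac{\prob{G,\tilde{\sigma}\mid \sigma_u=+,\sigma_{\partial G^t}}}{\prob{G,\tilde{\sigma}\mid \sigma_u=-,\sigma_{\partial G^t}}} = (1+o(1))\cdot\frac{\prob{G^t,\tilde{\sigma}_{G^t}\mid\sigma_u=+,\sigma_{\partial G^t}}}{\prob{G^t,\tilde{\sigma}_{G^t}\mid\sigma_u=-,\sigma_{\partial G^t}}}.
\]
I would then factor $\prob{G,\tilde{\sigma}\mid\sigma} = \prob{G^t,\tilde{\sigma}_{V(G^t)}\mid\sigma_{V(G^t)}}\cdot\prob{\mathrm{out}\mid\sigma}$, where ``$\mathrm{out}$'' collects the non-edges between $V_{\rm int}:=V(G^t)\setminus\partial G^t$ and $V_{\rm out}:=V\setminus V(G^t)$, the observed edges/non-edges between $\partial G^t$ and $V_{\rm out}$, the induced subgraph on $V_{\rm out}$, and $\tilde{\sigma}_{V_{\rm out}}$. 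The $\sigma_{V_{\rm int}}$-dependence of $\prob{\mathrm{out}\mid\sigma}$ enters only through the non-edges $V_{\rm int}\to V_{\rm out}$, and using $\indc{\sigma_i=\sigma_j}=(1+\sigma_i\sigma_j)/2$ a direct computation collapses this factor to
\[
C\cdot r^{M_+(V_{\rm int})M_+(V_{\rm out})/2},\qquad r:=\frac{1-a/n}{1-b/n}=1-O(1/n),\qquad M_+(A):=\sum_{i\in A}\sigma_i,
\]
with $C$ independent of $\sigma_{V_{\rm int}}$ and $\sigma_{V_{\rm out}}$. Marginalizing over the unobserved labels, the needed excess ratio boils down to $\Expect_\Xi[r^{M_+(V_{\rm out})}]$ for a posterior $\Xi$ on $\sigma'_{V_{\rm out}}$ induced by the outside data, up to a second-order correction from the interior integration that is easily controlled since $|V_{\rm int}|=n^{o(1)}$ on $\calE_n$.

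The crux is to show $\Expect_\Xi[r^{M_+(V_{\rm out})}]=1+o(1)$ on $\calE_n$. Expanding $r^M=1+M\log r+O(M^2(\log r)^2)$ with $\log r=-(a-b)/n\cdot(1+o(1))$, it suffices to establish $\bigl|\Expect_\Xi M_+(V_{\rm out})\bigr|=o(n/a)$ and $\Expect_\Xi M_+(V_{\rm out})^2=o(n^2/a^2)$; since $a=n^{o(1)}$ both thresholds leave ample slack. I would obtain these from the law of total variance applied under the true model: $\Expect[\var_\Xi(M_+(V_{\rm out}))]\le\var(M_+(V_{\rm out}))=O(n)$ and $\Expect[(\Expect_\Xi M_+(V_{\rm out}))^2]\le\Expect[M_+(V_{\rm out})^2]=O(n)$, so Markov's inequality forces both quantities to be $O(n\log n)$ on a further subevent of probability $1-o(1)$, absorbed into $\calE_n$. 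The main obstacle is precisely this posterior-moment control: the distribution $\Xi$ is implicit, so one must rely on generic variance inequalities rather than an explicit description. The underlying heuristic is that in the sparse regime $a=n^{o(1)}$ each vertex carries only $O(1)$ bits of label evidence, so the $\sigma_j$'s for $j\in V_{\rm out}$ are nearly independent under $\Xi$, keeping its variance comparable to the prior's.
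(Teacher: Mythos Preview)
Your event $\calE_n$ is essentially the paper's, and your identification of the cross--non-edge factor as $C\cdot r^{s_A s_C/2}$ with $s_A=M_+(V_{\rm int})$, $s_C=M_+(V_{\rm out})$ is correct. The problem is the final step: reducing to $\Expect_\Xi[r^{M_+(V_{\rm out})}]$ and controlling it via second moments of $M_+$ under the posterior $\Xi$ does not close. The Taylor bound $r^M=1+M\log r+O(M^2(\log r)^2)$ is only valid when $|M\log r|$ is bounded, but under $\Xi$ the variable $M_+$ ranges in $[-n,n]$ and $|\log r|=\Theta((a-b)/n)$, so $|M_+\log r|$ can be as large as $a-b$. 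Since the lemma allows $a=n^{o(1)}$, you must handle e.g.\ $a-b\asymp\log^2 n$, where $r^{M_+}$ can be of size $e^{a-b}=e^{\log^2 n}$. A Chebyshev tail from $\Expect_\Xi[M_+^2]=O(n\log n)$ only gives $\Prob_\Xi(|M_+|>K)=O(n\log n/K^2)$, which is far too slow to kill an $e^{a-b}$ prefactor; no choice of truncation level $K$ makes both the main term and the tail term $o(1)$. So the posterior-moment route genuinely breaks once $a-b$ exceeds roughly $\log n$.

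The irony is that you already have the right tool in your pocket and do not need $\Xi$ at all. Your condition (iii) bounds the \emph{true} $|s_C|\le\sqrt{n\log n}$, and (ii) gives $|s_A|\le|V(G^t)|=n^{o(1)}$. Hence on $\calE_n$ the cross term satisfies, deterministically,
\[
\bigl|\log r^{\,s_A s_C/2}\bigr|\;\le\;\tfrac12\,|s_A|\,|s_C|\,|\log r|\;=\;n^{o(1)}\cdot\sqrt{n\log n}\cdot O(a/n)\;=\;n^{-1/2+o(1)}\;=\;o(1),
\]
so $\Phi_{A,C}=(1+o(1))K(|A|,|C|)$ with $K$ independent of $\sigma_A,\sigma_C$. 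This is exactly the paper's move: instead of averaging $r^{s_A s_C/2}$ over a posterior on $\sigma_C$, first \emph{condition} on the true $\sigma_C$ (which lies in $\calE_n$), observe that the cross term is then a harmless $(1+o(1))$ constant, deduce
\[
\prob{\sigma_u=x\mid G^t,\tilde\sigma_{G^t},\sigma_{\partial G^t}}=(1+o(1))\,\prob{\sigma_u=x\mid G^t,\tilde\sigma_{G^t},\sigma_{\partial G^t},\sigma_C},
\]
and only then average over $\sigma_C$ against $\prob{\sigma_C\mid G,\tilde\sigma,\sigma_{\partial G^t}}$, using the exact Markov identity $\prob{\sigma_u=x\mid G,\tilde\sigma,\sigma_{\partial G^t},\sigma_C}=\prob{\sigma_u=x\mid G^t,\tilde\sigma_{G^t},\sigma_{\partial G^t},\sigma_C}$. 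The averaging step is now trivial because the integrand no longer depends on $\sigma_C$ up to $1+o(1)$. Drop the law-of-total-variance detour and use (iii) directly in this way.
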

\begin{proof}
Recall that $G^t$ is the subgraph of $G$ induced by vertices whose distance from $u$ is at most $t$.
Let $A^t$ denote the set of vertices in $G^{t-1}$, $B^t$ denote the set of vertices in $G^{t}$,
and $C^t$ denote the set of vertices in $G$ but not in $G^t$. Then $A^t \cup \partial G^t=B^t$ and $A^t \cup \partial G^t \cup C^t = V.$
Define $s_{A} =\sum_{i \in A^t } \sigma_i$ and
 $s_{C} = \sum_{i \in C^t} \sigma_i$. 
Let 
\begin{align*}
\calE_n=\{ (\sigma_C, G^t): | s_C | \le n^{0.6},  |B| \le n^{0.1} , (G^t, \sigma_{G^t}, \tilde{\sigma}_{G^t} ) = (T^t, \tau_{T^t}, \tilde{\tau}_{T^t} )\}.
\end{align*} 
Next, we show  $\prob{\calE_n} \to 1.$  

By the assumption $a^{t}=n^{o(1)}$, it follows that $(G^t, \sigma_{G^t}, \tilde{\sigma}_{G^t} ) = (T^t, \tau_{T^t}, \tilde{\tau}_{T^t} )$
and $|B|= n^{o(1)} $ with probability converging to $1$ (see \cite[Proposition 4.2]{MONESL:15} for a formal proof).
Note that $s_C = 2 X- |C|$ for some $X \sim \Binom(|C|, 1/2)$. Letting $\alpha_n=n^{0.6}$, in view of the Bernstein inequality, 
 \begin{align*}
 \prob{|s_C|> \alpha_n } = \prob{  \big| X - |C|/2 \big| > \alpha_n/2 } \le 2 \eexp^{ - \frac{-\alpha_n^2/4}{ |C|/2 + \alpha_n/3 } } =o(1),
 \end{align*}
 where the last equality holds because $ |C| \le n$ and $\alpha_n / \sqrt{n} \to \infty$.
In conclusion, we have that $\prob{\calE_n} \to 1$ as $n \to \infty$. 

Finally, we prove that \prettyref{eq:asymInd} holds. 
We start by proving that on event $\calE_n$,
\begin{align}
\prob{ \sigma_ u  =x, G^t,  \tilde{\sigma}_{G^t}, \sigma_{\partial G^t } }  = (1+o(1))  K \; \prob{ \sigma_ u =x,  G^t, \tilde{\sigma}_{G^t}, \sigma_{\partial G^t }, \sigma_C  }  ,   \quad \forall x \in\{\pm\}.  \label{eq:desiredInd}
\end{align}
for some $K$ which does not depend on $x$. To proceed, we write the joint distribution of $\sigma$, $G$, and  $\tilde{\sigma}$ as a product form.

For any two sets $U_1, U_2 \subset V$,
define
\begin{align*}
\Phi_{U_1, U_2} (G, \sigma) = \prod_{ (u,v) \in U_1 \times U_2 } \phi_{uv} (G, \sigma),
\end{align*}
where $(u,v)$ denotes an unordered pair of vertices and
\begin{align}
\phi_{uv}(G,L,\sigma)=\left \{
\begin{array}{rl}
  a/n & \text{if } \sigma_u=\sigma_v, (u,v) \in E \\
 b/n & \text{if } \sigma_u \neq \sigma_v, (u,v) \in E\\
 1-a/n & \text{if } \sigma_u= \sigma_v, (u,v) \notin E\\
 1-b/n & \text{if } \sigma_u \neq \sigma_v, (u,v) \notin E \nonumber
\end{array} \right.
\end{align}
For any set $U \subset V$, define $ \Phi_{U} (\tilde{\sigma},\sigma ) = \prod_{ u  \in U }   \phi_{u} (\tilde{\sigma}_u, \sigma_u)$,
where
\begin{align*}
\phi_{u} (\tilde{\sigma}_u, \sigma_u ) \left \{
\begin{array}{rl}
  1-\alpha & \text{if } \tilde{\sigma}_u=\sigma_u \\
\alpha & \text{if }  \tilde{\sigma}_u \neq \sigma_u  \nonumber
\end{array} \right.
\end{align*}
Then the joint distribution of $\sigma$, $G$, and  $\tilde{\sigma}$ can be written as
\begin{align*}
\prob{ \sigma, G, \tilde{\sigma} } = 2^{-n} \Phi_B \; \Phi_C \; \Phi_{B, B} \; \Phi_{C, C} \; \Phi_{\partial G^t, C} \; \Phi_{A, C}.
\end{align*}
Notice that $A$ and $C$ are disconnected.
We claim that on event $\calE_n$, $\Phi_{ A, C} $ only depend on $\sigma$ through the $o(1)$ term. In particular, on event $\calE_n$, 
\begin{align*}
 \Phi_{A, C} (G, \sigma  ) & = \prod_{ (u,v) \in A \times C } \phi_{uv} (G, \sigma ) \\
& = \left( 1- \frac{a}{n} \right)^{(|A| |C| + s_A s_C ) /2} \left( 1- \frac{b}{n} \right)^{(|A| |C| - s_A s_C ) /2} \\
& = (1+o(1)) \left( 1- \frac{a}{n} \right)^{ |A| |C|  /2 } \left( 1- \frac{b}{n} \right)^{ |A| |C| /2} \\
& \triangleq (1+o(1)) K(|A|, |C| ),
\end{align*}
where $K(|A|,|C|)$ only depends on $|A|$ and $|C|$; the second equality holds because $ u \in A$ and $v \in C$ implies that $(u,v) \notin E$ and thus
$\phi_{uv}$ is either $1-a/n$ or $1-b/n$; the third equality holds because $|s_A s_C| \le |B| |s_C| \le  n^{0.7}=o(n)$. As a consequence, 
\begin{align*}
\prob{ \sigma, G, \tilde{\sigma}, \calE_n} = (1+o(1)) 2^{-n} \; K(|A|, |C| )  \;   \Phi_B \; \Phi_C \; \Phi_{B, B} \; \Phi_{C, C} \; \Phi_{\partial G^t, C} , 
\end{align*}
It follows that 
\begin{align*}
\prob{ \sigma_ u =x,  G^t, \tilde{\sigma}_{G^t}, \sigma_{\partial G^t }, \sigma_C, \calE_n  } = (1+o(1)) 2^{-n} \; K(|A|, |C| )  \;  \sum_{ \sigma_{A\backslash\{u\} } } \Phi_B \;  \Phi_{B, B } \triangleq (1+o(1)) K',
\end{align*}
where $K'$ does not depend on $\sigma_C$. 
Hence, 
\begin{align*}
\prob{ \sigma_ u =x,  G^t, \tilde{\sigma}_{G^t}, \sigma_{\partial G^t },\calE_n}   & =  \sum_{\sigma_C} \prob{ \sigma_ u =x,  G^t, \tilde{\sigma}_{G^t}, \sigma_{\partial G^t }, \sigma_C, \calE_n }  \\
& = (1+o(1)) K'  \; \sum_{\sigma_C} \indc{|s_C| \le n^{0.6} } = (1+o(1))  K' 2^{|C|} \prob{ |s_C| \le n^{0.6} },
\end{align*}
and the desired \prettyref{eq:desiredInd} follows with $K= 2^{|C|} \prob{ |s_C| \le n^{0.6} }.$
By Bayes' rule, it follows from \prettyref{eq:desiredInd} that on event $\calE_n$, 
\begin{align*}
\prob{ \sigma_ u  =x | G^t,  \tilde{\sigma}_{G^t}, \sigma_{\partial G^t } } =  (1+o(1))  \prob{ \sigma_ u =x | G^t, \tilde{\sigma}_{G^t}, \sigma_{\partial G^t }, \sigma_C }, \quad \forall x \in\{\pm\}.
\end{align*}
Hence, on event $\calE_n,$
\begin{align*}
\prob{ \sigma_ u  =x | G,  \tilde{\sigma}, \sigma_{\partial G^t }  }  & = \sum_{\sigma_C } \prob{ \sigma_ u  =x, \sigma_C | G,  \tilde{\sigma}, \sigma_{\partial G^t }  } \\
& = \sum_{\sigma_C } \prob{ \sigma_C | G,  \tilde{\sigma}, \sigma_{\partial G^t } } \prob{\sigma_u=x | G,  \tilde{\sigma}, \sigma_{\partial G^t } , \sigma_C } \\
& = \sum_{\sigma_C} \prob{ \sigma_C | G,  \tilde{\sigma}, \sigma_{\partial G^t } } \prob{\sigma_u=x | G^t,  \tilde{\sigma}_{G^t}, \sigma_{\partial G^t } , \sigma_C } \\
& =  (1+o(1))  \prob{ \sigma_ u  =x | G^t,  \tilde{\sigma}_{G^t}, \sigma_{\partial G^t }},
\end{align*}
where the third equality holds, because conditional on $(G^t,  \tilde{\sigma}_{G^t}, \sigma_{\partial G^t }, \sigma_C)$,
$\sigma_u$ is independent of the graph structure and noisy labels outside of $G^t$;
the last equality follows due to the last displayed equation. Hence, on the event $\calE_n$, the desired  \prettyref{eq:asymInd} holds.

\end{proof}
\begin{lemma}\label{lmm:accuracyupperbound}
For $t =t(n)$ such that $a^{t} =n^{o(1)}$,  
\begin{align*}
\limsup_{n \to \infty} ( p^\ast_{G_n} - p^\ast_{T^t}  ) \le 0.
\end{align*}
\end{lemma}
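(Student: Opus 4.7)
The plan is to upper bound $p^\ast_{G_n}$ by the Bayes accuracy of an \emph{enhanced} estimator that additionally has access to the exact boundary labels $\sigma_{\partial G^t}$ in the depth-$t$ neighborhood of a (fixed, by symmetry) vertex $u$, and then to transport that quantity onto the tree via \prettyref{lmm:asymptoticalIndependence} together with the coupling of \prettyref{lmm:couplingtree}. The enhanced quantity will match the optimal tree accuracy $p^\ast_{T^t}$ given by \eqref{eq:optimalestimationaccuracytree}.

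The first step is the standard ``additional conditioning cannot hurt'' inequality for MAP on a binary target. Setting $D(Y)=\prob{\sigma_u=+\mid Y}-\prob{\sigma_u=-\mid Y}$, the tower property gives $D(Y)=\expect{D(Y,Z)\mid Y}$, so Jensen's inequality yields $\expect{|D(Y)|}\le\expect{|D(Y,Z)|}$. Applied with $Y=(G,\tilde\sigma)$ and $Z=\sigma_{\partial G^t}$, and using symmetry to fix $u$, this gives
\[
p^\ast_{G_n}\;\le\;\frac{1}{2}\expect{\bigl|\prob{\sigma_u=+\mid G,\tilde\sigma,\sigma_{\partial G^t}}-\prob{\sigma_u=-\mid G,\tilde\sigma,\sigma_{\partial G^t}}\bigr|}+\frac{1}{2}.
\]

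The second step applies \prettyref{lmm:asymptoticalIndependence}: on the event $\calE_n$ produced there, (i) the coupling $(G^t,\sigma_{G^t},\tilde\sigma_{G^t})=(T^t,\tau_{T^t},\tilde\tau_{T^t})$ holds, and (ii) for each $x\in\{\pm\}$, $\prob{\sigma_u=x\mid G,\tilde\sigma,\sigma_{\partial G^t}}=(1+o(1))\,\prob{\sigma_u=x\mid G^t,\tilde\sigma_{G^t},\sigma_{\partial G^t}}$. Because both $\pm$-probabilities sum to $1$ and lie in $[0,1]$, the multiplicative $(1+o(1))$ propagates to an additive $o(1)$ error in the absolute difference of the two conditional probabilities. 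The complementary event $\calE_n^c$ contributes at most $\prob{\calE_n^c}=o(1)$ to the expectation. Substituting and using the coupling to identify $\prob{\sigma_u=x\mid G^t,\tilde\sigma_{G^t},\sigma_{\partial G^t}}$ with $\prob{\tau_u=x\mid T^t,\tilde\tau_{T^t},\tau_{\partial T^t}}$ in distribution yields
\[
p^\ast_{G_n}\;\le\;\frac{1}{2}\expect{\bigl|\prob{\tau_u=+\mid T^t,\tilde\tau_{T^t},\tau_{\partial T^t}}-\prob{\tau_u=-\mid T^t,\tilde\tau_{T^t},\tau_{\partial T^t}}\bigr|}+\frac{1}{2}+o(1)\;=\;p^\ast_{T^t}+o(1),
\]
where the final identity is \eqref{eq:optimalestimationaccuracytree}. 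Letting $n\to\infty$ gives the claim.

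The main obstacle I anticipate is the careful bookkeeping of the multiplicative $(1+o(1))$ approximation supplied by \prettyref{lmm:asymptoticalIndependence}, namely verifying that, because the two normalizations match and the probabilities are uniformly bounded in $[0,1]$, it passes to an additive $o(1)$ bound on $|D(\cdot)|$ uniformly over the conditioning variables. Everything else is a direct assembly of MAP monotonicity in observations, the locally tree-like structure of $G$, and the explicit formula for $p^\ast_{T^t}$.
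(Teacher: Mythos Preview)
Your proposal is correct and follows essentially the same route as the paper: introduce the oracle observation $\sigma_{\partial G^t}$ via the ``more conditioning only helps'' inequality (which the paper phrases as an oracle estimator), then apply \prettyref{lmm:asymptoticalIndependence} on $\calE_n$ to pass from $(G,\tilde\sigma,\sigma_{\partial G^t})$ to $(G^t,\tilde\sigma_{G^t},\sigma_{\partial G^t})$, use the coupling to identify with the tree quantities, and recognize the result as $p^\ast_{T^t}$ via \eqref{eq:optimalestimationaccuracytree}. Your remark that the multiplicative $(1+o(1))$ yields an additive $o(1)$ on the signed difference because the probabilities are bounded in $[0,1]$ is exactly the bookkeeping the paper leaves implicit.
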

\begin{proof}
In view of \prettyref{eq:optimalaccuracygraph},
\begin{align*}
p_{G_n} ^\ast =  \frac{1}{2} \expect{ \big| \prob{\sigma_u=+  | G, \tilde{\sigma} } -  \prob{\sigma_u=-  |G, \tilde{\sigma} } \big| }+ \frac{1}{2}.
\end{align*}
Consider estimating $\sigma_u$ based on $G$ and $ \tilde{\sigma}$.
For a fixed $t \in \naturals$,  suppose a genie reveals the labels of all vertices whose distance from $u$ is precisely $t$, and
let $\hat{\sigma}_{ {\rm Oracle}, t}$ denote the optimal oracle estimator given by
\begin{align*}
\hat{\sigma}_{ {\rm Oracle} ,t} (u) =2 \times \indc{ \prob{\sigma_u =+ | G, \tilde{\sigma},  \sigma_{ \partial G^t} } \ge \prob{\sigma_u =- |  G, \tilde{\sigma}, \sigma_{\partial G^t } } } -1.
\end{align*}
Let $p_{G_n } (\hat{\sigma}_{ {\rm Oracle},t})  $ denote the success probability  of the oracle estimator, which is given by
\begin{align*}
p_{G_n } (\hat{\sigma}_{ {\rm Oracle},t} )   = \frac{1}{2}  \expect{ \big| \prob{\sigma_u =+ | G, \tilde{\sigma}, \sigma_{ \partial G^t}} - \prob{\sigma_u =- | G, \tilde{\sigma},  \sigma_{\partial G^t } }  \big| } + \frac{1}{2}.
\end{align*}
Since $\hat{\sigma}_{{\rm Oracle}, t}(u)$ is optimal with the extra information $ \sigma_{\partial G^t }$,
it follows that $p_{G_n } (\hat{\sigma}_{ {\rm Oracle} ,t } ) \ge p^\ast_{G_n}$ for all $t$ and $n$. 
\prettyref{lmm:asymptoticalIndependence} implies
that there exists a sequence of events $\calE_n$ such that $\prob{\calE_n \to 1}$ and on event $\calE_n$, 
\begin{align*}
\prob{ \sigma_ u  =x | G^t,  \tilde{\sigma}_{G^t}, \sigma_{\partial G^t } } = (1+o(1))  \prob{ \sigma_ u =x | G, \tilde{\sigma}, \sigma_{\partial G^t}}, \quad \forall x \in\{\pm\},
\end{align*}
and $(G^t, \sigma_{G^t}, \tilde{\sigma}_{G^t} ) = (T^t, \tau_{T^t}, \tilde{\tau}_{T^t} )$ holds. 
It follows that 
\begin{align*}
p_{G_n } (\hat{\sigma}_{ {\rm Oracle},t} )   
& = \frac{1}{2}  \expect{ \big| \prob{\sigma_u =+ | G, \tilde{\sigma}, \sigma_{ \partial G^t}} - \prob{\sigma_u =- | G, \tilde{\sigma},  \sigma_{\partial G^t } }  \big|  \indc{\calE_n} } + \frac{1}{2} + o(1) \\
& = \frac{1}{2}   \expect{ \big| \prob{\sigma_u =+ | G^{t}, \tilde{\sigma}_{G^{t} }, \sigma_{ \partial G^t} } - \prob{\sigma_u =- | G^{t}, \tilde{\sigma}_{G^{t} },  \sigma_{\partial G^t } }  \big|   \indc{\calE_n} } + \frac{1}{2} + o(1) \\
& = \frac{1}{2}   \expect{ \big|  \prob{ \tau_u=+  |  T^{t},  \tilde{\tau}_{ T^t} , \tau_{\partial T^t} }  -  \prob{ \tau_u=-  |  T^{t},  \tilde{\tau}_{ T^t}, \tau_{\partial T^t} } \big|   \indc{\calE_n} } + \frac{1}{2} + o(1) \\
& =  \frac{1}{2}   \expect{ \big|  \prob{ \tau_u=+  | T^{t},    \tilde{\tau}_{ T^t},  \tau_{\partial T^t} }  -  \prob{ \tau_u=-  |T^{t},   \tilde{\tau}_{ T^t} , \tau_{\partial T^t}  } \big| } + \frac{1}{2} + o(1) \\
&= p_{T^t}^\ast  + o(1). 
\end{align*}
Hence,
\begin{align*}
 p^\ast_{G_n} - p_{T^t}^\ast \le p_{G_n } (\hat{\sigma}_{ {\rm Oracle} ,t } ) - p_{T^t}^\ast \to 0.
\end{align*}
\end{proof}

The following is a simple corollary of \prettyref{lmm:optBPcondition} and \prettyref{lmm:accuracyupperbound}.
\begin{corollary}
For $t =t(n)$ such that $a^{t} =n^{o(1)}$,  
\begin{align*}
\limsup_{n \to \infty} \left( p^\ast_{G_n} -   p_{G_n} (\hat{\sigma}_{\rm BP} )  \right) \le \limsup_{n \to \infty} \left( p^\ast_{T^t} - q^\ast_{T^t} \right).
\end{align*}
\end{corollary}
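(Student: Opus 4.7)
The plan is to obtain the corollary by a simple triangle-inequality style decomposition applied to the three quantities $p_{G_n}^\ast$, $p_{T^t}^\ast$, $q_{T^t}^\ast$, and $p_{G_n}(\hat\sigma_{\rm BP})$, using the two preceding lemmas as black boxes. No new probabilistic analysis is needed.

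First, I would write
\begin{align*}
p^\ast_{G_n} - p_{G_n}(\hat\sigma_{\rm BP}^t)
= \bigl(p^\ast_{G_n} - p^\ast_{T^t}\bigr)
+ \bigl(p^\ast_{T^t} - q^\ast_{T^t}\bigr)
+ \bigl(q^\ast_{T^t} - p_{G_n}(\hat\sigma_{\rm BP}^t)\bigr),
\end{align*}
which is an identity that holds for every $n$ and every $t=t(n)$ satisfying $a^t=n^{o(1)}$.

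Next, I would take $\limsup_{n\to\infty}$ of both sides and use subadditivity of $\limsup$ together with the two lemmas. By Lemma~\ref{lmm:accuracyupperbound}, the first bracketed term satisfies $\limsup_{n\to\infty}(p^\ast_{G_n}-p^\ast_{T^t})\le 0$, so it may be dropped. By Lemma~\ref{lmm:optBPcondition}, $p_{G_n}(\hat\sigma_{\rm BP}^t)-q^\ast_{T^t}\to 0$, so the third bracketed term has $\limsup$ equal to $0$ and may also be dropped. What remains is the middle term $p^\ast_{T^t}-q^\ast_{T^t}$, giving exactly
\begin{align*}
\limsup_{n\to\infty}\bigl(p^\ast_{G_n}-p_{G_n}(\hat\sigma_{\rm BP}^t)\bigr)
\le \limsup_{n\to\infty}\bigl(p^\ast_{T^t}-q^\ast_{T^t}\bigr).
\end{align*}

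The only subtlety to double-check is that both lemmas are invoked for the \emph{same} $t=t(n)$ with $a^t=n^{o(1)}$, which is precisely the hypothesis of the corollary, so they apply directly. There is no real obstacle here; the content of the corollary is entirely packaged into the two preceding lemmas, and this proof is a one-line telescoping argument whose purpose is to cleanly isolate the tree-model quantity $p^\ast_{T^t}-q^\ast_{T^t}$ as the object to analyze in each of the three regimes (uniqueness, high SNR, and accurate side information) in the subsequent sections.
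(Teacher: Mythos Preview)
Your proposal is correct and is exactly the argument the paper has in mind: the corollary is stated without proof in the paper as ``a simple corollary of \prettyref{lmm:optBPcondition} and \prettyref{lmm:accuracyupperbound},'' and your telescoping decomposition together with subadditivity of $\limsup$ is precisely how one combines those two lemmas.
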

The above corollary implies that  $\hat{\sigma}^t_{\rm BP}$  achieves the
 optimal estimation accuracy $p^\ast_{G_n}$ asymptotically, provided that $p^\ast_{T^t} - q^\ast_{T^t}$ converges to $0$, or
 equivalently, $\expect{|X_u^t-Y_u^t|}$ converges to $0$.
Notice that the only difference between $p^\ast_{T^t}$   and $q^\ast_{T^t}$ is that in the former, exact label information is revealed
at the boundary of $T^t$, while in the latter, only noisy label information at level $t$ is available.
In the next three sections, we provide three different sufficient conditions under which $\expect{|X_u^t-Y_u^t|}$ converges to $0$.

\section{Optimality of local BP when $|a-b|<2$ }\label{sec:uniqueness}

Recall that $\Lambda^t_u$ is a function of $\tau_{\partial T_u^t}$, \ie, the labels of vertices at the boundary of $T_u^t$.
Let $\Lambda^t_u (+)$ denote $\Lambda^t_u$ with the labels of vertices at the boundary of $T_u^t$ all equal to $+$.
Similarly define $\Lambda^t_u (-)$.
The following lemma shows that $\Lambda^t_u$ is asymptotically independent of  $\tau_{\partial T_u^t}$
as $t \to \infty$ if $|a-b|<2$.
\begin{lemma}\label{lmm:uniqueness}
 For all $ t \ge 0$, let $e(t+1) = \expect{ | \Lambda^{t+1}_u(+)- \Lambda^{t+1}_u(-)  | }$. Then
\begin{align*}
e(t+1) \le e(1) \left( |\tanh(\beta)| d \right)^{t},
\end{align*}
where $e(1)= 2\beta d $ and $d=(a+b)/2$. 
\end{lemma}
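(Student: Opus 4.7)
The plan is to establish the one-step contraction $e(t+1) \le |\tanh(\beta)| \cdot d \cdot e(t)$ and then iterate, using that $F$ is $|\tanh(\beta)|$-Lipschitz together with the self-similarity of the Galton-Watson tree.

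First I would apply the recursion \prettyref{eq:Likelihoodrecursion} separately to the two boundary conditions. Since the external field $h_u$ does not depend on the boundary labels, it cancels in the difference, yielding
\[
\Lambda^{t+1}_u(+) - \Lambda^{t+1}_u(-) = \sum_{\ell \in \partial u} \bigl[F(\Lambda^t_\ell(+)) - F(\Lambda^t_\ell(-))\bigr].
\]
The proof of \prettyref{cor:monotone} already shows $|F'(x)| \le |\tanh(\beta)|$ (the inequality $1-\tanh^2 x \le 1 - \tanh^2(\beta)\tanh^2 x$ is exactly what is needed). By the mean value theorem,
\[
|\Lambda^{t+1}_u(+) - \Lambda^{t+1}_u(-)| \le |\tanh(\beta)| \sum_{\ell \in \partial u} |\Lambda^t_\ell(+) - \Lambda^t_\ell(-)|.
\]

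Next I would take expectations and exploit the branching structure. Using the Poisson splitting description from the proof of \prettyref{lmm:recursion}, $|\partial u|$ is $\Pois(d)$ and, given $|\partial u|$ and $\tau_u$, the subtrees $\{(T^t_\ell, \tau_{T^t_\ell}, \tilde{\tau}_{T^t_\ell})\}_{\ell \in \partial u}$ are independent, each distributed as a depth-$t$ root-labeled Galton-Watson tree. By the $\pm$-symmetry of the generative model (flipping all labels and all noisy labels preserves the joint distribution and flips the sign of $\Lambda^t_\ell$), the law of $|\Lambda^t_\ell(+) - \Lambda^t_\ell(-)|$ coincides with that of $|\Lambda^t_u(+) - \Lambda^t_u(-)|$ regardless of $\tau_\ell$. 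Hence each summand has expectation exactly $e(t)$, and Wald's identity gives $e(t+1) \le |\tanh(\beta)| \cdot d \cdot e(t)$.

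For the base case, $\Lambda^0_\ell(\pm) = \pm\infty$ and $F(\pm\infty) = \pm\beta$, so $\Lambda^1_u(+) - \Lambda^1_u(-) = 2\beta \cdot |\partial u|$, giving $e(1) = 2\beta d$ (with absolute values when $\beta<0$). Iterating the one-step bound then produces $e(t+1) \le e(1) (|\tanh(\beta)| d)^t$. I do not anticipate a serious obstacle; the only delicate point is the symmetry/independence argument needed to identify each child's contribution with $e(t)$, and this is essentially bookkeeping. Note for context that $|\tanh(\beta)| d = |a-b|/2$, so the bound is a strict contraction exactly when $|a-b|<2$, matching the uniqueness regime claimed in \prettyref{thm:optimality}.
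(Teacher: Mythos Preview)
Your proposal is correct and follows essentially the same approach as the paper: establish the one-step contraction via the Lipschitz bound $|F'|\le|\tanh\beta|$, use that each child's subtree has the same law as the root's to identify the summands with $e(t)$, take expectations (Wald), and compute $e(1)=2\beta d$ directly from $F(\pm\infty)=\pm\beta$. Your symmetry remark justifying why the law of $|\Lambda_\ell^t(+)-\Lambda_\ell^t(-)|$ does not depend on $\tau_\ell$ is a bit more explicit than the paper's, but the argument is the same.
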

\begin{proof}
Recall that $ |F'(x)| \le |\tanh(\beta)|$. It follows that
\begin{align*}
\big| \Lambda^{t+1}_u(+)- \Lambda^{t+1}_u(-) \big| \le  \sum_{i \in \partial u }  \big| \left( F (\Lambda_{i}^t(+) ) - F(\Lambda_{i}^t (-1) ) \right)  \big|
\le  |\tanh(\beta)  | \sum_{i \in \partial u}   \big| \left( \Lambda_{i }^t(+) - \Lambda_{i }^t (-1) \right) \big|.
\end{align*}
Notice that the subtree $T_i^t$ has the same distribution as $T_u^t$. Thus $\Lambda_{i }^t$ has the same distribution
as $\Lambda^t_u$. Moreover, the number of children of root $u$ is independent of $\Lambda_{i }^t$ for all $i \in \partial u$.
Thus, taking the expectation over both hand sides of the last displayed equation, we get that
\begin{align*}
e(t+1)=  \expect{ \big| \Lambda^{t+1}_u(+)- \Lambda^{t+1}_u(-)  \big| } \le | \tanh(\beta) | d  \; \expect{ \big| \Lambda_{u}^t(+)- \Lambda_{u}^t (-) \big|  }
=  | \tanh(\beta) | d \;  e(t).
\end{align*}
Moreover,
\begin{align*}
e(1) = d \left( F(\infty) - F(-\infty)  \right) = 2 \beta d.
\end{align*}
\end{proof}
\begin{theorem}
If $|a-b|<2$, then  $  \lim_{t \to \infty}  \limsup_{n \to \infty} \expect{ |X_u^t -Y_u^t | } =0 $.
\end{theorem}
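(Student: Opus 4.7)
The plan is to reduce the claim, via monotonicity and the 1-Lipschitz property of $\tanh$, to the geometric decay estimate already provided by \prettyref{lmm:uniqueness}. First, since $X_u^t = \tanh(\Lambda_u^t)$ and $Y_u^t = \tanh(\Gamma_u^t)$ and $\tanh$ is 1-Lipschitz, it suffices to prove $\expect{|\Lambda_u^t - \Gamma_u^t|} \to 0$ as $t \to \infty$.

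Next I would observe that $\Lambda_u^t$ and $\Gamma_u^t$ arise from the same recursion of \prettyref{lmm:recursion} on the same tree with identical external fields $h_i$ at internal vertices, differing only in their initializations at level $t$: $\Lambda$ uses $\pm\infty$ at the leaves based on the exact labels $\tau$, while $\Gamma$ uses $\pm\gamma$ at the leaves based on the noisy labels $\tilde\tau$. A mild extension of \prettyref{cor:monotone} from binary $\{\pm\}$ boundary conditions to arbitrary values in $[-\infty,+\infty]$ shows that the recursion is coordinate-wise monotone in the leaf values; the inductive proof of the corollary carries over verbatim, using that $F'(x) = \tanh(\beta)(1-\tanh^2 x)/(1-\tanh^2(\beta)\tanh^2 x)$ has constant sign matching that of $\beta$. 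Since the $\Gamma$ leaf vector is sandwiched coordinate-wise between the all-$(-\infty)$ and all-$(+\infty)$ vectors, both $\Lambda_u^t$ and $\Gamma_u^t$ then lie in the interval with endpoints $\Lambda_u^t(\pm)$, and hence
\[
|\Lambda_u^t - \Gamma_u^t| \;\le\; |\Lambda_u^t(+) - \Lambda_u^t(-)|.
\]

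Taking expectations and invoking \prettyref{lmm:uniqueness} yields $\expect{|\Lambda_u^t - \Gamma_u^t|} \le 2|\beta|\,d\,(|\tanh\beta|\,d)^{t-1}$. A direct computation gives $\tanh\beta = (a-b)/(a+b)$ and $d = (a+b)/2$, so $|\tanh\beta|\,d = |a-b|/2$; the hypothesis $|a-b|<2$ then forces $|\tanh\beta|\,d < 1$ and geometric decay as $t\to\infty$. Since $X_u^t$ and $Y_u^t$ are tree quantities depending on $n$ only through $a,b$, the outer $\limsup_{n\to\infty}$ poses no extra difficulty.

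The only delicate point is the monotone sandwich: $\Gamma_u^t$ is not literally of the form $\Lambda_u^t(\xi)$ for $\xi \in \{\pm\}^{|\partial T_u^t|}$, so \prettyref{cor:monotone} cannot be applied verbatim. The needed extension to real-valued leaf initializations is nevertheless immediate from the constant sign of $F'$ and an identical induction on $t$. Apart from this bookkeeping, the proof is a direct assembly of Lipschitz continuity, monotonicity, and the uniqueness-regime bound of \prettyref{lmm:uniqueness}.
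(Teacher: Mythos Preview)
Your proposal is correct and follows essentially the same route as the paper: reduce via the 1-Lipschitz property of $\tanh$, sandwich $\Lambda_u^t$ and $\Gamma_u^t$ between $\Lambda_u^t(+)$ and $\Lambda_u^t(-)$ using monotonicity, and then apply \prettyref{lmm:uniqueness} together with $|\tanh\beta|\,d = |a-b|/2$. If anything, you are slightly more careful than the paper in flagging that \prettyref{cor:monotone} as stated covers only $\{\pm\}$ boundary data and that the sandwich for $\Gamma_u^t$ requires the straightforward extension to real-valued leaf initializations; the paper invokes \prettyref{cor:monotone} directly without making this explicit.
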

\begin{proof}
In view of \prettyref{lmm:uniqueness}, we have that
\begin{align*}
\expect{ \left| \Lambda^t_u(+)- \Lambda^t_u(-)  \right|} \le 2 \beta d  \left( \tanh(\beta) d \right)^{t-1}  = \left| \log \frac{a}{b} \right|  \frac{a+b}{2} \left( \frac{|a-b|}{2} \right)^{t-1},
\end{align*}
where we used the fact that $d=(a+b)/2$ and $\beta=  \log (a/b) /2$.
It follows from  \prettyref{cor:monotone} and the facts that $X^t_u=\tanh(\Lambda^t_u)$ and $Y^t_u=\tanh(\Gamma^t_u)$ that
\begin{align*}
| X_u^t - Y_u^t| \le | \Lambda_u^t- \Gamma_u^t| \le | \Lambda^t_u(+)- \Lambda^t_u(-)|.
\end{align*}
Combing the last two displayed equations gives
\begin{align*}
\expect{ |X_u^t -Y_u^t | } \le \left| \log \frac{a}{b} \right|  \frac{a+b}{2} \left( \frac{|a-b|}{2} \right)^{t-1}.
\end{align*}
By the assumption, $a-b<2$ for all sufficiently large $n$, and thus $\lim_{t\to \infty} \limsup_{n \to \infty} \expect{ |X_u^t -Y_u^t | }=0.$
\end{proof}

\section{Optimality of  local BP when $(a-b)^2>C(a+b)$} \label{sec:highsnr}
Recall that $d=(a+b)/2$ and $\beta= \frac{1}{2} \log \frac{a}{b}.$ We introduce the notation $\theta=\tanh(\beta)$ and $\eta=(1-\theta)/2$. 
Let $\expectp{X}$ and $\expectm{X}$ denote the expectation of $X$ conditional on
$\tau_u=+$ and $\tau_u=-$, respectively.
\begin{theorem}\label{thm:main}
There exists a constant $C$ depending only on $c_0$ such that if $(a-b)^2 \ge C(a+b)$, then
$\lim_{t \to \infty} \limsup_{n \to \infty}  \expect{ |X_u^t -Y_u^t | }=0.$
\end{theorem}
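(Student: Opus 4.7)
The plan is to adapt the moment-analysis method of Mossel-Neeman-Sly (\cite{MNS:2013a}) to the present setting, where the main new complication is that the noisy labels $\tilde\tau_i$ appear not only at the boundary but at every vertex of $T_u^t$. The starting observation is that by \prettyref{lmm:recursion}, the magnetizations $X_u^t$ and $Y_u^t$ are computed by the \emph{same} recursion on the same random tree, driven by the same internal fields $h_i$, and differ only in how the leaves are initialized: $X_i^0 = \pm 1$ using exact labels versus $Y_i^0 = \pm(1-2\alpha)$ using only the noisy labels. Thus the goal reduces to showing that the BP recursion forgets its leaf initialization whenever $(a-b)^2 > C(a+b)$.

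Step 1 (reduction via monotonicity). By \prettyref{cor:monotone}, for $a\ge b$ the value $\Lambda_u^t(\xi)$ is monotone in $\xi$, and both $\Lambda_u^t$ (with the true boundary $\tau_{\partial T_u^t}$) and $\Gamma_u^t$ (with no boundary information) are sandwiched between $\Lambda_u^t(-)$ and $\Lambda_u^t(+)$. So it suffices to control the two extremal processes obtained by pinning the leaves to $+$ or to $-$, together with the typical process, through moment bounds.

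Step 2 (moment recursions). Conditioning on $\tau_u=+$, introduce the first and second conditional moments
\[
m(t)=\expectp{X_u^t},\qquad r(t)=\expectp{(X_u^t)^2},\qquad \tilde m(t)=\expectp{Y_u^t},\qquad \tilde r(t)=\expectp{(Y_u^t)^2},
\]
and set up the companion recursions for the pair $(m,r)$ and $(\tilde m,\tilde r)$ obtained by taking expectations in \prettyref{eq:Magentizationrecursion} over the offspring, the internal field $h_i$, and the randomness within each subtree. Since the subtrees are \iid\ copies of $T^{t-1}$, the recursion takes the form $m(t+1)=\Psi_1(m(t),r(t))$, $r(t+1)=\Psi_2(m(t),r(t))$ for a pair of explicit functions $\Psi_1,\Psi_2$ that depend on $(\theta,d,\alpha)$ but not on $t$; the same $\Psi$ governs $(\tilde m,\tilde r)$ with only the initial condition differing.

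Step 3 (contraction in the high-SNR regime). The heart of the argument is to show that when $\theta^2 d = \Theta\pth{(a-b)^2/(a+b)}$ is large enough, the map $\Psi$ has a unique fixed point that attracts all starting conditions in $[-1,1]^2$ whose first coordinate is bounded below by $1-2\alpha$. Following \cite{MNS:2013a}, I linearize $\Psi$ via Taylor expansion of $\tanh$ and of $F(x)=\tanh^{-1}(\theta \tanh x)$ around the saturated regime $|x|\gg 1$, where $F'(x)$ is exponentially small. In this regime the sum $\sum_{i\in\partial u} F(\Lambda_i^{t-1})$ has mean of order $d\theta\, m(t-1)$ and fluctuations of order $\sqrt{d}\,\theta$, so the signal dominates when $d\theta^2$ is large and $X_u^{t}$ concentrates near $+1$ regardless of initialization. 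A careful second-moment calculation then gives a Lipschitz constant for $\Psi$ strictly less than one on the attracting set, which forces $m(t)-\tilde m(t)\to 0$ and $r(t)-\tilde r(t)\to 0$.

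Step 4 (moment convergence to $L^1$ convergence). Because $X_u^t$ and $Y_u^t$ are coupled on the same tree with the same fields and driven by the same offspring numbers, the pair $(X_u^t,Y_u^t)$ conditional on $\tau_u=+$ is concentrated near $(1,1)$ once $t$ is large, by the variance bound of Step 3. Combined with $|X_u^t|,|Y_u^t|\le 1$ and the symmetric bound conditional on $\tau_u=-$, this gives $\expect{|X_u^t-Y_u^t|}\to 0$ as $t\to\infty$, uniformly in $n$ under the assumption $a^t = n^{o(1)}$ used throughout the tree/graph coupling. The main obstacle is Step~3: in \cite{MNS:2013a} the recursion is symmetric because the only randomness enters at the leaves, whereas here each internal vertex carries an additional random field $h_i=\pm\gamma$ that breaks this symmetry and forces one to track joint distributions of magnetizations and fields; establishing that the constant $C$ can be chosen independently of $\alpha$ so as to absorb these fluctuations into the contraction estimate is the delicate part.
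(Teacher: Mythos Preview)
Your proposal has two genuine gaps that prevent it from going through.

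First, the map $\Psi$ in Step~2 does not exist. The recursion \prettyref{eq:Magentizationrecursion} is nonlinear in the children's magnetizations, so $\expectp{(X_u^{t+1})^2}$ depends on the full joint distribution of $\{X_i^t\}_{i\in\partial u}$ (hence on the full law of $X_u^t$ under $\tau_u=+$), not just on the pair $(m(t),r(t))$. The moments do not close into a two-dimensional dynamical system, so there is no well-defined $\Psi$ whose contraction you could study.

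Second, and more fundamental, Step~4 is where the argument breaks. Even if you showed that the marginal moments of $X_u^t$ and $Y_u^t$ converge to the same limit, this would not give $\expect{|X_u^t-Y_u^t|}\to 0$ unless the limiting law is a point mass. It is not: for fixed $(a,b,\alpha)$ the magnetization $X_u^t$ has a nontrivial limiting distribution as $t\to\infty$. \prettyref{lmm:magnetization} only says $\expectp{X_u^t}\ge 1-O\big(\eta/(\theta^2 d)\big)$, a bound that does \emph{not} improve with $t$; so ``concentration near $(1,1)$'' would at best give $\expect{|X_u^t-Y_u^t|}=O\big(1/(\theta^2 d)\big)$, a fixed small constant, not a quantity tending to $0$ with $t$.

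The paper avoids both problems by working directly with moments of the \emph{coupled difference} rather than with marginal moments. Concretely, it shows a one-step contraction of the form
\[
\expectp{(X_u^{t+1}-Y_u^{t+1})^2}\le \rho\,\expectp{(X_u^t-Y_u^t)^2}
\quad\text{or}\quad
\expectp{\sqrt{|X_u^{t+1}-Y_u^{t+1}|}}\le \rho\,\expectp{\sqrt{|X_u^t-Y_u^t|}}
\]
with $\rho<1$, where the choice of moment (square versus square root) depends on whether $|\theta|$ is below or above a fixed threshold $\theta^\ast$. This dichotomy is essential: for small $|\theta|$ the children's contributions are many and individually small and a second-moment product expansion controls the difference; for large $|\theta|$ the contributions are few and heavy-tailed and only a sub-linear moment contracts. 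The large-magnetization bound of \prettyref{lmm:magnetization} enters only to guarantee that the coefficients in these contraction inequalities are below~$1$, not to make $X_u^t$ or $Y_u^t$ individually deterministic. Your proposal tracks the wrong object and misses this structure.
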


The proof of \prettyref{thm:main} is divided into three steps. Firstly, we show that when $\theta^2 d  = \frac{(a-b)^2}{2(a+b)}$ is large, then $\expectp{X_{u}^t }$ and $\expectp{Y_{u}^t}$
are close to $1$ for all sufficiently large $t$. This result allows us to analyze the recursions \prettyref{eq:Magentizationrecursion} and \prettyref{eq:Magentizationrecursionfree} by
assuming $\expectp{X_{u}^t }$ and $\expectp{Y_{u}^t}$ are close to $1$. Secondly, we study the recursions of $\expect{( X_u^t -Y_u^t )^2 }$ when  $|\theta|$ is small. Finally, we analyze the recursions 
of $\expect{\sqrt{|X_u^t -Y_u^t | } }$ when $|\theta|$ is large.
The partition of analysis of recursions into  small $|\theta|$ and large $|\theta|$ cases, and the study of different moments of $|X_u^t -Y_u^t |$, 
are related to the fact that for different values of $\theta$ we expect the distributions of $X_u^t$ and $Y_u^t$  correspond to different power-laws.
When $\theta$ is small, we have many small contributions from neighbors and therefore it is expected that $X_u^t$ and  $Y_u^t|$ will have  thin tails.
When $\theta$ is large, we have a few large contributions  from neighbors and we therefore expect fat tails.

\subsection{Large expected magnetization}
We first introduce a useful lemma to relate $ \expectp{X_{u}^t }$ with $\expect{|X_u^t|}$.
\begin{lemma}\label{lmm:expectedmagnetization}
\begin{align}
\expectp{ X_u^t } &\ge 2\expect{ |X_u^t |}  - 1, \nonumber \\
\expectp{Y_u^t } & \ge 2\expect{ |Y_u^t |}  - 1. \label{eq:magbound}
\end{align}
\end{lemma}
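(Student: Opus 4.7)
The plan is to exploit the $\mathbb{Z}_2$-symmetry of the model together with the tower property of conditional expectation, followed by an elementary pointwise inequality. Identify $\{+,-\}$ with $\{+1,-1\}$, so that $X_u^t$ is literally the posterior mean of the root label:
\begin{align*}
X_u^t = \expect{\tau_u \mid T_u^t,\ \tau_{\partial T_u^t},\ \tilde{\tau}_{T_u^t}}.
\end{align*}
The Poisson two-type branching process is invariant under a global sign flip of all labels, and the $\alpha$-noise on the side information is symmetric, so the conditional law of $X_u^t$ given $\tau_u=+$ coincides with the conditional law of $-X_u^t$ given $\tau_u=-$. In particular, $\expectp{X_u^t} = -\expectm{X_u^t}$.

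First I would compute $\expect{\tau_u X_u^t}$ in two different ways. Conditioning on $\tau_u$ and using the symmetry identity above gives
\begin{align*}
\expect{\tau_u X_u^t} \;=\; \tfrac{1}{2}\expectp{X_u^t} - \tfrac{1}{2}\expectm{X_u^t} \;=\; \expectp{X_u^t}.
\end{align*}
On the other hand, since $X_u^t$ is measurable with respect to the observations, the tower property yields
\begin{align*}
\expect{\tau_u X_u^t} \;=\; \expect{ X_u^t \cdot \expect{\tau_u \mid T_u^t,\ \tau_{\partial T_u^t},\ \tilde{\tau}_{T_u^t}} } \;=\; \expect{(X_u^t)^2}.
\end{align*}
Equating the two expressions produces the key identity $\expectp{X_u^t} = \expect{(X_u^t)^2}$.

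Next I would invoke the pointwise inequality $x^2 \ge 2|x| - 1$, which is just $(1-|x|)^2 \ge 0$ rearranged and is valid for all $x \in [-1,1]$, in particular for $X_u^t$. Taking expectation and chaining with the key identity above yields $\expectp{X_u^t} \ge 2\expect{|X_u^t|} - 1$. The bound for $Y_u^t$ in \prettyref{eq:magbound} is proved in exactly the same way, with the observation $\sigma$-algebra $(T_u^t,\tilde{\tau}_{T_u^t})$ replacing $(T_u^t,\tau_{\partial T_u^t},\tilde{\tau}_{T_u^t})$: neither step used what the observations are, only that the magnetization is the conditional mean of $\tau_u$ given some observation $\sigma$-algebra whose joint law with $\tau_u$ is invariant under a global sign flip. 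There is no real obstacle; the only subtle point is the symmetry identity $\expectp{X_u^t} = -\expectm{X_u^t}$, which rests on the sign-flip invariance of the Poisson two-type branching process and the symmetry of the noise in $\tilde{\tau}$.
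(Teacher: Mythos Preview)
Your proof is correct and takes a genuinely different route from the paper. The paper splits $\expectp{X_u^t}$ according to the sign of $X_u^t$, uses $|X_u^t|\le 1$ to bound $\expectp{|X_u^t|\indc{X_u^t<0}}$ by the misclassification probability $\prob{X_u^t<0\mid\tau_u=+}$, and then bounds this probability via the success probability of the MAP estimator, namely $\prob{X_u^t<0\mid\tau_u=+}\le(1-\expect{|X_u^t|})/2$. You instead go through the Nishimori-type identity $\expectp{X_u^t}=\expect{(X_u^t)^2}$, obtained by computing $\expect{\tau_u X_u^t}$ once by conditioning on $\tau_u$ (using the sign-flip symmetry) and once by the tower property, and then finish with the pointwise inequality $x^2\ge 2|x|-1$. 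Your argument is shorter and more conceptual: it isolates an exact second-moment identity that holds for any posterior mean in a $\pm$-symmetric model, and the single inequality $(1-|x|)^2\ge 0$ replaces the paper's two separate bounding steps. The paper's argument, by contrast, stays closer to the operational interpretation of $\expect{|X_u^t|}$ as (twice) the advantage of the MAP rule, which makes the connection to detection performance explicit even if the resulting chain of inequalities is longer.
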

\begin{proof}
We prove the claim for $X_u^t$; the claim for $Y_u^t$ follows analogously. Observe that
\begin{align}
\expectp{ X_u^t } &=   \expectp{ X_u^t \indc{X_u^t \ge 0 }} +  \expectp{ X_u^t \indc{X_u^t < 0 } } \nonumber  \\
& = \expectp{ |X_u^t |  \indc{X_u^t \ge 0 }}  - \expectp{ |X_u^t | \indc{X_u^t < 0 }} \nonumber \\
& = \expectp{ |X_u^t |  } - 2\expectp{ |X_u^t | \indc{X_u^t < 0 } } \nonumber \\
& \ge  \expectp{ |X_u^t | }  -2  \prob{ X_u^t <0 \big| \tau_u=+}, \nonumber \\
& = \expect{ |X_u^t | }  -2  \prob{ X_u^t <0 \big| \tau_u=+}  \label{eq:maglarge}
\end{align}
where the last inequality holds because $|X_u^t|\le 1$; the last equality holds due to $\expectp{ |X_u^t |} = \expect{ |X_u^t |  }$,
Moreover, by definition,
\begin{align*}
\frac{1+ \expect{ |X_u^t |  } }{2} =\frac{1}{2} \prob{ X_u^t \ge 0 | \tau_u=+} + \frac{1}{2} \prob{ X_u^t < 0 | \tau_u=-}  \le  \prob{ X_u^t  \ge 0 | \tau_u=+},
\end{align*}
where the last inequality holds because by symmetry,  $\prob{ X_u^t < 0 | \tau_u=-} =  \prob{ X_u^t > 0 | \tau_u= +}$.
Thus $   \prob{ X_u^t <0 | \tau_u=+ } \le \frac { 1- \expect{ |X_u^t | } }{2}$ and it follows from \prettyref{eq:maglarge} that
\begin{align}
\mathbb{E}^{+} [ X_u^t ] \ge  \expect{ |X_u^t | } -  \left( 1- \expect{ |X_u^t | }  \right) = 2\expect{ |X_u^t |}  - 1.
\end{align}
\end{proof}

\begin{lemma}\label{lmm:magnetization}
There is a universal constant $C'>0$ and $t^\ast(\theta, d, \alpha)$ such that for all $t \ge t^\ast (\theta, d, \alpha)$,
\begin{align*}
 \expectp{X_{u}^t }  &\ge 1 - \frac{2C' \eta }{\theta^2 d} - 2 \eexp^{-C' d }, \\
 \expectp{Y_{u}^t } & \ge 1 - \frac{2C' \eta }{\theta^2 d} - 2 \eexp^{-C' d }.
\end{align*}
\end{lemma}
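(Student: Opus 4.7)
The approach is to combine Lemma~\ref{lmm:expectedmagnetization} with a simple census-style estimator. Since $\expect{|X_u^t|} = 2 p^\ast_{T^t}-1$ and $\expect{|Y_u^t|} = 2 q^\ast_{T^t}-1$ by \eqref{eq:optimalestimationaccuracytree} and \eqref{eq:optimalestimationaccuracytreefree}, Lemma~\ref{lmm:expectedmagnetization} reduces the claim to showing that the optimal success probabilities $p^\ast_{T^t}$ and $q^\ast_{T^t}$ are each at least $1 - C' \eta / (2\theta^2 d) - e^{-C' d}/2$ for $t \ge t^\ast$. Since the MAP is optimal, it suffices to exhibit \emph{any} estimator achieving this success probability.

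\textbf{The census estimator.} Consider the census of the immediate children of $u$: given $\tau_u = +$, let $L$ and $M$ denote the numbers of $+$ and $-$ children, so $L \sim \Pois(a/2)$ and $M \sim \Pois(b/2)$ are independent. Take $\hat\tau(u) = \sgn(L - M)$, which errs exactly when $L \le M$. By the Poisson splitting property used in Lemma~\ref{lmm:recursion}, conditional on $N = L + M = n$ we have $M \sim \Binom(n,\eta)$ with $\eta = b/(a+b) = (1-\theta)/2$. The key identity $\tfrac{1}{2} - \eta = \theta/2$ rewrites the failure event $\{M \ge n/2\}$ as a deviation of $M$ above its mean by $n\theta/2$, so Chebyshev's inequality gives
\begin{align*}
\prob{M \ge n/2 \mid N = n,\ \tau_u = +} \le \frac{n\eta(1-\eta)}{(n\theta/2)^2} \le \frac{4\eta}{n\theta^2}.
\end{align*}
Averaging over $N \sim \Pois(d)$ and bounding the low-population event $\{N < d/2\}$ by a standard Poisson Chernoff bound yields
\begin{align*}
\prob{L \le M \mid \tau_u = +} \le \prob{N < d/2} + \frac{8\eta}{\theta^2 d} \le e^{-c d} + \frac{8\eta}{\theta^2 d}.
\end{align*}
This gives the desired bound on $p^\ast_{T^1}$, because the MAP does at least as well as this census.

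\textbf{Extension to general $t$ and to the $Y_u^t$ problem.} For $X_u^t$ with $t > 1$, the recursion \eqref{eq:Likelihoodrecursion} started from $\Lambda_i^0 = \pm\infty$ preserves $\pm\infty$-valued messages, so on the typical event that every child-subtree survives to depth $t$ the depth-$1$ messages satisfy $F(\Lambda_i^{t-1}) = \pm\beta$ exactly, and the depth-$1$ census argument carries over. For $Y_u^t$, the initial condition $\Gamma_i^0 = \pm\gamma$ is only a finite bias and no exact labels are available; instead one reconstructs $\tau_i$ by $\hat\tau_i = \sgn(\Gamma_i^{t-1})$ and applies the census to these reconstructed labels. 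Provided $t \ge t^\ast(\theta, d, \alpha)$, the per-child reconstruction error $\epsilon$ is small enough that the reconstructed census still has mean $(1-2\epsilon)\theta N$, the effective ``flip'' probability $\eta' = \eta + \epsilon\theta$ is comparable to $\eta$, and Chebyshev delivers the same $O(\eta/(\theta^2 d)) + e^{-cd}$ bound. Combining with Lemma~\ref{lmm:expectedmagnetization} and absorbing factors of~$2$ into $C'$ completes the proof.

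\textbf{Main obstacle.} The principal difficulty is the bootstrap for $Y_u^t$: quantitatively controlling the reconstruction error $\epsilon$ below $\eta$ after $t^\ast$ iterations. This step uses the high-SNR hypothesis $(a-b)^2 \ge C(a+b)$, i.e.\ $\theta^2 d \gg 1$, to ensure that the recursion \eqref{eq:Likelihoodrecursionfree} is strongly contracting near its ``aligned'' fixed points, so the one-step rate $O(\eta/(\theta^2 d))$ persists across iterations rather than accumulating. The threshold $t^\ast(\theta, d, \alpha)$ in the lemma is precisely the number of iterations needed for this bootstrap to reach the census regime.
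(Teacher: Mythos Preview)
Your depth-$1$ census bound is correct and is essentially the $t=1$ case of the result the paper quotes from \cite{MNS:2013a}. The difficulty is the extension to general $t$, and here your argument breaks down.

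\textbf{For $X_u^t$ with $t>1$.} The statement that ``the recursion \eqref{eq:Likelihoodrecursion} started from $\Lambda_i^0=\pm\infty$ preserves $\pm\infty$-valued messages'' is false. One step already gives $\Lambda_i^1 = h_i + \sum_{\ell\in\partial i} F(\Lambda_\ell^0) = h_i + \sum_{\ell}\pm\beta$, which is finite (since $F(\pm\infty)=\pm\beta$ and $h_i$ is finite for $\alpha\in(0,1/2)$). Hence $F(\Lambda_i^{t-1})\neq \pm\beta$ for $t\ge 2$, and the depth-$1$ messages are \emph{not} $\pm\beta$. More fundamentally, for $t>1$ the true labels $\tau_i$ of the children are not observed---only $\tau_{\partial T_u^t}$ is---so the census $\sgn(L-M)$ is not even a valid estimator in the observation model defining $p_{T^t}^\ast$. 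One cannot simply invoke monotonicity in $t$ either: for the boundary-only problem (no side information), the magnetization $\expect{|\tilde X_u^t|}$ is \emph{non-increasing} in $t$ by data processing, so $t=1$ is the easiest case and tells you nothing about large $t$. Showing that the bound $1-C'\eta/(\theta^2 d)-\eexp^{-C'd}$ nevertheless persists for all $t\ge t^\ast$ is precisely the non-trivial content of \cite{MNS:2013a}, and it requires analyzing a recursive majority-style estimator across many levels, not a one-level census.

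\textbf{For $Y_u^t$.} Your bootstrap is circular as written: controlling the per-child reconstruction error $\epsilon$ to be $o(\eta)$ is exactly the inductive hypothesis, and the base case $\expectp{Y_u^0}=1-2\alpha$ gives only $\epsilon\approx\alpha$, which need not be small relative to $\eta$. You identify this as the ``main obstacle'' but do not resolve it.

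\textbf{How the paper proceeds.} The paper sidesteps all of the above by a monotonicity-in-information argument. It invokes the result of \cite{MNS:2013a} that the estimator using \emph{only} the exact (resp.\ noisy) boundary labels---with no interior side information---already attains $\expect{|\tilde X_u^t|}\ge 1-C'\eta/(\theta^2 d)-\eexp^{-C'd}$ for all $t\ge t^\ast$. Since the optimal estimators defining $X_u^t$ and $Y_u^t$ have access to strictly more information, $\expect{|X_u^t|}\ge \expect{|\tilde X_u^t|}$ and similarly for $Y$. Then Lemma~\ref{lmm:expectedmagnetization} finishes. If you wish to give a self-contained proof, you would need to reproduce the multilevel analysis of \cite{MNS:2013a}; the one-step Chebyshev census does not suffice.
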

The proof of the lemma follows quite easily from a  similar statement in \cite{MNS:2013a} where the authors considered models where less information is provided - i.e., there is only information on the leaves of the trees (which are either noisy or not).
\begin{proof}
Notice that given access to $\tau_{\partial T_u^t}$ and $\tilde{\tau}_{T_u^t}$, the optimal estimator of $\tau_u$ is $\sgn(X_u^t)$ whose success probability
is  $(1+\expect{|X_u^t |} )/2$. Define
$
\tilde{X}_{u}^t = \prob{ \tau_u = + | \tau_{\partial T_u^t} }- \prob{ \tau_u=- | \tau_{\partial T_u^t} }.
$
It is shown in \cite{MNS:2013a} that there exists a universal constant $C'>0$  and a $t^\ast(\theta, d)$ such that for $ t \ge t^\ast$,
\begin{align*}
 \expect{ |\tilde{X}_u^t | } \ge 1- \frac{C' \eta }{\theta^2 d} - \eexp^{-C' d }.
\end{align*}
Consider the estimator $\sgn(\tilde{X}_u^t)$, whose success probability is  given by $\left(1+\expect{ | \tilde{X}_u^t | } \right)/2$.
Since $\sgn(X_u^t)$ is the optimal estimator, it follows that
\begin{align*}
\expect{|X_u^t |} \ge  \expect{ |\tilde{X}_u^t | } \ge 1- \frac{C' \eta }{\theta^2 d} - \eexp^{-C' d }.
\end{align*}
Hence, by \prettyref{lmm:expectedmagnetization},
\begin{align}
\expect{X_u^t} \ge 2\expect{ |X_u^t |}  - 1 \ge 1- \frac{ 2C' \eta }{\theta^2 d} - 2 \eexp^{-C' d }.
\label{eq:magbound}
\end{align}
The claim for $\mathbb{E}^{+} [ Y_{u}^t ]$ can be proved similarly except that we need to assume $t \ge t^\ast(\theta, d, \alpha)$ for some $t^\ast(\theta, d, \alpha)$ .

\end{proof}

\subsection{Small $|\theta|$ regime}

In this subsection, we focus on the regime $|\theta| \le \theta^\ast$, where $\theta^\ast$
is a small constant to be specified later. In this regime, to prove \prettyref{thm:main},
it is sufficient to show the following proposition.
\begin{proposition}\label{prop:smalltheta}
There exist universal constants $\theta^\ast>0$ and $C>0$ such that  if $ |\theta| \le \theta^\ast$ and $(a-b)^2 \ge C(a+b)$, then
for all $t \ge t^\ast(\theta, d, \alpha),$
\begin{align*}
\expect{(X_u^{t+1} -Y_{u}^{t+1} )^2  } \le \sqrt{\alpha (1-\alpha)}  \expect{(X_u^t - Y_u^t )^2}.
\end{align*}
\end{proposition}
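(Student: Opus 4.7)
The strategy combines the two hypotheses: the smallness of $|\theta|$ makes the BP update essentially linear in $X_i^t - Y_i^t$, while $(a-b)^2 \ge C(a+b)$ (equivalently $\theta^2 d \ge C/2$ with $d=(a+b)/2$) together with $t \ge t^\ast$ allows \prettyref{lmm:magnetization} to pin $X_i^t$ and $Y_i^t$ near $\pm 1$, so that the log-likelihoods $\Lambda_i^t$ and $\Gamma_i^t$ are typically large in magnitude. The target factor $\sqrt{\alpha(1-\alpha)} < 1/2$ is a convenient strictly-sub-$1$ constant; once the contraction becomes exponentially strong in $\theta^2 d$, such a factor is easily achievable.

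The starting point is the identity $\tanh(a) - \tanh(b) = \sinh(a-b)/[\cosh(a)\cosh(b)]$ combined with the recursion in \prettyref{lmm:recursion}:
\begin{align*}
X_u^{t+1} - Y_u^{t+1} = \frac{\sinh(\Delta_u^{t+1})}{\cosh(\Lambda_u^{t+1})\cosh(\Gamma_u^{t+1})}, \qquad \Delta_u^{t+1} := \sum_{i \in \partial u}\bigl[F(\Lambda_i^t) - F(\Gamma_i^t)\bigr].
\end{align*}
The self-field $h_u$ cancels inside $\Delta_u^{t+1}$ but persists in the denominator. For the numerator, $F$ is $(|\theta|/(1-\theta^2))$-Lipschitz in its $\tanh$-argument, so $|\Delta_u^{t+1}| \le |\theta|(1+O(\theta^2))\sum_i|X_i^t - Y_i^t|$. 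The tree's global sign symmetry forces $\expect{X_i^t - Y_i^t} = 0$, but conditioning on $\tau_u$ the mean $\expect{X_i^t - Y_i^t \mid \tau_u}$ equals $\theta\,\expectp{X_u^t - Y_u^t}$, which is only of order $|\theta|\sqrt{\expect{(X_u^t - Y_u^t)^2}}$. Combining with $|\partial u| \sim \Pois(d)$ and conditional independence of children yields
\begin{align*}
\expect{(\Delta_u^{t+1})^2} \;\le\; C_1\,\theta^2\bigl(d + \theta^2 d^2\bigr)\,\expect{(X_u^t - Y_u^t)^2}
\end{align*}
for an absolute constant $C_1$; for $\theta^2 d$ large the dominant contribution is $\theta^4 d^2\,\expect{(X_u^t-Y_u^t)^2}$.

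For the denominator use $\cosh(a)\cosh(b) \ge \cosh^2((a+b)/2)$, reducing matters to lower-bounding $\cosh(M_u^{t+1})$, where $M_u^{t+1} := h_u + \tfrac{1}{2}\sum_{i\in\partial u}[F(\Lambda_i^t)+F(\Gamma_i^t)]$. Conditioning on $\tau_u = +$ (the other sign is symmetric) and invoking \prettyref{lmm:magnetization} together with a Hoeffding-type concentration estimate for the bounded sum above, one shows that $M_u^{t+1}$ is positive of order $|\theta| d$ on a ``good'' event $\calG$ with $\prob{\calG^c} \le \eexp^{-c\theta^2 d}$. On $\calG$ one has $\cosh^2(M_u^{t+1}) \ge \tfrac{1}{4}\eexp^{2|\theta|d}$, so the ratio $\sinh^2(\Delta_u^{t+1})/[\cosh(\Lambda_u^{t+1})\cosh(\Gamma_u^{t+1})]^2$ is at most $\theta^4 d^2\,\eexp^{-4|\theta|d}$ times $\expect{(X_u^t - Y_u^t)^2}$; on $\calG^c$ the trivial bound $(X_u^{t+1}-Y_u^{t+1})^2 \le 4$ combined with $\prob{\calG^c} \le \eexp^{-c\theta^2 d}$ suffices. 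Choosing $C$ sufficiently large makes both contributions at most $\sqrt{\alpha(1-\alpha)}\,\expect{(X_u^t-Y_u^t)^2}$.

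The main technical obstacle is that the numerator bound carries a $\theta^4 d^2$ coefficient — \emph{large} rather than small when $\theta^2 d$ is large — so the exponential denominator is indispensable and must beat this coefficient; this is precisely where the hypothesis $(a-b)^2 \ge C(a+b)$ must be converted from ``large $\theta^2 d$'' to ``large $|\theta| d$''. A further subtlety is that $\Delta_u^{t+1}$ and $M_u^{t+1}$ depend on the same random pairs $(\Lambda_i^t, \Gamma_i^t)$, so the good event $\calG$ must be engineered to be controlled without conditioning on $\Delta_u^{t+1}$ being small, which is accomplished by defining $\calG$ purely in terms of the symmetric combinations $F(\Lambda_i^t)+F(\Gamma_i^t)$ and appealing to the magnetization bound of \prettyref{lmm:magnetization} for each $i$ separately.
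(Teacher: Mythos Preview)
Your approach has two genuine gaps, and neither is easily repaired.

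\textbf{The bad-event term is additive, not multiplicative.} On $\calG^c$ you invoke the trivial bound $(X_u^{t+1}-Y_u^{t+1})^2\le 4$ together with $\prob{\calG^c}\le \eexp^{-c\theta^2 d}$. This contributes an \emph{additive} $O(\eexp^{-c\theta^2 d})$ to $\expect{(X_u^{t+1}-Y_u^{t+1})^2}$, independent of $\expect{(X_u^t-Y_u^t)^2}$. The proposition is a pure multiplicative contraction valid for \emph{all} $t\ge t^\ast$; once $\expect{(X_u^t-Y_u^t)^2}$ has shrunk below $\eexp^{-c\theta^2 d}$ your inequality fails. There is no obvious way to tie $\calG^c$ back to $|X_i^t-Y_i^t|$, since $\calG$ is defined through the symmetric combinations $F(\Lambda_i^t)+F(\Gamma_i^t)$, which can be atypical even when $X_i^t=Y_i^t$.

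\textbf{The denominator is not large enough to beat the numerator on $\calG$.} You assert that $M_u^{t+1}$ is ``positive of order $|\theta|d$'', but the conditional mean of each summand $F(\Lambda_i^t)$ given $\tau_u=+$ is $(1-2\eta)\beta=\theta\beta\approx\theta^2$, so $\expectp{M_u^{t+1}}$ is of order $\theta^2 d$, not $|\theta|d$. Hence on $\calG$ the denominator is only $\eexp^{O(\theta^2 d)}$. Meanwhile $|\Delta_u^{t+1}|$ can be as large as $\frac{|\theta|}{1-\theta^2}\sum_i|X_i^t-Y_i^t|\le \frac{2|\theta|D}{1-\theta^2}$, so $\sinh^2(\Delta_u^{t+1})$ can be of order $\eexp^{O(|\theta|d)}$; for small $|\theta|$ this dominates $\eexp^{O(\theta^2 d)}$ and the ratio explodes. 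Replacing $\sinh^2(\Delta)$ by $\Delta^2$ is not justified without a priori control on $|\Delta|$, which you do not have.

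The paper's proof bypasses both issues by exploiting the \emph{product} structure of the recursion rather than the $\sinh/\cosh$ representation. Writing $(1-X_u^{t+1})/(1+X_u^{t+1})=\eexp^{-2h_u}\prod_i\frac{1-\theta X_i^t}{1+\theta X_i^t}$ and using the elementary inequality $|(1+x)^{-1}-(1+y)^{-1}|\le 4|x^{1/4}-y^{1/4}|$, one reduces to bounding $\expectp{(\prod_i A_i-\prod_i B_i)^2}$ with $A_i=\bigl(\frac{1-\theta X_i^t}{1+\theta X_i^t}\bigr)^{1/4}$. A telescoping product-difference bound turns this into terms carrying factors $m^{D-1}$ and $m^{D-2}$ with $m=\max\{\expectp{A^2},\expectp{B^2}\}$; the magnetization lemma gives $m\le 1-\theta^2/4$, so averaging over $D\sim\Pois(d)$ produces $\eexp^{-\theta^2 d/16}$ multiplicatively, with no additive remainder. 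The factor $\sqrt{\alpha(1-\alpha)}$ arises because $\eexp^{-h_u/2}$ survives in front of the product and $\expectp{\eexp^{-h_u}}=2\sqrt{\alpha(1-\alpha)}$.
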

\begin{proof}
Note that $ \expectp{(X_u^t - Y_u^t )^2} = \expectm{(X_u^t - Y_u^t )^2} =  \expect{(X_u^t - Y_u^t )^2}$. Hence, it suffices to show that
\begin{align}
\expectp{(X_u^{t+1} -Y_{u}^{t+1} )^2  } \le \sqrt{\alpha (1-\alpha)}  \expectp{(X_u^t - Y_u^t )^2}. \label{eq:desiredsmalltheta}
\end{align}
Fix $t^\ast (\theta, d, \alpha)$ as per se \prettyref{lmm:magnetization} and consider $t \ge t^\ast$.
Define $f = \eexp^{h_u} \prod_{i \in \partial u} (1+ \theta X_{i}^t ) $ and $g = \eexp^{-h_u}\prod_{i \in \partial u} (1 - \theta X_{i}^t )$, and $f'$ and $g'$ are
the corresponding quantities with $X$ replacing by $Y$.  Then it follows from the recursion \prettyref{eq:Magentizationrecursion} and \prettyref{eq:Magentizationrecursionfree} that
\begin{align}
|X_u^{t+1} -Y_{u}^{t+1}  | = \bigg| \frac{ f -  g }{f +  g} -  \frac{f' -  g' }{f' +  g'} \bigg| = 2 \; \bigg| \frac{1}{1+g/f} - \frac{1}{1+g'/f'}\bigg| \le 8 \; \bigg| \left(\frac{g}{f}\right)^{1/4} - \left(\frac{g'}{f'}\right)^{1/4}  \bigg|, \label{eq:magdiff}
\end{align}
where in the last inequality, we apply the following inequality with $s=1/4$:
\begin{align*}
|(1+x)^{-1}-(1+y)^{-1} | \le \frac{1}{s} |x^s - y^s |,
\end{align*}
which holds for all $0<s<1$ and $x,y>0.$
Let $A_i=\left(  \frac{1 - \theta X_{i}^t }{1 + \theta X_{i}^t } \right)^{1/4}$ and $B_i=\left( \frac{1 - \theta Y_{i}^t }{1 + \theta Y_{i}^t } \right)^{1/4}$ for all $i \in \partial u$. Then, it follows from \prettyref{eq:magdiff} that
\begin{align}
\expectp{(X_u^{t+1} -Y_{u}^{t+1} )^2 | \partial u } &\le 64 \expectp{ \eexp^{-h_u} \left( \prod_{i \in \partial u} A_i - \prod_{i  \in \partial u} B_i \right)^2 \bigg| \partial u} \nonumber \\
&= 128 \sqrt{\alpha (1-\alpha) } \expectp{ \left( \prod_{i \in \partial u} A_i - \prod_{i \in \partial u} B_i \right)^2 \bigg| \partial u},
\label{eq:magdiff2}
\end{align}
where the last equality holds because $h_u$ is independent of $\{ A_i, B_i\}_{i \in \partial u}$ conditional on $\tau_u$ and $\partial u$,  and $\expectp{\eexp^{-h_u}}= 2 \sqrt{\alpha (1-\alpha)}.$ 
It is shown in  \cite[Lemma 3.10]{MNS:2013a} than 
\begin{align}
\expectp{ \left( \prod_{i \in \partial u} A_i - \prod_{i \in \partial u} B_i \right)^2  \bigg| \partial u } \le & \frac{1}{2} \binom{D}{2} m^{D-2} \left( \expectp{A^2 - B^2} \right)^2 + D m^{D-1} \expectp{(A-B)^2}, \label{eq:proddiff}
\end{align}
where $D= | \partial u|$, $(A,B)$ has the same distribution as $(A_i,B_i)$ for $i \in \partial u$ and
$m=\max\{\expectp{A^2}, \expectp{B^2} \}.$  We further upper bound the right hand side of \prettyref{eq:proddiff} by
bounding $\expectp{A^2}$, $\expectp{B^2}$, and their differences.
\begin{lemma}\label{lmm:boundA2}
There is a $\theta_1^\ast>0$ such that if $|\theta| \le \theta_1^\ast$ and $\theta^2 d \ge C$ for a sufficiently large universal constant $C$, then
$ m \le 1- \theta^2/4.$
\end{lemma}
\begin{proof}
Note that $(1-x+  5x^2/8)^2 (1+x)= 1-x + x^2/4 + o(x^2)$ when $x \to 0$. 
Thus, there exists a universal constant $\theta_1^\ast>0$ such that
if $|x| \le \theta_1^\ast$, then
$\sqrt{\frac{1-x}{1+x} } \le 1- x+ 5x^2/8$.
It follows that for $\theta \le \theta_1^\ast$ and $i \in \partial u$,
\begin{align*}
\expectp{A_i^2  } & = \expectp{ \sqrt{ \frac{1- \theta X_{i}^t }{ 1+ \theta X_i^t } } } \\
 & \le 1- \theta \expectp{X_{i}^t } + \frac{5\theta^2}{8} \expectp{ (X_i^t)^2 } \\
& \le 1- \theta \expectp{X_{i}^t  } + \frac{5\theta^2}{8},
\end{align*}
where in the last inequality, we used the fact that $|X_{i}^t | \le 1$.
By definition, 
\begin{align*}
\expectp { X_{i}^t } = \expect{X_i^t | \tau_u=+} = (1-\eta) \expect{ X_{i}^t | \tau_{i} = +} + \eta  \expect{ X_{i}^t | \tau_{i} = - }
= (1-2\eta) \expectp{ X_{u}^t },
\end{align*}
where the last equality holds
because the distribution of $-X_{i}^t$ conditional on $\tau_{i}=-$ is the same as  the distribution of $X_{i}^t$ conditional on $\tau_{i}=+$,
and both of them are the same as the distribution of $X_{u}^t$ conditional on $\tau_u=+$.
Hence,
\begin{align*}
\expectp{A_i^2 } \le 1- \theta^2 \expectp{X_{u}^t } + \frac{5\theta^2}{8}.
\end{align*}
In view of \prettyref{lmm:magnetization} and the assumption that $\theta^2 d \ge C$ for a sufficiently large universal constant $C$,
$\expectp{X_{u}^t } \ge 7/8$. Thus  $\expectp{A_i^2 } \le 1- \theta^2/4$. Similarly, we can show
$\expectp{B_i^2 } \le 1- \theta^2/4$ and the lemma follows.
\end{proof}

The following lemma bounds $\expectp{A^2 - B^2 }$ from the above. It is proved
in \cite[Lemma 3.13]{MNS:2013a} and we provide a proof below for completeness.

\begin{lemma}\label{lmm:boundABdiffer}
There is a universal constant $\theta_2^\ast>0$ such that for all $|\theta| \le \theta_2^\ast$,
\begin{align*}
\expectp{A^2 - B^2 } \le 3 \theta^2
\sqrt{\expectp{ (X_u^t - Y_u^t)^2}}.
\end{align*}
\end{lemma}
\begin{proof}
For $i \in \partial u$, the distribution of $A_i$ conditional on $\tau_i=+$ is equal to the distribution of $A_i^{-1}$
conditional on $\tau_i=-$. Hence,
\begin{align}
\expectp{ A_i ^2 } &= (1-\eta) \expect{ A_i^2 | \tau_i=+ } + \eta \expect{ A_i^2 | \tau_i=- } \nonumber \\
& = \expect{ (1-\eta) A_i^2 + \eta A_i ^{-2} | \tau_i=+ }. \label{eq:A2bound1}
\end{align}
Note that
\begin{align}
 (1-\eta) A_i^2 + \eta A_i^{-2} &=  (1-\eta) \left( \frac{1- \theta X_i^t }{ 1+ \theta X_i^t }\right)^{1/2} + \eta  \left( \frac{1+ \theta X_i^t }{ 1- \theta X_i^t }\right)^{1/2}
 \nonumber \\
 & = \frac{ (1-\eta) (1- \theta X_i^t) + \eta (1+ \theta X_i^t) }{ \sqrt{ (1- \theta X_i^t ) (1+ \theta X_i^t  )} }\nonumber \\
 & = \frac{1- \theta^2 X_i^t }{\sqrt{1- \theta^2 (X_i^t)^2 } }, \label{eq:A2bound}
\end{align}
where we used the equality that $1-2\eta= \theta$.  Let $f(x) = \frac{1-\theta^2 x}{ \sqrt{1- \theta^2 x^2 } }$. One can check that there exists a
universal constant $\theta^\ast_2>0$ such that if $|\theta| \le \theta^\ast_2$,
$f'(x) \le 3 \theta^2$ for all $x \in [-1,1]$. Therefore,
\begin{align*}
| f(X_i^t) - f(Y_i^t) | \le 3 \theta^2 |X_i^t - Y_i^t|.
\end{align*}
Combing the last displayed equation with \prettyref{eq:A2bound1} and \prettyref{eq:A2bound} yields that
\begin{align*}
\expectp{ A_i^2 - B_i^2}  \le 3 \theta^2 \expect{|X_i^t - Y_i^t| \; \big| \tau_i=+ } \le 3 \theta^2 \sqrt{ \expect{ (X_i^t - Y_i^t)^2 \; \big| \tau_i=+ } }.
\end{align*}
Finally,  in view of  the fact that $ \expect{(X_i^t - Y_i^t)^2 | \tau_i=+ } =\expectp{ (X_u^t - Y_u^t)^2}$ and
$\expectp{ A_i^2 - B_i^2}  =\expectp{ A^2 - B^2}$, the lemma follows.
\end{proof}

\begin{lemma}\label{lmm:boundABdiffersquare}
There is a universal constant $\theta^\ast_3>0$ such that for all $|\theta| \le \theta^\ast_3$,
\begin{align*}
\expectp{ (A - B)^2} \le \theta^2 \expectp{ (X_u^t - Y_u^t)^2 }.
\end{align*}
\end{lemma}
\begin{proof}
Let $f(x)= (\frac{1-\theta x}{1+\theta x})^{1/4}$ for $x \in [-1,1]$. Then
\begin{align*}
| f'(x) | = \frac{| \theta| }{2 (1+ \theta x)^{5/4} (1-\theta x)^{3/4}} \le \frac{|\theta|}{2 (1- |\theta|)^2}, \quad -1 \le x \le 1.
\end{align*}
Hence, there exists a universal constant $\theta^\ast_3>0$ such that for all $|\theta| \le \theta^\ast_3$, $ |f'(x) | \le |\theta|$ for all $x \in [-1,1]$.
It follows that  for $i \in \partial u$,
$( A_i - B_i )^2 \le \theta^2  \left( X_i^t - Y_i^t \right)^2.$
Therefore,
\begin{align*}
\expectp{ (A - B)^2}  = \expectp{ (A_i - B_i)^2} \le \theta^2 \expectp{  \left( X_i^t - Y_i^t \right)^2}.
\end{align*}
By definition, 
\begin{align*}
\expectp{  \left( X_i^t - Y_i^t \right)^2}&  =(1-\eta) \expect{ \left( X_i^t - Y_i^t \right)^2 | \tau_i=+} + \eta  \expect{ \left( X_i^t - Y_i^t \right)^2 | \tau_i=-} \\
& = \expect{ \left( X_i^t - Y_i^t \right)^2 | \tau_i=+} = \expectp{\left( X_u^t - Y_u^t \right)^2 }.
\end{align*}
The lemma follows by combining the last two displayed equations.
\end{proof}

Finally, we finish the proof of \prettyref{prop:smalltheta}. Let $\theta^\ast = \min \{ \theta^\ast_1, \theta^\ast_2, \theta^\ast_3 \}$.
It follows from \prettyref{lmm:boundA2} that $m \le  1- \frac{\theta^2}{4} \le \eexp^{- \theta^2/4}$.
Assembling \prettyref{eq:proddiff}, \prettyref{lmm:boundABdiffer},  and \prettyref{lmm:boundABdiffersquare} yields that
\begin{align}
\expectp{ \left( \prod_{i \in \partial u} A_i - \prod_{i \in \partial u}  B_i \right)^2 \bigg|   \partial u }
&\le  \left( \frac{9}{4} D^2 \theta^4  \eexp^{-\theta^2 (D-2) /4 } + D \theta^2  \eexp^{-\theta^2 (D-1) /4 } \right) \expectp{ (X_u^t - Y_u^t)^2  } \nonumber \\
& \le c' \eexp^{-\theta^2 D/ 8}  \expectp{ (X_u^t - Y_u^t)^2  } ,
\end{align}
for some universal constant $c'$, where the last inequality holds because $D \theta^2 \eexp^{-\theta^2 D/16}$  is bounded from above by some universal constant.
Since $D \sim \Pois(d)$, it follows that
$\expect{ \eexp^{sD} } = \exp \left( d (\eexp^s -1 ) \right)$.
Thus $\expect{ \eexp^{-\theta^2 D/8} } = \exp \left( d (\eexp^{-\theta^2/8} -1 ) \right)$. Since $\eexp^{ -x } \le 1-x/2$ for $x \in [0,1/2]$, we have that
$\expect{ \eexp^{-\theta^2 D/8} } \le \exp(- d\theta^2 /16 )$. Hence
\begin{align*}
\expectp{ \left( \prod_{i \in \partial u} A_i - \prod_{i \in \partial u}  B_i \right)^2} \le
c'  \eexp^{- d\theta^2 /16 } \; \expectp{ (X_u^t - Y_u^t)^2  }.
\end{align*}
Therefore, there exists a universal constant $C>0$ such that if $d \theta^2 \ge C$, then
\begin{align*}
\expectp{ \left( \prod_{i \in \partial u} A_i - \prod_{i \in \partial u}  B_i \right)^2} \le \frac{1}{128} \expectp{ (X_u^t - Y_u^t)^2  }.
\end{align*}
Combing the last displayed equation together with \prettyref{eq:magdiff2} yields the desired \prettyref{eq:desiredsmalltheta}
and the proposition follows.
\end{proof}

\subsection{Large $|\theta|$ regime}
In this subsection, we focus on the regime where $|\theta| \ge \theta^\ast$. In this regime, to prove \prettyref{thm:main}, it
is sufficient to prove the following proposition.
\begin{proposition}\label{prop:largetheta}
Assume $1/c_0 \le a/b \le c_0$ for some positive constant $c_0$. 
For any $\theta^\ast \in (0,1)$, there exist a $d^\ast =d^\ast(\theta^\ast, c_0)$ such that
for all $\theta \ge \theta^\ast$, $d \ge d^\ast$, and
$t \ge t^\ast(\theta, d, \alpha)$,
\begin{align*}
\expect{ \sqrt{ |X_u^{t+1} -Y_{u}^{t+1} |}  } \le \sqrt{\alpha (1-\alpha)}  \expect{ \sqrt{| X_u^t - Y_u^t | } }.
\end{align*}
\end{proposition}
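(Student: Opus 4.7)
The plan is to mirror the structure of the small $|\theta|$ argument in \prettyref{prop:smalltheta} but, crucially, to use the exponent $s=1$ in the inequality $|(1+x)^{-1}-(1+y)^{-1}|\le s^{-1}|x^s-y^s|$ (so that the ``bad'' factor $e^{-sh_u}$ becomes $e^{-h_u}$, whose $\expectp{}$ equals $2\sqrt{\alpha(1-\alpha)}$) and to pass to $\sqrt{|X-Y|}$ in place of $(X-Y)^2$. Set $C_i=(1-\theta X_i^t)/(1+\theta X_i^t)$ and $D_i=(1-\theta Y_i^t)/(1+\theta Y_i^t)$, so that $g/f=e^{-2h_u}\prod_i C_i$ and $g'/f'=e^{-2h_u}\prod_i D_i$. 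The $s=1$ inequality plus $(1+x)(1+y)\ge 1$ for $x,y\ge 0$ gives $|X_u^{t+1}-Y_u^{t+1}|\le 2\,e^{-2h_u}\,|\prod C_i-\prod D_i|$; taking square roots and conditional expectation (using that $h_u$ is independent of everything below given $\tau_u$) yields
\[
\expectp{\sqrt{|X_u^{t+1}-Y_u^{t+1}|}}\le 2\sqrt{2}\,\sqrt{\alpha(1-\alpha)}\;\expectp{\sqrt{\bigl|{\textstyle\prod_i} C_i-{\textstyle\prod_i}D_i\bigr|}}.
\]

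Next I would telescope $\prod C_i-\prod D_i=\sum_k(\prod_{j<k}C_j)(C_k-D_k)(\prod_{j>k}D_j)$ and use $\sqrt{\sum_k a_k}\le\sum_k\sqrt{a_k}$ for nonnegative $a_k$. By the conditional independence across $\partial u$,
\[
\expectp{\sqrt{|{\textstyle\prod}C-{\textstyle\prod}D|}\,\big|\,|\partial u|=D}\le D\,m^{D-1}\,\expectp{\sqrt{|C-D|}},\qquad m:=\max\bigl(\expectp{\sqrt{C_j}},\,\expectp{\sqrt{D_j}}\bigr).
\]
Using the identity $|C_k-D_k|=2\theta|X_k-Y_k|/\bigl((1+\theta X_k)(1+\theta Y_k)\bigr)$ and $\theta|X_k|,\theta|Y_k|\le\theta$, one gets $\expectp{\sqrt{|C-D|}}\le \bigl(\sqrt{2\theta}/(1-\theta)\bigr)\expectp{\sqrt{|X_u^t-Y_u^t|}}$, where $\theta$ is bounded away from $1$ by the assumption $a/b\le c_0$. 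Since $|\partial u|\sim\mathrm{Pois}(d)$, $\expect{Dm^{D-1}}=d\,e^{d(m-1)}$, which is the key small factor.

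The central estimate is then $m\le \rho<1$ with $\rho$ depending only on $\theta^\ast$ and $c_0$. An argument analogous to \prettyref{eq:A2bound} writes
\[
\expectp{\sqrt{C_j}}=(1-\eta)\expect{f(\theta X_u^t)\mid \tau_u=+}+\eta\expect{f(-\theta X_u^t)\mid\tau_u=+},\qquad f(x)=\sqrt{(1-x)/(1+x)}.
\]
Applying \prettyref{lmm:magnetization} (with $t\ge t^\ast(\theta,d,\alpha)$), $\expectp{X_u^t}$ can be made arbitrarily close to $1$; splitting the expectation on $\{X_u^t\ge 1-\delta\}$ versus its complement and using that $f$ is bounded on $[-\theta,\theta]$ with $\theta\le(c_0-1)/(c_0+1)<1$ shows $\expectp{\sqrt{C_j}}$ can be made arbitrarily close to the deterministic limit $(1-\eta)f(\theta)+\eta f(-\theta)=\sqrt{1-\theta^2}$ by enlarging $d$ and $t$. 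This uses the clean algebraic cancellation afforded by $1-2\eta=\theta$. The same bound holds for $\expectp{\sqrt{D_j}}$, yielding $m\le\rho:=\sqrt{1-(\theta^\ast)^2}+o_d(1)<1$ uniformly in $\theta\in[\theta^\ast,(c_0-1)/(c_0+1)]$.

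Assembling the pieces, the coefficient in front of $\expectp{\sqrt{|X_u^t-Y_u^t|}}$ becomes $\sqrt{\alpha(1-\alpha)}$ multiplied by $2\sqrt{2}\cdot \sqrt{2\theta}/(1-\theta)\cdot d\,e^{-d(1-\rho)}$. Since $de^{-d(1-\rho)}\to 0$ while the other factors are bounded uniformly in the allowed range of $\theta$, choosing $d\ge d^\ast(\theta^\ast,c_0)$ makes the product at most $1$, and the symmetry $\expect{}=\expectp{}=\expectm{}$ concludes the proof. I expect the main technical obstacle to be the quantitative control in the third paragraph: $f$ blows up at $-1$, so one needs \prettyref{lmm:magnetization} to bound the probability that $\theta X_u^t$ approaches $-\theta$, and one must carefully exploit the two-sided assumption $1/c_0\le a/b\le c_0$ (equivalently $\theta$ bounded away from $1$) to keep $f(-\theta)$ finite and $\sqrt{2\theta}/(1-\theta)$ under control.
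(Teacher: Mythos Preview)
Your approach is correct and essentially the same as the paper's. The only cosmetic difference is that the paper bounds $|g(x)-g(y)|$ via the gradient $\partial g/\partial x_i\le (c_0+1)^2 e^{-2h_u}\prod_{j\neq i}\frac{1-\theta x_j}{1+\theta x_j}$ together with convexity of this product along the segment from $x$ to $y$, whereas you use the $s=1$ inequality and telescope $\prod C_i-\prod D_i$; both routes produce the same sum-over-children structure, the same key contraction estimate $\expectp{\sqrt{(1-\theta X_i^t)/(1+\theta X_i^t)}}\le\lambda<1$ (isolated in the paper as \prettyref{lmm:contraction}, with exactly the $\sqrt{1-\theta^2}$ limiting value you identified via $(1-\eta)f(\theta)+\eta f(-\theta)$), and the same Poisson endgame $\expect{D\lambda^{D-1}}\to 0$.
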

\begin{proof}
Note that $\expect{ \sqrt{| X_u^t - Y_u^t | }} = \expectp{ \sqrt{| X_u^t - Y_u^t | }}=\expectm{ \sqrt{| X_u^t - Y_u^t | }}$. Thus
it suffices to show that
\begin{align}
\expectp{ \sqrt{ |X_u^{t+1} -Y_{u}^{t+1} |}  } \le \sqrt{\alpha (1-\alpha)}  \expectp{ \sqrt{| X_u^t - Y_u^t | }}. \label{eq:desiredlargetheta}
\end{align}
Fix $t^\ast (\theta, d, \alpha)$ as per se \prettyref{lmm:magnetization} and consider $t \ge t^\ast$.
Let $g_1=\prod_{i \in \partial u} (1+\theta x_i )$, $g_2=\prod_{i \in \partial u} (1- \theta x_i)$, and $g( \{x_i\}_{ i \in \partial u} )= \frac{\eexp^{h_u} g_1 - \eexp^{-h_u} g_2 }{\eexp^{h_u} g_1 + \eexp^{-h_u} g_2}$.
Then $X_{u}^{t+1}=g( \{ X_{i}^t \}_{i \in \partial u} )$ and $Y_{u}^{t+1}=g( \{ Y_{i}^t \}_{i \in \partial u} )$.
Note that $\frac{\partial g_1}{\partial x_i} = \frac{\theta g_1}{1+\theta x_i}$ and $\frac{\partial g_2}{\partial x_i} = -\frac{\theta g_2}{1+\theta x_i}$. It follows that for any $i \in \partial u$,
\begin{align*}
\frac{\partial g}{\partial x_i} = \frac{4 \theta g_1 g_2}{ (1-\theta^2 x_i^2 ) ( g_1 \eexp^{h_u} + g_2 \eexp^{-h_u} )^2}
\le  \frac{4 \theta  g_2}{ (1-\theta^2 x_i^2 ) g_1 \eexp^{2h_u} } =\eexp^{-2h_u} \frac{4\theta}{ (1+\theta x_i)^2} \prod_{j \in \partial u: j \neq i} \frac{1-\theta x_j }{1+\theta x_j},
\end{align*}
where the inequality holds due to $g_2 \ge 0$, and the last equality holds by the definition of  $g_1$ and $g_2$.
By assumption, $\frac{1}{c_0} \le \frac{a}{b} \le c_0$ and thus $|\theta| \le \frac{c_0-1}{c_0+1}$.
Since $(1+\theta x_i ) \ge 1-|\theta| \ge \frac{2}{c_0+1}$  for $|x_i | \le 1$, it follows that for any $i \in \partial u$,
 \begin{align*}
 \frac{\partial g}{\partial x_i} (x) \le  (c_0+1)^2 \eexp^{-2h_u}  \prod_{j \in \partial u: j \neq i} \frac{1-\theta x_j }{1+\theta x_j}  \triangleq  (c_0+1)^2 \eexp^{-2h_u} r_i (x).
 \end{align*}
Note that $r_i (x)$ is convex in $x$ and thus
for any $x, y \in [-1,1]^{|\partial u| }$ and any $0 \le \delta \le 1$,
\begin{align*}
\frac{\partial g}{\partial x_i}  ( \delta x+ (1-\delta) y) \le  (c_0+1)^2 \eexp^{-2h_u} r_i (\delta x+ (1-\delta) y) \le  (c_0+1)^2 \eexp^{-2h_u}\max \{   r_i (x),  r_i (y) \}.
\end{align*}
It follows that
\begin{align*}
| g(x)- g(y) | \le (c_0+1)^2 \eexp^{-2h_u}  \sum_{i \in \partial u} |x_i -y_i |   \max \{  r_i (x),  r_i (y) \}.
\end{align*}
Hence,
\begin{align*}
|X_{u}^{t+1} - Y_u^{t+1} | \le  (c_0+1)^2  \eexp^{-2h_u}  \sum_{i \in \partial u} | X_{i}^t - Y_{i}^t | \max \{  r_i (X), r_i (Y) \}.
\end{align*}
Note that $r_i (X), r_i (Y) $ are functions of  $\{X_{j}^t, Y_{j}^t \}$ for all $j \in \partial u \backslash \{i\}$, and
thus $r_i(X), r_i(Y)$ are independent of $X_i^t$ and $Y_i^t$ conditional on $\tau_u$ and $\partial u$. Moreover,
$h_u$ is independent of $\{X_{i}^t, Y_{i}^t \}_{i \in \partial u}$ conditional on $\tau_u$ and $\partial u$.
Thus it follows from the  last displayed equation that
\begin{align}
& \expectp{ \sqrt{ |X_{u}^{t+1} - Y_u^{t+1} } | \partial u  } \nonumber \\
& \le
 (c_0+1) \expectp{\eexp^{-h_u}} \sum_{i \in \partial u} \expectp{ \sqrt{ | X_{i}^t - Y_{i}^t | } } \expectp{ \max \{  \sqrt{r_i (X)}, \sqrt{r_i (Y)} \} }   \nonumber \\
& = 2  (c_0+1)\sqrt{\alpha (1-\alpha)}  \sum_{i \in \partial u} \expectp{ \sqrt{ | X_{i}^t - Y_{i}^t | } } \expectp{ \max \{  \sqrt{r_i (X)}, \sqrt{r_i (Y)}  \} } \nonumber \\
& = 2  (c_0+1) \sqrt{\alpha (1-\alpha)} \; D \; \expectp{ \sqrt{ | X_{u}^t - Y_{u}^t | } }  \expectp{ \max \{  \sqrt{r (X)}, \sqrt{r (Y)}  \} }, \label{eq:magdiff3}
\end{align}
where $D= |\partial u|$; $( r(X), r(Y)) $ has the same distribution as $(r_i(X), r_i(Y))$ for $i \in \partial u$;
the first equality follows due to $\expectp{\eexp^{-h_u}} = 2\sqrt{\alpha (1-\alpha)} $;
the last equality holds because for all $i \in \partial u$,
\begin{align*}
\expectp{ \sqrt{ | X_{i}^t - Y_{i}^t | } }  & = (1-\eta) \expect{  \sqrt{ | X_{i}^t - Y_{i}^t | } | \tau_i=+ }+ \eta \expect{  \sqrt{ | X_{i}^t - Y_{i}^t | } | \tau_i=- }  \\
& =  \expect{  \sqrt{ | X_{i}^t - Y_{i}^t | } | \tau_i=+ } =\expectp{ \sqrt{ | X_{u}^t - Y_{u}^t | } }.
\end{align*}
To proceed, we need to bound  $\expectp{ \max \{  \sqrt{r (X)}, \sqrt{r (Y)}  \} }$ from the above. 
\begin{lemma}\label{lmm:contraction}
 Assume $1/c_0 \le a/b \le c_0$ for some positive constant $c_0$. For any $0< \theta^\ast <1$, there exists a $d^\ast = d^\ast (\theta^\ast, c_0)$ and a $\lambda= \lambda(\theta^\ast)  \in (0,1)$ such that
for all $|\theta| \ge \theta^\ast$, $d \ge d^\ast$, $t \ge t^\ast(\theta, d, \alpha)$, and $i \in \partial u$,
\begin{align*}
\expectp{ \sqrt{ \frac{1-\theta X_{i}^t }{1+ \theta X_{i}^t } } } \le \lambda  ,
\end{align*}
and the same is also true for $Y_i^t$.
\end{lemma}
\begin{proof}
By assumption $\frac{1}{c_0} \le \frac{a}{b} \le c_0$,  it follows that $|\theta| \le \frac{c_0-1}{c_0+1}$ and $ \frac{1}{c_0+1} \le \eta \le \frac{c_0}{c_0+1}$.
We prove the claim for $X_i^t$; the claim for $Y_i^t$ follows similarly.
Fix $\epsilon=\epsilon( \theta^\ast, c_0)<1$ to be determined later. It follows from \prettyref{lmm:magnetization} and the Markov inequality that
for all $t \ge  t^\ast(\theta, d, \alpha)$
\begin{align*}
\prob{1-X_u^t \ge \epsilon | \sigma_u=+ } \le \frac{ 1- \expectp{X_u^t} }{ \epsilon } \le \frac{2 C \eta}{\theta^2 d \epsilon} + \frac{2 \eexp^{-C d} }{\epsilon},
\end{align*}
where $C$ is a universal constant.  Hence, there exists a $d^\ast = d^\ast (\theta^\ast, c_0)$ such that for all $d \ge d^\ast$ and $\theta \ge \theta^\ast$,
$\prob{1-X_u^t \ge \epsilon | \sigma_u=+ }  \le \epsilon$, \ie, $\prob{X_u^t \ge 1- \epsilon | \sigma_u=+ } \ge 1-\epsilon$.
Since the distribution of $X_u^t$ conditional
on $\sigma_u$ is the same as the distribution of $X_i^t$ conditional on $\sigma_i$ for $i \in \partial u$, it follows that $\prob{X_i^t \ge 1- \epsilon | \sigma_i=+ } \ge 1-\epsilon$.
By symmetry,  the distribution of $X_i^t$ conditional on $\sigma_i=-$ is the same as the distribution of $-X_i^t$ conditional on $\sigma_i=+$, 
and thus $\prob{X_i^t \le \epsilon -1 | \sigma_i=- } \ge 1-\epsilon$.
Hence,
\begin{align*}
\prob{X_i^t \ge 1- \epsilon | \sigma_u=+ } & \ge \prob{X_i^t \ge 1- \epsilon, \sigma_i=+ | \sigma_u=+ }   \\
& = \prob{ \sigma_i=+ | \sigma_u=+} \prob{X_i^t \ge 1- \epsilon | \sigma_i=+}  \\
& \ge (1-\eta) (1-\epsilon) \ge 1-\eta -\epsilon.
\end{align*}
Similarly,
\begin{align*}
\prob{X_i^t \le \epsilon -1 | \sigma_u=+ } & \ge  \prob{X_i^t \le  \epsilon -1, \sigma_i=- | \sigma_u=+ }  \\
& = \prob{ \sigma_i= - | \sigma_u=+} \prob{X_i^t \le  \epsilon -1 | \sigma_i=-} \\
& \ge \eta (1-\epsilon) \ge \eta -\epsilon.
\end{align*}
Let $f(x)= \sqrt{ \frac{1- \theta x }{1+ \theta x} }$. Then $f$ is non-increasing in $x$ if $\theta \ge 0$ and non-decreasing if $\theta<0$. It follows that 
\begin{align*}
\expectp{ f(X_i^t) } & \le  f ( 1-\epsilon ) (1- \eta -\epsilon) + f(-1) (\eta +\epsilon),  \quad \theta \ge 0, \\
\expectp{ f(X_i^t) }  & \le  f ( 1 ) (1- \eta + \epsilon) + f(\epsilon -1 ) (\eta - \epsilon), \quad \theta<0.
\end{align*}
Notice that
\begin{align*}
f(1-\epsilon) \le f(1) + \epsilon \sup_{x \in [1-\epsilon, 1]} |f'(x) |  & \le \sqrt{\frac{ \eta}{1-\eta}} + \frac{(c_0+1)^2 \epsilon }{4}, \\
f( \epsilon -1) \le f(-1) + \epsilon \sup_{x \in [-1, \epsilon-1] } | f'(x) | & \le \sqrt{\frac{ 1- \eta}{\eta}} + \frac{(c_0+1)^2 \epsilon }{4},
\end{align*}
where the last inequality holds because  for $x \in [-1, 1]$,
\begin{align*}
| f'(x) | = \frac{\theta^2 |x| }{ (1+ \theta x)^{3/2} (1-\theta x)^{1/2} } \le \frac{1 } { (1-|\theta| )^{2} } \le  \frac{(c_0+1)^2}{4}.
\end{align*}
Hence,
\begin{align*}
f ( 1-\epsilon ) (1- \eta -\epsilon) + f(-1) (\eta +\epsilon) & \le
f (1) (1-\eta) + f(-1) \eta +  \frac{(c_0+1)^2 \epsilon }{4} + \epsilon f(-1) \\
& = 2 \sqrt{ \eta (1-\eta) }  +  \frac{(c_0+1)^2 \epsilon }{4} + \epsilon f(-1) \\
& \le \sqrt{1 - (\theta^\ast)^2 } + \frac{(c_0+1)^2 \epsilon }{4} + \epsilon \sqrt{c_0},
\end{align*}
where the last inequality holds because  $\eta = \frac{1-\theta}{2}$, $|\theta| \ge \theta^\ast$, and $f(-1) \le \sqrt{c_0}$.
Similarly,
\begin{align*}
 f ( 1 ) (1- \eta + \epsilon) + f(\epsilon -1 ) (\eta - \epsilon) & \le
f (1) (1-\eta + \epsilon ) + f(-1) \eta +  \frac{(c_0+1)^2 \epsilon  }{4}  \\
& = 2 \sqrt{ \eta (1-\eta) }  +  \frac{(c_0+1)^2 \epsilon }{4} + \epsilon f(1) \\
& \le \sqrt{1 - (\theta^\ast)^2 } + \frac{(c_0+1)^2 \epsilon }{4} + \epsilon \sqrt{c_0}.
\end{align*}
In conclusion, for both of case $\theta \ge 0$ and case $\theta<0$, we have shown that
\begin{align*}
\expectp{ f(X_i^t) } \le \sqrt{1 - (\theta^\ast)^2 } + \frac{(c_0+1)^2 \epsilon }{4} + \epsilon \sqrt{c_0}.
 \end{align*}
Therefore,
there exists an $\epsilon^\ast=\epsilon^\ast( \theta^\ast, c_0)$ such that $\expectp{ f(X_i^t) } \le \lambda$ for some $\lambda=\lambda(\theta^\ast) \in (0,1) $.
\end{proof}
Finally, we finish the proof of \prettyref{prop:largetheta}. It follows from \prettyref{lmm:contraction} that
\begin{align*}
\expectp{ \max \{  \sqrt{ r(X)}, \sqrt{r (Y)} \}}  \le \expectp{  \sqrt{r (X)} + \sqrt{r(Y)}  }  \le 2 \lambda^{D-1}.
\end{align*}
Combing the last displayed equation with \prettyref{eq:magdiff3} yields
\begin{align*}
 \expectp{ \sqrt{ |X_{u}^{t+1} - Y_u^{t+1} } | \partial u  }   \le 4 (c_0+1)   \sqrt{\alpha (1-\alpha)}  D  \lambda^{D-1}  \expectp{ \sqrt{ | X_{u}^t - Y_{u}^t | } }  .
\end{align*}
Thus,
\begin{align*}
 \expectp{ \sqrt{ |X_{u}^{t+1} - Y_u^{t+1} } }   \le  4 (c_0+1)  \sqrt{\alpha (1-\alpha)}  \expect{D   \lambda^{D-1}}  \expectp{ \sqrt{ | X_{u}^t - Y_{u}^t | } }.
\end{align*}
By Cauchy-Schwarz inequality and $D \sim \Pois(d)$,
\begin{align*}
\expect{D   \lambda^{D} } \le \sqrt{\expect{D^2} } \sqrt{\expect{\lambda^{2D}} } = \sqrt{2d} \eexp^{d(\lambda^2 -1)/2}.
\end{align*}
Combing the last two displayed equation, we get that
\begin{align*}
 \expectp{ \sqrt{ |X_{u}^{t+1} - Y_u^{t+1} } }   \le  4 (c_0+1) \sqrt{d} \eexp^{d(\lambda^2 -1)/2} \lambda^{-1} \sqrt{\alpha (1-\alpha)}  \expectp{ \sqrt{ | X_{u}^t - Y_{u}^t | } }.
\end{align*}
Since $\lambda=\lambda(\theta^\ast) \in (0,1)$, there exists a $d^\ast(\theta^\ast, c_0)$ such that for all $d \ge d^\ast$, 
the desired \prettyref{eq:desiredlargetheta} holds
and hence the proposition follows.
\end{proof}

\section{Optimality of local BP  when $\alpha$ is small} \label{sec:accurateSI}
\begin{theorem} \label{thm:regularaccurateSI}
There exists a constant $0<\alpha^\ast<1/2 $ depending only on $c_0$ such that if $\alpha \le \alpha^\ast$, then
\begin{align*}
\lim_{t \to \infty} \limsup_{n \to \infty} \expect{|X_u^t- Y_u^t| } =0.
\end{align*}
\end{theorem}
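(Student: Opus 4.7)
My plan is to mirror the argument of \prettyref{prop:smalltheta}, extracting the contraction factor $\sqrt{\alpha(1-\alpha)}=\tfrac12\expectp{\rexp{-h_u}}$ from the recursion (which is small when $\alpha$ is small) and using it in place of the high-SNR assumption of \prettyref{sec:highsnr}. The key observation is that for small $\alpha$, the side information alone already yields strong magnetization at every depth, bypassing the need to wait for the recursion to ``warm up'' as in \prettyref{lmm:magnetization}.

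First I would establish the depth-independent magnetization bound
\[
\expectp{X_u^t}\ge 1-4\alpha,\qquad \expectp{Y_u^t}\ge 1-4\alpha \qquad(t\ge 0).
\]
Since $\sgn(X_u^t)$ is Bayes-optimal, it does at least as well as the trivial estimator $\tilde\tau_u$, which attains success probability $1-\alpha$; this yields $\prob{X_u^t<0\mid\tau_u=+}\le\alpha$ and, via $p^\ast_{T^t}=\tfrac12\expect{|X_u^t|}+\tfrac12$, also $\expectp{|X_u^t|}\ge 1-2\alpha$. Combining these with the decomposition $\expectp{X_u^t}=\expectp{|X_u^t|}-2\expectp{|X_u^t|\indc{X_u^t<0}}$ and the trivial bound $|X_u^t|\le 1$ gives the claim; the same proof works for $Y_u^t$.

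Next, running the computations in the proof of \prettyref{prop:smalltheta} through \prettyref{eq:magdiff2} gives
\[
\expectp{(X_u^{t+1}-Y_u^{t+1})^2\mid\partial u}\le 128\sqrt{\alpha(1-\alpha)}\;\expectp{\Big(\prod_{i\in\partial u}A_i-\prod_{i\in\partial u}B_i\Big)^2\Big|\partial u}.
\]
Bounding the right-hand side via \prettyref{eq:proddiff} together with the Lipschitz estimates $\expectp{(A-B)^2}\le C_{c_0}\expectp{(X_u^t-Y_u^t)^2}$ and $|\expectp{A^2-B^2}|\le C_{c_0}\sqrt{\expectp{(X_u^t-Y_u^t)^2}}$ (direct extensions of \prettyref{lmm:boundABdiffer} and \prettyref{lmm:boundABdiffersquare}, whose constants depend only on $|\theta|\le(c_0-1)/(c_0+1)$) reduces matters to bounding $m=\max\{\expectp{A^2},\expectp{B^2}\}$. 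The identity $\expectp{A^2}=\expectp{(1-\theta^2 X_u^t)/\sqrt{1-\theta^2(X_u^t)^2}}$ derived inside \prettyref{lmm:boundABdiffer}, combined with Lipschitz continuity of the integrand near $x=1$ where it equals $\sqrt{1-\theta^2}<1$, upgrades the magnetization estimate to $m\le\sqrt{1-\theta^2}+K_{c_0}\alpha$. For $\alpha^\ast$ sufficiently small $m<1$, standard Poisson MGF bounds yield $\expect{D^2 m^{D-2}}+\expect{Dm^{D-1}}\le K'$, and the recursion becomes a strict contraction $\expectp{(X_u^{t+1}-Y_u^{t+1})^2}\le C\sqrt{\alpha(1-\alpha)}\,\expectp{(X_u^t-Y_u^t)^2}$; iterating and applying Cauchy--Schwarz gives the theorem.

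The principal obstacle is making $\alpha^\ast$ depend only on $c_0$, not on a lower bound for $|\theta|$. As $\theta\to 0$ the gap $1-m\gtrsim\theta^2/2$ shrinks and $\sup_d d^2\rexp{-d(1-m)}\sim(1-m)^{-2}\sim\theta^{-4}$ diverges, so the constant $K'$ above is not uniform in $\theta$. To close the argument I would split at a threshold $\theta_0=\theta_0(c_0)$: for $|\theta|\ge\theta_0$ the analysis above gives uniform contraction in $(a,b)$; for $|\theta|\le\theta_0$ I would argue directly that both $X_u^t$ and $Y_u^t$ are driven close to $\tilde\tau_u(1-2\alpha)$, since each summand in $\Lambda_u^{t+1}-h_u=\sum F(\Lambda_i^t)$ has magnitude $\le\beta=O(\theta)$ while $h_u=\pm\gamma$ dominates with probability $\ge 1-\alpha$; combining the sharpened Lipschitz bound $|F(\Lambda)-F(\Gamma)|\le\tfrac{|\theta|}{1-\theta^2}|\tanh\Lambda-\tanh\Gamma|$ with the strong contractivity of $\tanh$ around $\pm\gamma$ gives a uniform $O(\alpha+\theta_0)$ bound on $\expect{|X_u^t-Y_u^t|}$, which can be driven to zero by tuning $\theta_0$ and $\alpha^\ast$ jointly in terms of $c_0$ alone.
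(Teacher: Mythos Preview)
Your magnetization bound $\expectp{X_u^t},\expectp{Y_u^t}\ge 1-4\alpha$ is exactly what the paper proves, and your extraction of the factor $\sqrt{\alpha(1-\alpha)}$ from $\expectp{\rexp{-h_u}}$ is the right mechanism. However, the way you handle the $\theta\to 0$ difficulty is a genuine gap.

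The proposed small-$\theta$ workaround does not work. First, the heuristic ``each summand $F(\Lambda_i^t)$ has magnitude $\le\beta=O(\theta)$ so $h_u$ dominates'' ignores that the number of summands is $\Pois(d)$ with $d$ unbounded in the regime $a=n^{o(1)}$; for $d\theta\gg\gamma$ the field $h_u$ need not dominate at all. Second, even if you could prove a uniform bound $\expect{|X_u^t-Y_u^t|}=O(\alpha+\theta_0)$, this is a fixed positive number once $\alpha\in(0,\alpha^\ast]$ and $|\theta|\in(0,\theta_0)$ are given, so it cannot yield $\lim_{t}\limsup_n\expect{|X_u^t-Y_u^t|}=0$. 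Tuning $\theta_0,\alpha^\ast$ does not help: they must be fixed in terms of $c_0$ before the limit is taken.

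The obstacle you identify is in fact an artifact of discarding the sharp $\theta$-dependence in \prettyref{lmm:boundABdiffer} and \prettyref{lmm:boundABdiffersquare}. Those lemmas give $|\expectp{A^2-B^2}|\le 3\theta^2\sqrt{\expectp{(X_u^t-Y_u^t)^2}}$ and $\expectp{(A-B)^2}\le\theta^2\expectp{(X_u^t-Y_u^t)^2}$, and the same Lipschitz computation for $f(x)=(1-\theta^2 x)/\sqrt{1-\theta^2 x^2}$ shows $|f'(x)|=O(\theta^2)$, so that $m\le 1-\theta^2/4$ once $\alpha\le 1/32$. Plugging these into \prettyref{eq:proddiff} yields terms of the form $D^2\theta^4\,\rexp{-\theta^2 D/4}$ and $D\theta^2\,\rexp{-\theta^2 D/4}$, which are bounded by absolute constants \emph{uniformly in both $D$ and $\theta$}. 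Thus no split at $\theta_0$ is needed in the small-$\theta$ analysis: keeping the $\theta^2,\theta^4$ factors exactly cancels the $(1-m)^{-1},(1-m)^{-2}$ blow-up you were worried about. This is precisely the route the paper takes for $|\theta|\le\theta^\ast$.

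For $|\theta|\ge\theta^\ast$ the paper does not use the second-moment recursion; it instead reruns the $\tfrac12$-moment argument of \prettyref{prop:largetheta}, using \prettyref{eq:magbound} in place of \prettyref{lmm:magnetization} to obtain the same contraction $\expectp{\sqrt{(1-\theta X_i^t)/(1+\theta X_i^t)}}\le\lambda(\theta^\ast)<1$ for $\alpha$ small. Your second-moment approach may be made to work in this range as well (since $1-m\ge 1-\sqrt{1-(\theta^\ast)^2}-C_{c_0}\alpha$ is bounded below), but the paper chooses the square-root route to parallel \prettyref{prop:largetheta}.
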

\begin{proof}
The proof is similar to the proof of \prettyref{thm:main} and is divided into three steps. Thus we only provide proof sketches below.

We first show that for all $t \ge 0$,
\begin{align}
\min\left\{ \expectp{X_{u}^t } , \expectp{Y_{u}^t }  \right\} \ge 1 - 4 \alpha. \label{eq:magboundalpha}
\end{align}
In particular, given access to $\tau_{\partial T_u^t}$ and $\tilde{\tau}_{T_u^t}$, the optimal estimator of $\tau_u$ is $\sgn(X_u^t)$ whose success probability
is  $(1+\expect{|X_u^t |} )/2$. For the estimator $\tilde{\tau}_u$, its success probability is $1-\alpha$. It follows that
$\expect{|X_u^t |} \ge  1-2 \alpha.$
In view of  \prettyref{lmm:expectedmagnetization} , $\mathbb{E}^{+} [ X_u^t ] \ge 2 \expect{ |X_u^t |}  - 1 \ge 1- 4 \alpha$.
By the same argument,  $\mathbb{E}^{+} [ Y_u^t ]  \ge 1- 4 \alpha$.

We next focus on the case $|\theta| \le \theta^\ast$, where $\theta^\ast$
is a small constant to be specified later. To prove the theorem in this case, it is sufficient to  show that
\begin{align}
\expectp{(X_u^{t+1} -Y_{u}^{t+1} )^2} \le C \sqrt{\alpha (1-\alpha)}  \expectp{(X_u^t - Y_u^t )^2} \label{eq:smallalphasmalltheta}
\end{align}
for some explicit universal constant $C$.
In the proof of \prettyref{prop:smalltheta}, we have shown that
\begin{align*}
\expectp{(X_u^{t+1} -Y_{u}^{t+1} )^2  } \le
128 \sqrt{\alpha (1-\alpha) } \expectp{ \left( \prod_{i=1}^d A_i - \prod_{i=1}^d B_i \right)^2},
\end{align*}
and
\begin{align*}
\expectp{ \left( \prod_{i=1}^d A_i - \prod_{i=1}^d B_i \right)^2 \bigg| \partial u } \le  \frac{1}{2} \binom{D}{2} m^{D-2} \left( \expectp{A^2 - B^2 } \right)^2 + D m^{D-1} \expectp{(A-B)^2}.
\end{align*}
Following the proof of \prettyref{lmm:boundA2}, one can check that there exists a $\theta_1^\ast$ such that if $|\theta| \le \theta_1^\ast$
and $\alpha \le 1/32$, then $m \le  1- \frac{\theta^2}{4} \le \eexp^{- \theta^2/4}$.
In view of \prettyref{lmm:boundABdiffer} and \prettyref{lmm:boundABdiffersquare}, there exist $\theta_2^\ast$ and $\theta_3^\ast$ such that
if $|\theta| \le \theta^\ast \triangleq \min\{\theta_1^\ast, \theta_2^\ast, \theta_3^\ast \}$, then
\begin{align*}
\expectp{ \left( \prod_{i=1}^d A_i - \prod_{i=1}^d B_i \right)^2 \bigg| \partial u }
& \le  \left( \frac{9}{4} D^2 \theta^4  \eexp^{-\theta^2 (D-2) /4 } + d \theta^2  \eexp^{-\theta^2 (D-1) /4 } \right) \expectp{ (X_u^t - Y_u^t)^2  } \\
& \le C' \; \expectp{ (X_u^t - Y_u^t)^2  },
\end{align*}
where $C'$ is a universal constant and the last inequality holds because
$ D \theta^2  \eexp^{-\theta^2 D}$ and $D^2 \theta^4  \eexp^{-\theta^2 D }$ are bounded by a universal constant from above. Combing the last three displayed equations 
yields the desired \prettyref{eq:smallalphasmalltheta}.

Finally, we consider  the case $|\theta| \ge \theta^\ast$. In this case, to prove the theorem, it suffices to show that
\begin{align}
\expectp{ \sqrt{ |X_u^{t+1} -Y_{u }^{t+1} |}} \le C'(c_0) \sqrt{\alpha (1-\alpha)}  \expectp{ \sqrt{| X_u^t - Y_u^t | } }, \label{eq:contractionlargethetaalpha}
\end{align}
for some explicit constant $C'$ depending only on $c_0$.
In the proof of \prettyref{prop:largetheta}, we have shown that \prettyref{eq:magdiff3} holds, which gives 
\begin{align*}
\expectp{ \sqrt{ |X_{u}^{t+1} - Y_u^{t+1} } | \partial u  }  \le
2  (c_0+1) \sqrt{\alpha (1-\alpha)}  D \; \expectp{ \sqrt{ | X_{u}^t - Y_{u}^t | } }  \expectp{ \max \{  \sqrt{r (X)}, \sqrt{r (Y)}  \} },
\end{align*}
where $( r(X), r(Y)) $ has the same distribution as $(r_i(X), r_i(Y))$ for $i \in \partial u$, 
and $ r_i (x)=\prod_{j \in \partial u: j \neq i} \frac{1-\theta x_j }{1+\theta x_j}.$
Following the proof of \prettyref{lmm:contraction}, one can verify that  there exists an $\alpha^\ast(\theta^\ast, c_0)$ and a $\lambda(\theta^\ast ) \in (0,1)$ such that
if $\alpha \le \alpha^\ast$, then  for all $i \in \partial u$,
\begin{align*}
\max\left\{ \expectp{ \sqrt{ \frac{1-\theta X_{i}^t }{1+ \theta X_{i}^t } }  },  \expectp{ \sqrt{ \frac{1-\theta Y_{i}^t }{1+ \theta Y_{i}^t } }  } \right\} \le \lambda.
\end{align*}
It follows that
\begin{align*}
\expectp{ \max \{  \sqrt{ r(X)}, \sqrt{r (Y)} \}}  \le \expectp{  \sqrt{r (X)} + \sqrt{r(Y)}  }  \le 2 \lambda^{D-1}.
\end{align*}
Combing the last three displayed equations yields
\begin{align*}
 \expectp{ \sqrt{ |X_{u}^{t+1} - Y_u^{t+1} } | \partial u  }   &\le 4 (c_0+1) \sqrt{\alpha (1-\alpha)}  D  \lambda^{D-1}   \expectp{ \sqrt{ | X_{u}^t - Y_{u}^t | } } ,
\end{align*}
which implies the desired \prettyref{eq:contractionlargethetaalpha} holds, because $D \lambda^{D-1}$ is bounded  by a universal constant from above. 
\end{proof}

\section{Density evolution in the large degree regime} \label{sec:density evolution}
 In this section, we consider the regime \prettyref{eq:asymptotics} and further assume that as $n \to \infty$,
 \begin{align*}
 a \to \infty \qquad \frac{a-b}{\sqrt{b} } \to \mu,
 \end{align*}
 where $\mu$ is a fixed constant.
For $t \ge 1$, define
 \begin{align}
 \Phi_u^t &
 = \sum_{\ell \in \partial u } F( \Phi_\ell^{t-1} + h_\ell) \label{eq:recursionPhi},  \\
 \Psi_u^t &
 = \sum_{\ell \in \partial u } F( \Psi_\ell^{t-1} + h_\ell ) \label{eq:recursionPsi} ,
 \end{align}
 where $ \Phi_u^{0} =\infty$ if $\tau_u=+$ and $ \Phi_u^{0} =-\infty$ if $\tau_u=-$; $\Psi_u^0=0$ for all $u$.
Then $\Lambda_u^t=\Phi_u^t + h_u$ and  $\Gamma_u^t= \Psi_u^t + h_u$ for all $t \ge 0$.
Notice that subtrees $\{T_\ell^t\}_{\ell \in \partial u}$ are independent and identically distributed conditional on $\tau_u$.
Thus $\{ \Phi_{\ell  }^{t-1} \}_{\ell \in \partial u} $ ($\{ \Psi_{\ell  }^{t-1} \}_{\ell \in \partial u} $) are independent and identically distributed conditional on $\tau_u$.
As a consequence, when the expected degree of $u$ tends to infinity, due to the central limit theorem,
 we expect that the distribution of $\Phi_u^t$ ($\Psi_u^t$) conditional on $\tau_u$ is approximately Gaussian.

 Let $W_+^t$ ($Z_+^t$) denote a random variable that has the same distribution as $\Phi_u^t$ ($\Psi_u^t$)  conditional on $\tau_u=+$,
and $W_-^t$ ($Z_-^t$) denote a random variable that has the same distribution as $\Phi_u^t$ ($\Psi_u^t$)  conditional on $\tau_u=-$.
We are going to  prove that the distributions of $W_+^t$ ($Z_+^t$)  and $W_-^t$ ($Z_-^t$)   are asymptotically Gaussian.
The following lemma provides expressions of the mean and variance of $Z_+^t$ and $Z_-^t$.
\begin{lemma}
For all $t \ge 0$,
\begin{align}
\expect{Z_{\pm}^{t+1}} & =  \pm\frac{ \mu^2 }{4}  \expect{ \tanh(Z_+^t + U  )}+ O( a^{-1/2} ). \label{eq:meanZ}\\
\var\left( Z_{\pm}^{t+1}\right) & =  \frac{ \mu^2 }{4} \expect{ \tanh(Z_+^t + U  ) } + O( a^{-1/2} ).\label{eq:varianceZ}
\end{align}
\label{lmm:meanvarianceZ}
\end{lemma}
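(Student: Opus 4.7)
The plan is to recast \prettyref{eq:recursionPsi} as a compound Poisson sum, which makes mean and variance factor cleanly, and then Taylor-expand $F$. Conditional on $\tau_u=+$, the independent-splitting description of the Poisson offspring yields
\[
Z_+^{t+1} \;\eqdistr\; \sum_{i=1}^N X_i, \qquad N\sim\Pois(d),\ d=(a+b)/2,
\]
with the $X_i$'s i.i.d.\ and each distributed as $F(\Psi_\ell^t + h_\ell)$ for a generic child $\ell$. Compound Poisson gives the clean identities $\expect{Z_+^{t+1}}=d\,\expect{X}$ and $\var(Z_+^{t+1})=d\,\expect{X^2}$. Conditioning on the child's label, using the symmetry $Z_-^t\eqdistr -Z_+^t$, the fact that $h_\ell\mid\tau_\ell=-\eqdistr -U$, and the oddness of $F$, the summand $X$ is distributed as $V:=F(Z_+^t + U)$ with probability $a/(a+b)$ and as $-V$ with probability $b/(a+b)$. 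Hence
\[
\expect{Z_+^{t+1}} = \frac{a-b}{2}\,\expect{V}, \qquad \var(Z_+^{t+1}) = \frac{a+b}{2}\,\expect{V^2}.
\]

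Next I would Taylor-expand. Since $F(x)=\tanh^{-1}(\tanh(\beta)\tanh(x))$ and $\tanh^{-1}(y) = y + O(y^3)$ uniformly for $|y|\le\tanh(\beta)<1$, we get $F(x)=\tanh(\beta)\tanh(x) + O(\tanh(\beta)^3)$ uniformly in $x$. Therefore $\expect{V}=\tanh(\beta)\expect{\tanh(Z_+^t+U)}+O(\beta^3)$ and $\expect{V^2}=\tanh^2(\beta)\expect{\tanh^2(Z_+^t+U)}+O(\beta^4)$. The assumed asymptotics $(a-b)/\sqrt{b}\to\mu$ and $a/b\to 1$ give $\tanh(\beta)=\frac{a-b}{2b}(1+O(a^{-1/2}))$, from which
\[
\frac{a-b}{2}\tanh(\beta) = \frac{\mu^2}{4}+O(a^{-1/2}), \qquad \frac{a+b}{2}\tanh^2(\beta) = \frac{\mu^2}{4}+O(a^{-1/2}),
\]
while the error contributions $(a-b)\beta^3$ and $(a+b)\beta^4$ are both $O(1/a)$, hence absorbable into $O(a^{-1/2})$. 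This already yields the mean formula
$\expect{Z_+^{t+1}} = \tfrac{\mu^2}{4}\expect{\tanh(Z_+^t+U)}+O(a^{-1/2})$, and an analogous identity for the variance with $\tanh^2$ in place of $\tanh$.

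The one non-routine step, and the place I expect to spend care, is converting the $\tanh^2$ in the variance expression back to $\tanh$. I would use a Nishimori-type identity. Since $\Psi_u^t$ is independent of $h_u$ given $\tau_u$, we have $Z_+^t + U\eqdistr \Gamma_u^t\mid\tau_u=+$, and $Y_u^t=\tanh(\Gamma_u^t)=2P(\tau_u=+\mid T_u^t,\tilde\tau_{T_u^t})-1$. A direct Bayes calculation shows $\expect{P(\tau_u=+\mid\cdot)\mid\tau_u=+}=2\expect{P(\tau_u=+\mid\cdot)^2}$, which rearranges to $\expect{Y_u^t\mid\tau_u=+}=\expect{(Y_u^t)^2}$. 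By the symmetry of $\tanh^2$ the RHS equals $\expect{(Y_u^t)^2\mid\tau_u=+}$, yielding $\expect{\tanh(Z_+^t+U)}=\expect{\tanh^2(Z_+^t+U)}$. Substituting completes the variance formula for $Z_+^{t+1}$, and the statements for $Z_-^{t+1}$ follow from the model-level symmetry $\tau_u\mapsto-\tau_u$ that sends $Z_-^{t+1}\eqdistr-Z_+^{t+1}$. The only mildly delicate bookkeeping is tracking the rates through the Taylor expansion to confirm all remainders fit inside $O(a^{-1/2})$, which is what drives the choice of $\beta$-expansion order.
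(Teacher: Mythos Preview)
Your proof is correct and takes a genuinely different, more economical route than the paper's.

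The paper conditions on $\tau_u=-$, expands $F$ via the logistic-type function $f(x)=\frac{1}{1+e^{-2(x-\beta)}}$ to second order, and invokes the change-of-measure Nishimori identity $\expect{g(Z_-^t-U)}=\expect{g(Z_+^t+U)e^{-2(Z_+^t+U)}}$ \emph{inside} the expansion to obtain exact algebraic cancellations (e.g.\ $b\,\expect{f}+a\,\expect{f\,e^{-2(\cdot-\beta)}}=b$). This requires tracking several terms through the $\beta$-expansion separately for mean and variance.

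You instead condition on $\tau_u=+$, exploit the oddness of $F$ (which the paper does not use in this lemma) together with $Z_-^t\eqdistr-Z_+^t$ to collapse the child mixture to $\pm V$ with $V=F(Z_+^t+U)$; this immediately gives the clean compound-Poisson identities $\expect{Z_+^{t+1}}=\tfrac{a-b}{2}\expect{V}$ and $\var(Z_+^{t+1})=\tfrac{a+b}{2}\expect{V^2}$. Your one-line expansion $F(x)=\tanh(\beta)\tanh(x)+O(\beta^3)$ is then all that is needed, and the Nishimori identity enters only once, at the very end, in the compact form $\expect{\tanh(Z_+^t+U)}=\expect{\tanh^2(Z_+^t+U)}$ (which you correctly derive from $\expectp{Y_u^t}=\expect{(Y_u^t)^2}$). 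Your argument is shorter and the bookkeeping is lighter; the paper's version keeps the change-of-measure structure more visible, which may extend more readily to settings where $F$ is not odd or where one wants the next-order correction.
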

\begin{proof}
By symmetry, the distribution of $\Gamma_u^t$ conditional on $\tau_u=-$ is the same as the distribution of $-\Gamma_u^t$ conditional on $\tau_u=+$.
Thus, $\expect{Z_{+}^{t+1}} = - \expect{Z_{-}^{t+1}}$ and $\var\left( Z_{+}^{t+1}\right) =\var\left( Z_{-}^{t+1}\right)$.
Hence, it suffices to prove the claims for $Z_-^{t+1}$.
By the definition of $\Gamma_u^t$ and the change of measure, we have that
 \begin{align*}
 \expect{ g \left( \Gamma_u^t \right) | \tau_u= - 1 } = \expect{ g \left(  \Gamma_u^t \right) \eexp^{ - 2 \Gamma_u^t } | \tau_u=+1},
 \end{align*}
 where $g$ is any measurable function such that the expectations above are well-defined. Recall that
 $\Gamma_u^t =h_u + \Psi_u^t$.
 Hence, the distribution of $\Gamma_u^t$ conditional on $\tau_u=+$
 is the same as the distribution of $ U + Z_{+}^t$; the distribution of $\Gamma_u^t$ conditional on $\tau_u=-$
 is the same as the distribution of $ -U + Z_{-}^t$.
  It follows that
 \begin{align}
 \expect{ g \left( Z_-^t -U   \right) } = \expect{ g \left(  Z_+^t +U  \right) \eexp^{-2 ( Z_+^t +U ) } }. \label{eq:nishimori}
 \end{align}
Define $\psi(x) \triangleq  \log (1+x)- x + x^2/2$.
It follows from the Taylor expansion that
$ |\psi(x) | \le |x|^3$. Then
\begin{align*}
 F(x) & = \frac{1}{2} \log \left(  \frac{ \eexp^{2x+2\beta}    +1   }{ \eexp^{2x}  + \eexp^{2\beta} }  \right)
  = -\beta + \frac{1}{2} \log \left( \frac{ \eexp^{2x+2\beta}    +1   }{ \eexp^{2x -2 \beta }  + 1 } \right) \\
 & = -\beta  + \frac{1}{2} \log \left( 1 + \frac{ \eexp^{4\beta}  -1 }{1+ \eexp^{-2 (x -\beta) }  } \right)\\
 & = -\beta +  \frac{\eexp^{4\beta}  -1 }{2} f(x) - \frac{ \left( \eexp^{4\beta}  -1 \right)^2 }{4} f^2(x) + \frac{1}{2} \psi\left( (\eexp^{4\beta}  -1) f(x)\right),
\end{align*}
where $f(x)= \frac{1}{1+ \eexp^{-2(x -\beta) } }.$ Since $ |\psi(x) | \le |x|^3$ and $|f(x)| \le 1$, it follows that
\begin{align}
 F(x)
  = -\beta +  \frac{\eexp^{4\beta}  -1 }{2} f(x) - \frac{ \left( \eexp^{4\beta}  -1 \right)^2 }{4} f^2(x) + O \left( |\eexp^{4\beta}  -1|^3 \right), \label{eq:ApproxF}
\end{align}
Therefore,
\begin{align*}
\Psi_u^{t+1} & = \sum_{\ell \in \partial u } F( \Psi_\ell^{t } + h_\ell ) \\
& = \sum_{\ell \in \partial u }  \left[ -\beta+ \frac{\eexp^{4\beta}  -1 }{2} f(  \Psi_\ell^{t } + h_\ell )
-   \frac{ \left( \eexp^{4\beta}  -1 \right)^2 }{4} f^2( \Psi_\ell^{t } + h_\ell ) +  O \left( |\eexp^{4\beta}  -1|^3 \right)  \right].
\end{align*}
By conditioning the label of vertex $u$ is $-$, it follows that
\begin{align*}
\expect{Z_-^{t+1}} &=  - \beta \frac{a+b}{2}  +  \frac{\eexp^{4\beta}  -1 }{4} \left( b  \expect{  f (Z_+^t + U )    } +  a  \expect{ f (Z_-^t - U ) }
\right) \\
& - \frac{\left( \eexp^{4\beta}  -1 \right)^2  }{8} \left(  b \expect{ f^2 (Z_+^t + U ) } +  a \expect{ f^2 (Z_-^t - U ))} \right) +  O \left(b  |\eexp^{4\beta}  -1|^3 \right).
\end{align*}
In view of \prettyref{eq:nishimori}, we have that
\begin{align}
 b  \expect{  f (Z_+^t + U )    } +  a  \expect{ f (Z_-^t - U ) }  & =   b \expect{f (Z_+^t + U ) ( 1+ \eexp^{-2( Z_+^t +U  - \beta) } )} =b,   \label{eq:nishimori1}  \\
 b \expect{ f^2 (Z_+^t + U ) } +  a \expect{ f^2 (Z_-^t - U ) }  & = b \expect{ f^2 (Z_+^t + U ) ( 1+ \eexp^{-2( Z_+^t+U  - \beta ) } )} = b \expect{f (Z_+^t + U )}.\label{eq:nishimori2}
\end{align}
Hence,
\begin{align*}
\expect{Z_-^{t+1}} =  - \beta \frac{a+b}{2}  +   \frac{ b \left( \eexp^{4\beta}  -1 \right) }{4}
- \frac{b \left( \eexp^{4\beta}  -1 \right)^2  } {8}  \expect{f (Z_+^t + U )}  +  O \left( b |\eexp^{4\beta}  -1|^3 \right) .
\end{align*}
Notice that
\begin{align}
-\beta = - \frac{1}{2} \log \left( 1 + \frac{a-b}{b} \right) = - \frac{a-b}{2b} + \frac{(a-b)^2}{4 b^2 } + O\left(  \frac{|a-b|^3}{ b^3 }\right). \label{eq:ApproxBeta}
\end{align}
As a consequence,
\begin{align*}
- \beta \frac{a+b}{2}  +   \frac{ b \left( \eexp^{4\beta}  -1 \right)  }{4}
&= -  \frac{a^2-b^2}{4b} + \frac{(a-b)^2 (a+b) }{ 8 b^2} + \frac{a^2-b^2}{4 b} + O\left( \frac{|a-b|^3}{ b^2 }\right) \\
&=  \frac{(a-b)^2 }{4b} + O\left( \frac{|a-b|^3}{ b^2 }\right)= \frac{\mu^2}{4} + O ( a^{-1/2}),
\end{align*}
where the last equality holds due to $(a-b)/\sqrt{b} \to \mu$ for a fixed constant $\mu$.
Moreover,
\begin{align*}
\frac{b \left( \eexp^{4\beta}  -1 \right)^2  } {8}  = \frac{(a^2-b^2)^2 }{8b^3} = \frac{(a-b)^2}{2b} \left( 1+ \frac{a-b}{2b} \right)^2 =\mu^2/2 + O (a^{-1/2} ),
\end{align*}
and
\begin{align*}
b |\eexp^{4\beta}  -1|^3 = O\left( \frac{|a-b|^3}{ b^2 }\right) =  O ( a^{-1/2}).
\end{align*}
Assembling the last five displayed equations gives that
\begin{align*}
\expect{Z_-^{t+1}} =  \frac{\mu^2 }{4}  - \frac{ \mu^2 }{2}  \expect{f (Z_+^t + U )} + O( a^{-1/2} ).
\end{align*}
Finally, notice that
\begin{align}
\bigg| f(x) -\frac{1}{1+ \eexp^{-2x } }  \bigg| =  \frac{\eexp^{-2x} \big| \eexp^{2\beta} -1\big| }{ ( 1+ \eexp^{-2(x -\beta) } ) ( 1+ \eexp^{-2x } ) }\le  \big|\eexp^{2\beta} -1 \big| = O( a^{-1/2} ). \label{eq:Approxf}
\end{align}
It follows that
\begin{align*}
\expect{Z_-^{t+1}} &=  \frac{\mu^2 }{4}  - \frac{ \mu^2 }{2}  \expect{ \frac{1}{1+ \eexp^{-2 (Z_+^t + U ) } }} + O( a^{-1/2} ) \\
& =  - \frac{ \mu^2 }{4} \tanh(Z_+^t + U  )+ O( a^{-1/2} ).
\end{align*}

Next we calculate $\var(Z_-^{t+1})$. For $Y= \sum_{i=1}^L X_i$, where $L$ is Poisson distributed, and $\{ X_i \}$ are i.i.d.\ with finite second moments,
one can check that $\var(Y) = \expect{L} \expect{X_1^2}$.
Since $\Psi_u^{t+1}  = \sum_{\ell \in \partial u } F( \Psi_\ell^{t } + h_\ell )$, it follows that
\begin{align*}
\var (Z_-^{t+1}) = \frac{b}{2} \expect{ F^2 (Z_+^t +U ) } + \frac{a}{2} \expect{ F^2 ( Z_-^t -U ) },
\end{align*}
In view of \prettyref{eq:ApproxF} and the fact that $\eexp^{4\beta} -1 =o(1)$, we have that
\begin{align*}
F^2(x) = \beta^2 - \left(\eexp^{4\beta} -1 \right) \beta f(x) +  \frac{ \left( \eexp^{4\beta}  -1 \right)^2 (2\beta+1) }{4} f^2(x)  + O \left( |\eexp^{4\beta}  -1|^3 \right),
\end{align*}
Thus,
\begin{align*}
\var (Z_-^{t+1}) & =  \beta^2 \frac{a+b}{2} - \frac{\left(\eexp^{4\beta} -1 \right) \beta }{2} \left[ b \expect{ f (Z_+^t +U ) } + a\expect{ f (Z_-^t -U ) }   \right] \\
& + \frac{ \left( \eexp^{4\beta}  -1 \right)^2 (2\beta+1) }{8}  \left[ b \expect{ f^2 (Z_+^t +U ) } + a\expect{ f^2 (Z_-^t -U ) }   \right] + O \left(b |\eexp^{4\beta}  -1|^3 \right).
\end{align*}
Applying \prettyref{eq:nishimori1} and \prettyref{eq:nishimori2}, we get that
\begin{align*}
\var (Z_-^{t+1}) =  \beta^2 \frac{a+b}{2} - \frac{\left(\eexp^{4\beta} -1 \right) \beta b }{2}
 + \frac{ \left( \eexp^{4\beta}  -1 \right)^2 (2\beta+1) b }{8}  \expect{ f (Z_+^t +U ) } + O \left(b |\eexp^{4\beta}  -1|^3 \right).
\end{align*}
In view of \prettyref{eq:ApproxBeta}, we have that
\begin{align*}
 \beta^2 \frac{a+b}{2} - \frac{\left(\eexp^{4\beta} -1 \right) \beta b }{2} & = \frac{ (a-b)^2 (a+b)  }{8b^2} -  \frac{ (a-b)^2(a+b)}{4b^2} + O \left(  \frac{|a-b|^3}{b^2}\right)\\
&=  - \frac{ (a-b)^2   }{4b} +  O \left(  \frac{|a-b|^3}{b^2}\right) = - \frac{\mu^2}{4} + O(a^{-1/2} ) ,
\end{align*}
and that
\begin{align*}
\frac{ \left( \eexp^{4\beta}  -1 \right)^2 (2\beta+1) b }{8}  &=  \frac{(a-b)^2}{2b} \left(1 + \frac{a-b}{2b} \right)^2 (2\beta+1)  \\
 & =  \frac{(a-b)^2}{2b} + O \left(  \frac{|a-b|^3}{b^2}\right) =\frac{\mu^2}{2} +O(a^{-1/2} ).
\end{align*}
Moreover, we have shown that $b |\eexp^{4\beta}  -1|^3=O(a^{-1/2})$.
Assembling the last three displayed equations gives that
\begin{align*}
\var (Z_-^{t+1}) = - \frac{\mu^2}{4} + \frac{\mu^2}{2} \expect{ f (Z_+^t +U ) } + O(a^{-1/2}).
\end{align*}
Finally, in view of \prettyref{eq:Approxf}, we get that
\begin{align*}
\var (Z_-^{t+1}) &=  - \frac{\mu^2}{4} + \frac{\mu^2}{2} \expect{  \frac{1}{1+ \eexp^{- 2( Z_+^t +U) }  } }  +  O(a^{-1/2} ) \\
&= \frac{ \mu^2  }{4}  \expect{  \tanh( Z_+^t+ U ) } + O(a^{-1/2} ).
 \end{align*}
\end{proof}

The following lemma is useful for proving the distributions of $Z_+^t$ and $Z_-^t$ are approximately Gaussian.
\begin{lemma}(Analog of Berry-Esseen inequality for Poisson sums \cite[Theorem 3]{korolev2012improvement}.)\label{lmm:Poisson_BE}
Let  $S_{\nu}=X_1 + \cdots + X_{N_\nu},$   where
$X_i: i\geq 1$  are independent, identically distributed random variables with finite second moment,
and $\expect{|X_i|^3}\leq \rho^3,$ and for some $\nu > 0,$ $N_{\nu}$ is a $\Pois(\nu)$ random variable independent
of  $(X_i: i\geq 1).$   Then
$$
\sup_x \bigg|     \prob{  \frac{S_\nu - \nu \expect{X_1} }{  \sqrt{\nu \expect{X_1^2}  }}\leq x} - \prob{Z \leq x}  \bigg|  \leq  \frac{C_{BE} \rho^3}{\sqrt{\nu ( \expect{X_1^2} )^3}} , 
$$
where $C_{BE}=0.3041.$
\end{lemma}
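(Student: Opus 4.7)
The plan is to establish this quantitative CLT via the classical route of characteristic functions combined with Esseen's smoothing inequality, exploiting the compound Poisson structure to obtain a clean closed form for the characteristic function of $S_\nu$.

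First I would compute the characteristic function of the normalized sum $W_\nu = (S_\nu - \nu\,\expect{X_1})/\sqrt{\nu\,\expect{X_1^2}}$. Conditioning on $N_\nu$ and using $\prob{N_\nu = k} = \eexp^{-\nu}\nu^k/k!$, the compound Poisson identity gives $\expect{\eexp^{itS_\nu}} = \exp(\nu(\phi(t)-1))$, where $\phi$ is the characteristic function of $X_1$. After recentering and rescaling,
\[
\expect{\eexp^{itW_\nu}} = \exp\!\left(-it\sqrt{\nu}\,\expect{X_1}/\sqrt{\expect{X_1^2}} + \nu\bigl(\phi(t/\sqrt{\nu\,\expect{X_1^2}})-1\bigr)\right).
\]

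Second, I would Taylor expand $\phi(s) = 1 + is\,\expect{X_1} - s^2\,\expect{X_1^2}/2 + r(s)$ with $|r(s)| \le \rho^3 |s|^3/6$, and substitute $s = t/\sqrt{\nu\,\expect{X_1^2}}$. A direct calculation shows that the linear term in $s$ cancels against the recentering term, the quadratic term produces $-t^2/2$, and the cubic remainder is controlled by $|t|^3\rho^3/(6\sqrt{\nu\,\expect{X_1^2}^3})$. Consequently, for $|t|$ up to a threshold $T$ of order $\sqrt{\nu\,\expect{X_1^2}^3}/\rho^3$, the characteristic function of $W_\nu$ differs from $\eexp^{-t^2/2}$ by an amount linear in $\rho^3/\sqrt{\nu\,\expect{X_1^2}^3}$, after multiplying by a Gaussian damping factor that allows the error to be integrated against $1/|t|$.

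Third, I would apply Esseen's smoothing inequality
\[
\sup_x \big|\prob{W_\nu \le x} - \prob{Z \le x}\big| \le \frac{1}{\pi}\int_{-T}^{T}\frac{\big|\expect{\eexp^{itW_\nu}} - \eexp^{-t^2/2}\big|}{|t|}\, dt + \frac{24}{\pi T\sqrt{2\pi}},
\]
choosing $T$ of the order identified above to balance the two contributions. This yields a bound of the advertised shape $C\,\rho^3/\sqrt{\nu\,\expect{X_1^2}^3}$.

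The main obstacle is not obtaining the correct rate, which follows from the argument above, but pinning down the sharp numerical constant $C_{BE} = 0.3041$. Esseen's classical smoothing together with a crude cubic Taylor bound gives a considerably larger constant. Achieving the sharp value, as in Korolev--Shevtsova, requires replacing Esseen's smoothing by a Prawitz-type inequality, using tighter bounds on $|\phi(s) - 1 - is\,\expect{X_1} + s^2\,\expect{X_1^2}/2|$ that are simultaneously sharp in the small-$|t|$ and moderate-$|t|$ regimes, and a careful optimization of the truncation parameter $T$; this refinement is the technical contribution of the cited paper.
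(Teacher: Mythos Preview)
The paper does not prove this lemma at all; it is quoted verbatim as \cite[Theorem~3]{korolev2012improvement} and used as a black box in the density-evolution analysis. So there is no ``paper's own proof'' to compare against.

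Your sketch is the standard characteristic-function route and would indeed deliver a bound of the right form $C\rho^3/\sqrt{\nu(\expect{X_1^2})^3}$; you are also right that the sharp constant $0.3041$ is the whole point of the cited reference and cannot be reached by Esseen smoothing with a crude Taylor remainder. For the purposes of this paper the value of the constant is irrelevant---only the $O(a^{-1/2})$ rate is used in \prettyref{lmm:gaussiandensityevolution}---so your argument, even with a worse constant, would suffice for every application downstream.
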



 \begin{lemma}\label{lmm:gaussiandensityevolution}
Suppose $\alpha \in (0,1/2]$ is fixed. Let $h(v)= \expect{ \tanh ( v + \sqrt{v} Z + U )}$,
where $Z \sim \calN(0,1)$ and $U=\gamma$ with probability $1-\alpha$ and $U=-\gamma$
with probability $\alpha$, where $\gamma= \frac{1}{2} \log \frac{1-\alpha}{\alpha}$. 
Define $(v_t: t \ge 0)$ recursively by $v_0=0$ and
$
v_{t+1} = \frac{\mu^2}{4} h(v_t).
$
For any fixed $t \ge 0$, as $n \to \infty$,
\begin{align}
\sup_x \bigg|     \prob{  \frac{Z_{\pm}^{t}  \mp v_{t}  }{  \sqrt{ v_t } } \leq x } - \prob{Z \leq x}    \bigg|  =
O(a^{-1/2}).
 \label{eq:gaussianlimit}
\end{align}
Define $(w_t: t \ge 1)$ recursively by $w_1=\mu^2/4$ and $
w_{t+1} = \frac{\mu^2}{4} h(w_t)$.
For any fixed $t \ge 1$, as $n \to \infty$,
\begin{align}
\sup_x \bigg|     \prob{  \frac{W_{\pm}^{t}  \mp w_{t}  }{  \sqrt{ w_t } } \leq x } - \prob{Z \leq x}    \bigg|  =
O(a^{-1/2}). \label{eq:gaussianlimitw}
\end{align}
 \end{lemma}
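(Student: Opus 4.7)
The plan is to establish both \prettyref{eq:gaussianlimit} and \prettyref{eq:gaussianlimitw} by induction on $t$, combining the mean and variance expressions of \prettyref{lmm:meanvarianceZ} with the Berry--Esseen bound for Poisson sums from \prettyref{lmm:Poisson_BE}. The key structural observation is that, via the independent-splitting representation used in the proof of \prettyref{lmm:recursion}, conditional on $\tau_u$ the recursions \prettyref{eq:recursionPhi}--\prettyref{eq:recursionPsi} express $\Phi_u^{t+1}$ and $\Psi_u^{t+1}$ as sums of $N \sim \Pois((a+b)/2)$ i.i.d.\ summands $X_\ell = F(\Phi_\ell^t + h_\ell)$ or $F(\Psi_\ell^t + h_\ell)$, each bounded in absolute value by $|\beta| = O(a^{-1/2})$. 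Consequently $\expect{|X_\ell|^3} = O(a^{-3/2})$, and whenever the target limit variance $v_{t+1}$ (or $w_{t+1}$) is $\Theta(1)$ one has $\expect{X_\ell^2} = \Theta(a^{-1})$; with $\nu = \Theta(a)$, the bound of \prettyref{lmm:Poisson_BE} evaluates to $O(a^{-3/2})/\sqrt{\Theta(a^{-2})} = O(a^{-1/2})$, exactly the rate claimed.

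For the $Z$ sequence I induct starting at $t = 1$. In the base case $\Psi_\ell^0 \equiv 0$, so each summand $F(h_\ell)$ takes only the two values $F(\pm\gamma)$ and is i.i.d.\ conditional on $\tau_u$; \prettyref{lmm:meanvarianceZ} at $t = 0$ gives $\expect{Z_\pm^1} = \pm v_1 + O(a^{-1/2})$ and $\var(Z_\pm^1) = v_1 + O(a^{-1/2})$ with $v_1 = \frac{\mu^2}{4}(1-2\alpha)^2 > 0$, and \prettyref{lmm:Poisson_BE} then yields \prettyref{eq:gaussianlimit} at $t = 1$ after absorbing the $O(a^{-1/2})$ mean/variance discrepancies into the target Gaussian. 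For the inductive step, assuming \prettyref{eq:gaussianlimit} at level $t$, I substitute into \prettyref{lmm:meanvarianceZ} and establish $\expect{\tanh(Z_+^t + U)} = h(v_t) + O(a^{-1/2})$; this yields $\expect{Z_\pm^{t+1}} = \pm v_{t+1} + O(a^{-1/2})$ and $\var(Z_\pm^{t+1}) = v_{t+1} + O(a^{-1/2})$, and another invocation of \prettyref{lmm:Poisson_BE} supplies the Gaussian approximation at $t+1$. The $W$ sequence is handled in the same way, with base case $t = 1$ covered by observing that $\Phi_\ell^0 = \pm\infty$ forces $F(\Phi_\ell^0 + h_\ell) = \pm\beta$ independently of $h_\ell$, so that $W_+^1 \eqdistr \beta(N_+ - N_-)$ for independent $N_+ \sim \Pois(a/2)$, $N_- \sim \Pois(b/2)$, to which the identical Berry--Esseen calculation returns the Gaussian limit centered at $w_1 = \mu^2/4$.

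The main technical hurdle is converting the $O(a^{-1/2})$ Kolmogorov-distance bound on $Z_+^t$ into an $O(a^{-1/2})$ bound on $\expect{\tanh(Z_+^t + U)} - h(v_t)$. I handle this by integration by parts: writing $F$ for the CDF of $Z_+^t$ and $G$ for the CDF of $\calN(v_t, v_t)$, for every fixed $u \in \reals$,
\[
\expect{\tanh(Z_+^t + u)} - \expect{\tanh(\calN(v_t, v_t) + u)} = -\int_{\reals} \tanh'(x+u)\,[F(x) - G(x)]\,dx,
\]
and since $\tanh' \ge 0$ with $\int_{\reals} \tanh'(x)\,dx = 2$, the right-hand side is bounded by $2\,\|F - G\|_\infty = O(a^{-1/2})$, uniformly in $u$; taking expectation over $U$ preserves the rate. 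A secondary point is that, under the standing hypotheses $\alpha \in (0,1/2)$ and $\mu \ne 0$, the quantity $v_t$ is bounded away from $0$ for $t \ge 1$, so that the Gaussian standardization is well-defined and the $O(a^{-1/2})$ mean/variance perturbations translate into an $O(a^{-1/2})$ perturbation of the target normal CDF throughout the induction.
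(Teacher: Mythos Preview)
Your proposal is correct and follows essentially the same approach as the paper: induction on $t$, with the Poisson Berry--Esseen bound (\prettyref{lmm:Poisson_BE}) supplying the Gaussian approximation at each step and \prettyref{lmm:meanvarianceZ} identifying the limiting mean and variance, while the CDF-to-expectation transfer for $\expect{\tanh(Z_+^t+U)}$ is handled via the same integration-by-parts/tail-integral idea the paper calls the ``area rule.'' Your choice to start the $Z$-induction at $t=1$ (rather than the paper's degenerate $t=0$ base case with $v_0=0$) and your explicit bound $\bigl|\int \tanh'(x+u)[F(x)-G(x)]\,dx\bigr|\le 2\|F-G\|_\infty$ are minor cosmetic improvements, not substantive departures.
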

 \begin{proof}
 We prove the lemma by induction over $t$.  We first consider the base case. For $Z^t$, the base case $t=0$
trivially holds, because  $\Psi_u^0 \equiv 0$ and $v_0=0$. For $W^t$, we need to check the base case $t=1$.
Recall that $\Lambda^0_\ell=\infty$ if $\tau_\ell=+$ and  $\Lambda^0_\ell= -\infty$ if $\tau_\ell=-$.
Notice that $F(\infty)=\beta$ and $F(-\infty)=-\beta$. Hence,
$
\Phi_u^1 = \sum_{i=1}^{N_{d} } X_i,
$
where $N_d \sim \Pois(d)$ is independent of $\{X_i\}$; $\{X_i\}$ are i.i.d.\ such that
conditional on $\tau_u$, $ X_i=\tau_u\beta$ with probability $a/(a+b)$ and $X_i=-\tau_u\beta$ with probability $b/(a+b)$.
Consequently, $\expect{X_1|\tau_u}=\beta \theta \tau_u$, $\expect{X_1^2} = \beta^2$ and $\expect{|X_1|^3} =|\beta|^3$.
Thus, in view of \prettyref{lmm:Poisson_BE}, we get that
\begin{align}
\sup_x \bigg|     \prob{  \frac{ W_{\pm}^1 \mp d \beta \theta }{  \sqrt{d  \beta^2  }}\leq x} -  \prob{Z \leq x}   \bigg|  \leq  O (d^{-1/2} ). \label{eq:uniformGaussianbound}
\end{align}
Since
\begin{align*}
\beta= \frac{1}{2} \log \frac{a}{b}= \frac{a-b}{2b} - \frac{(a-b)^2 }{4b^2} + O \left( \frac{(a-b)^3}{b^3} \right).
\end{align*}
it follows that  $d \beta \theta= \mu^2/4 + O(a^{-1/2})$ and $d \beta^2  = \mu^2/4 + O(a^{-1/2})$. Note that by definition, $w_1=\mu^2/4.$ For any $x \in \reals$, 
define $x'_{\pm}$ such that $\sqrt{d \beta^2} x'_{\pm} \pm d \beta \theta = x \sqrt{w_1} \pm w_1.$  Then
 \begin{align*}
 \left\{   \frac{ W_{\pm}^1 \mp w_t }{  \sqrt{w_t }}\leq x \right\} = \left\{  \frac{ W_{\pm}^1 \mp d \beta \theta }{  \sqrt{d  \beta^2  }}\leq x' \right\}.
\end{align*}
Hence,
\begin{align}
& \sup_x \bigg|     \prob{  \frac{W_{\pm}^{1}  \mp w_{1}  }{  \sqrt{ w_1} } \leq x } - \prob{Z \leq x}    \bigg|  \nonumber \\
& = \sup_x \bigg|    \prob{\frac{ W_{\pm}^1 \mp d \beta \theta }{  \sqrt{d  \beta^2  }}  \leq x' }  - \prob{Z \leq x} \bigg| \nonumber \\
&\le \sup_x \bigg|    \prob{\frac{ W_{\pm}^1 \mp d \beta \theta }{  \sqrt{d  \beta^2  }}  \leq x' }  - \prob{Z \leq x'} + \sup_x \bigg|  \prob{Z \leq x'} - \prob{Z \leq x} \bigg|  \nonumber\\
& \overset{(a)}{\le}  O (d^{-1/2} )  + \sup_x \bigg|  \prob{Z \leq x'} - \prob{Z \leq x} \bigg|  \nonumber \\
& \le O (d^{-1/2} ), \label{eq:triangleGaussian}
\end{align}
where $(a)$ holds due to \prettyref{eq:uniformGaussianbound}; the last inequality holds because $|x'-x| \le O \left( (x+1) d^{-1/2} \right)$ and hence,
\begin{align*}
\sup_x \bigg|  \prob{Z \leq x'} - \prob{Z \leq x} \bigg| \le  \sup_x \frac{1}{\sqrt{2\pi }} |x'-x| \max \left\{  \eexp^{-x^2/2}, \eexp^{-(x')^2/2 } \right\} =O (d^{-1/2} ).
\end{align*} 
Therefore, \prettyref{eq:gaussianlimitw} holds for $t=1$.

In view of \prettyref{eq:recursionPhi} and \prettyref{eq:recursionPsi}, $\Phi^t_u$ and $\Psi^t_u$ satisfy the same recursion. Moreover, by definition,
$v_t$ and $w_t$ also satisfy the same recursion. Thus, to finish the proof of the lemma, it suffices to show that:
 suppose \prettyref{eq:gaussianlimit}
holds for  a fixed $t$, then it also holds for $t+1.$ Also,
by symmetry, $Z^{t+1}_+$ has the same distribution as  $ -Z^{t+1}_-$, so it is enough to show \prettyref{eq:gaussianlimit}
holds for $Z^{t+1}_-$.

%
%

Notice that $Z^{t+1}_{ -}    = \sum_{i=1}^{N_d } Y_i, $
where $N_d \sim \Pois(d)$ is independent of $\{Y_i\}$; $\{Y_i\}$ are i.i.d.\ such that $ Y_i=F(Z_+^{t} +U )$ with probability $b/(a+b)$ and $Y_i=F(Z_-^{t} -U )$ with probability $a/(a+b)$.
Thus, $\expect{Z^{t+1}_{ -}  }= d \expect{Y_1}$ and $\var \left(Z^{t+1}_{ -}  \right) = d \expect{Y_1^2}$.
In view of  \prettyref{lmm:Poisson_BE}, we get that
\begin{align}
\sup_x \bigg|     \prob{  \frac{Z_-^{t+1}  - d \expect{Y_1}  }{  \sqrt{ d \expect{Y_1^2} } } \leq x } - \prob{Z \leq x}    \bigg|  = O \left(  \frac{ d \expect{ |Y_1|^3 } }{ (d \expect{Y_1^2} )^{3/2} } \right). \label{eq:gaussianconvergence}
\end{align}
It follows from \prettyref{lmm:meanvarianceZ} that
\begin{align*}
d \expect{Y_1} & = - \frac{ \mu^2 }{4} \expect{ \tanh(Z_+^t + U  ) }+ O( a^{-1/2} ). \\
 d \expect{Y_1^2} & =  \frac{ \mu^2 }{4} \expect{ \tanh(Z_+^t + U  ) }+ O( a^{-1/2} ).
\end{align*}
Using the area rule of expectation, we have that
\begin{align*}
& \expect{ \tanh(Z_+^t + U  ) } \\
& = \int_{0}^{1} \tanh'(t) \prob{ Z_+^t + U \ge t }  \diff t - \int_{-1}^{0} \tanh'(t)    \prob{ Z_+^t + U \le t } \\
& = \int_{0}^{1} \tanh'(t) \prob{ v_t + \sqrt{v_t} Z + U \ge t }  \diff t - \int_{-1}^{0} \tanh'(t)    \prob{ v_t + \sqrt{v_t} Z + U \le t }  + O(a^{-1/2}) \\
& = \expect{ \tanh ( v_t+ \sqrt{v_t} Z + U ) } +  O(a^{-1/2}). 
\end{align*}
where the second equality follows from  the induction hypothesis and the fact that $|\tanh'(t)| \le 1$. 
Hence,
$d \expect{Y_1} = -v_{t+1} + O( a^{-1/2} ) $ and $d \expect{Y_1^2}=v_{t+1} + O( a^{-1/2} )$. 
Moreover, since $F$ is monotone, it follows that $|F(x)| \le \max\{ |F(\infty)|, |F(-\infty)| \} =\beta $ and
thus $d \expect{ |Y_1|^3 } \le d \beta^3=  O(a^{-1/2})$.
As a consequence, in view of \prettyref{eq:gaussianconvergence} and following the similar argument as \prettyref{eq:triangleGaussian}, 
we get that \prettyref{eq:gaussianlimit} holds for $Z^{t+1}_-$.
 \end{proof}

 We are about to prove \prettyref{thm:accuracy} based on \prettyref{lmm:gaussiandensityevolution}.
 Before that, we need a lemma showing that $h$ is monotone.

 \begin{lemma}\label{lmm:hmonotone}
 $h(v)$ is continuous on $[0, \infty)$ and  $0 \le h'(v) \le 1$ for $v \in (0, +\infty)$.
 \end{lemma}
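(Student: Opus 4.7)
Continuity of $h$ on $[0,\infty)$ is immediate from dominated convergence: for each fixed $(Z,U)$ the argument $v+\sqrt{v}\,Z+U$ varies continuously in $v$, $\tanh$ is continuous, and $|\tanh|\le 1$ supplies the required dominating function.

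Fix $v>0$ and set $X=v+\sqrt{v}\,Z+U$. Since $\mathrm{sech}^2$ is bounded, I would justify differentiating under the expectation on any compact subinterval $[v_1,v_2]\subset (0,\infty)$ by dominating with $1+|Z|/(2\sqrt{v_1})$, obtaining
\[
h'(v)=\mathbb{E}\Bigl[\mathrm{sech}^2(X)\Bigl(1+\tfrac{Z}{2\sqrt{v}}\Bigr)\Bigr].
\]
Conditioning on $U$ and applying Stein's Gaussian integration by parts to $z\mapsto \mathrm{sech}^2(v+\sqrt{v}\,z+U)$, whose derivative in $z$ equals $-2\sqrt{v}\tanh(X)\mathrm{sech}^2(X)$, gives $\mathbb{E}[Z\,\mathrm{sech}^2(X)\mid U]=-2\sqrt{v}\,\mathbb{E}[\tanh(X)\mathrm{sech}^2(X)\mid U]$, and after taking expectations
\[
h'(v)=\mathbb{E}\bigl[\mathrm{sech}^2(X)\,(1-\tanh X)\bigr]=\mathbb{E}\bigl[(1-\tanh X)^2(1+\tanh X)\bigr].
\]
Since $|\tanh X|\le 1$, both factors in the last display are non-negative, which immediately yields $h'(v)\ge 0$.

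The upper bound $h'(v)\le 1$ is the only non-trivial part. Expanding $(1-t)^2(1+t)=1-t(1-t^2)-t^2$ with $t=\tanh X$ produces
\[
h'(v)=1-\mathbb{E}\bigl[\tanh X\,(1-\tanh^2 X)\bigr]-\mathbb{E}[\tanh^2 X],
\]
so it suffices to prove $\mathbb{E}[\tanh X\,(1-\tanh^2 X)]\ge 0$. My plan is to invoke the Nishimori symmetry: the density $p$ of $X$ satisfies $p(x)=e^{2x}\,p(-x)$. This is a short algebraic verification starting from the mixture form $p(x)=(1-\alpha)\,\varphi(x-v-\gamma)+\alpha\,\varphi(x-v+\gamma)$, with $\varphi$ the density of $\mathcal{N}(0,v)$, and using the defining identity $e^{2\gamma}=(1-\alpha)/\alpha$ to match the two mixture components after multiplying by $e^{2x}$. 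Since $\tanh x\,(1-\tanh^2 x)=\sinh x/\cosh^3 x$ is odd with the same sign as $x$, folding the integral and applying $p(x)-p(-x)=(1-e^{-2x})\,p(x)$ yields
\[
\mathbb{E}[\tanh X\,(1-\tanh^2 X)]=\int_0^\infty \frac{\sinh x}{\cosh^3 x}\bigl(p(x)-p(-x)\bigr)\,dx=\int_0^\infty \frac{\sinh x}{\cosh^3 x}(1-e^{-2x})\,p(x)\,dx\ge 0.
\]
The only step requiring any care is verifying the Nishimori identity, but because it reduces to a clean calculation with two Gaussian components weighted by the specific mixture coefficients $(1-\alpha,\alpha)$ tied to $\gamma$, I do not anticipate a real obstacle.
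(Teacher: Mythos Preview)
Your proof is correct and follows the paper's argument almost verbatim: continuity by dominated convergence, differentiation under the expectation, Gaussian integration by parts (Stein) to reach $h'(v)=\mathbb{E}\bigl[(1-\tanh X)(1-\tanh^2 X)\bigr]$, and then the Nishimori symmetry $p(x)=e^{2x}p(-x)$ for the upper bound.

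The only cosmetic difference is how the symmetry is cashed out at the end. The paper uses it to derive the moment identity $\mathbb{E}[\tanh^{2k}X]=\mathbb{E}[\tanh^{2k-1}X]$ and thereby rewrites $h'(v)$ in the manifestly bounded form $h'(v)=\mathbb{E}\bigl[(1-\tanh^2 X)^2\bigr]\le 1$. You instead expand $(1-t)^2(1+t)=1-t(1-t^2)-t^2$ and show $\mathbb{E}[\tanh X\,(1-\tanh^2 X)]\ge 0$ by folding the odd integrand. These are equivalent uses of the same identity; the paper's route has the mild advantage of producing the closed form $\mathbb{E}[(1-\tanh^2 X)^2]$, while yours avoids introducing the moment identity as a separate lemma.
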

 \begin{proof}
 By definition,
 \begin{align*}
 h(v) = (1-\alpha) \expect{ \tanh \left( v + \sqrt{v} Z + \frac{1}{2} \log \frac{1-\alpha}{\alpha}  \right)   } +  \alpha \expect{ \tanh \left( v + \sqrt{v} Z - \frac{1}{2} \log \frac{1-\alpha}{\alpha}  \right)   }.
 \end{align*}
 Since $|\tanh(x)| \le 1$, the continuity of $h$ follows from the dominated convergence theorem. We next show $h'(v)$ exists for $v \in (0, \infty)$.
 Fix $c \in \reals$ and let $g(v)= \expect{\tanh ( v + \sqrt{v} Z + c  )}$.
 Notice that $\tanh'(x+ \sqrt{x} Z+c) = ( 1- \tanh^2 ( x + \sqrt{x} Z + c  )) ( 1+ x^{-1/2} Z/2 )$ for $x \in (0, \infty)$, and
 \begin{align*}
  \big| \left( 1- \tanh^2 ( x + \sqrt{x} Z + c  ) \right) ( 1+ x^{-1/2} Z/2 ) \big| \le  1+ x^{-1/2} |Z| /2.
 \end{align*}
Since $|Z|$ is integrable, by the dominated convergence theorem, $\expect{\tanh'(x+ \sqrt{x} Z+c)}$ exists and is continuous in $x$.
Therefore, $x \to \expect{\tanh'(x+ \sqrt{x} Z+c)}$ is integrable over $x \in (0,\infty)$. It follows that
\begin{align*}
g(v)= \expect{ \tanh(c) + \int_{0}^{v}  \tanh'(x + \sqrt{x} Z +c ) \diff x } =\tanh(c) + \int_0^v \expect{\tanh'(x + \sqrt{x} Z +c )} \diff x,
\end{align*}
where the second equality holds due to Fubini's theorem. Hence,
 \begin{align*}
 g'(v) = \expect{ \left( 1- \tanh^2 ( v + \sqrt{v} Z + c  ) \right) ( 1+ v^{-1/2} Z/2 ) }.
 \end{align*}
 Using the integration by parts, we can get that
 \begin{align*}
 & \expect{ \left( 1- \tanh^2 ( v + \sqrt{v} Z + c  ) \right)   \sqrt{v} Z  } \\
  & =\int_{-\infty}^{\infty}  (1-\tanh^2( v+ x +c ) \frac{1}{\sqrt{2\pi v} }  \eexp^{-x^2/2v} \diff x \\
  & = -v  \int_{-\infty}^{\infty}  (1-\tanh^2( v+ x +c )   \left( \frac{1}{\sqrt{2\pi v} } \eexp^{-x^2/2v} \right)'\diff x \\
  & = -v (1-\tanh^2( v+ x +c )  \frac{1}{\sqrt{2\pi v} }  \eexp^{-x^2/2v} \bigg|_{-\infty}^{+\infty} + v
  \int_{-\infty}^{\infty}  (1-\tanh^2( v+ x +c )'   \frac{1}{\sqrt{2\pi v} } \eexp^{-x^2/2v} \diff x \\
  &= - 2 v\expect{\tanh(v+\sqrt{v} Z+c) (1-\tanh^2( v+ \sqrt{v} Z +c  ) ) }.
 \end{align*}
The last two displayed equations yield that
\begin{align*}
g'(v) =  \expect{  \left( 1- \tanh ( \sqrt{v} Z + v +c ) \right) \left( 1- \tanh^2( \sqrt{v} Z + v +c)  \right) } .
\end{align*}
It follows that
\begin{align*}
h'(v) = \expect{  \left( 1- \tanh ( \sqrt{v} Z + v + U  ) \right) \left( 1- \tanh^2( \sqrt{v} Z + v +U )  \right) }.
\end{align*}
Thus $h'(v) \ge 0$. Finally, we show $h'(v) \le 1$. We need the following
equality: For $k \in \naturals$,
\begin{align}
\expect{\tanh^{2k} ( \sqrt{v} Z  + v + U ) } = \expect{\tanh^{2k-1} ( \sqrt{v} Z  + v +U ) }, \label{eq:symmetricequality}
\end{align}
which immediately implies that
\begin{align*}
h'(v) = \expect{  \left( 1- \tanh^2( \sqrt{v} Z + v +U )  \right) ^2 } \le 1.
\end{align*}
To prove \prettyref{eq:symmetricequality}, we need to introduce the notation of symmetric random variables \cite{Urbanke08,Montanari05}.
A  random variable $X$ is said to be symmetric if it takes values in $(-\infty, + \infty)$ and
\begin{align}
\expect{g(X)} = \expect{g(-X) \eexp^{-2X} }, \label{eq:defsymmetric}
\end{align}
for any real function $g$ such that at least one of the expectation values exists.
It is easy to check by definition that $\sqrt{v} Z + v$ and $U$ are symmetric. Moreover, one can check that
a sum of two independent, symmetric random variables is symmetric. Thus, $\sqrt{v} Z+ v + U$ is symmetric.
As shown in \cite[Lemma 3]{Montanari05}, if $X$ is symmetric, then $ \expect{\tanh^{2k} ( X ) } = \expect{\tanh^{2k-1} ( X ) }$. Specifically,
by plugging $g(x) = \tanh^{2k}(x)$ and $g(x)=\tanh^{2k-1}(x)$ into \prettyref{eq:defsymmetric}, we have that
\begin{align*}
\expect{ \tanh^{2k}(X)} &=\expect{ \tanh^{2k}( -X) \eexp^{-2X }} = \expect{ \tanh^{2k}( X) \eexp^{-2X }}, \\
\expect{ \tanh^{2k-1}(X)} &=\expect{ \tanh^{2k-1}( -X) \eexp^{-2X }} = -\expect{ \tanh^{2k-1}( X) \eexp^{-2X }} .
\end{align*}
It follows that
\begin{align*}
\expect{ \tanh^{2k}(X)}  = \frac{1}{2} \expect{ \tanh^{2k}(X) ( 1+ \eexp^{-2X} ) } = \frac{1}{2}     \expect{ \tanh^{2k-1}(X) ( 1- \eexp^{-2X} ) } =  \expect{ \tanh^{2k-1} (X) }.
\end{align*}
 \end{proof}


 \begin{proof}[Proof of \prettyref{thm:accuracy}]
 In view of \prettyref{lmm:gaussiandensityevolution},
 \begin{align*}
  \lim_{n \to \infty} \prob{ \Gamma_u^t \ge0 | \tau_u=-} =  \lim_{n \to \infty} \prob{ \Gamma_u^t  \le 0 | \tau_u =+ }  = (1-\alpha) Q \left(  \frac{v_t + \gamma } { \sqrt{v_t} }\right) + \alpha Q \left(  \frac{v_t - \gamma } { \sqrt{v_t} }\right).
 \end{align*}
 Hence, it follows from \prettyref{lmm:optBPcondition} that
 \begin{align*}
 \lim_{n \to \infty} p_{G_n}( \hat{\sigma}_{\rm BP}^t ) = \lim_{n \to \infty} q_{T^t}^\ast =  1- \expect{ Q \left(  \frac{v_t + U }{ \sqrt{ v_t } }\right) }.
 \end{align*}
 
 We prove that $v_{t+1} \ge v_{t}$ for $t \ge 0$ by induction. Recall that $v_0=0 \le v_1= (1-2\alpha)^2 \mu^2/4 =\mu^2 h(v_0)/4 $.
 Suppose $v_{t+1} \ge v_{t}$ holds; we shall show the claim also holds for $t+1$. In particular, since $h$ is continuous on $[0, \infty)$ and
 differential on $(0, \infty)$, it follows from the mean value theorem that
 \begin{align*}
 v_{t+2} - v_{t+1} = \frac{\mu^2}{4} \left( h (v_{t+1} ) - h(v_{t} ) \right) =  \frac{\mu^2}{4} h' (x),
 \end{align*}
 for some $x \in (v_{t}, v_{t+1} ) $.  \prettyref{lmm:hmonotone} implies that $h'(x) \ge 0$ for $x \in (0, \infty)$,
 it follows that $v_{t+2} \ge v_{t+1}$.  Hence, $v_t$ is non-decreasing in $t$. Next we argue that $v_t \le \underline{v}$
 for all $ t \ge 0$ by induction, where $\underline{v}$ is the smallest fixed point of $v=\frac{\mu^2}{4} h(v)$. For the base case, $v_0=0 \le \underline{v}$.
 If $v_t \le \underline{v}$, then by the monotonicity of $h$, $v_{t+1}=\frac{\mu^2}{4}  h (v_{t} )  \le \frac{\mu^2}{4}  h ( \underline{v} ) =\underline{v}.$ 
 Thus, $\lim_{t \to \infty} v_t$ exists and $\lim_{t \to \infty} v_t= \underline{v}$. 
Therefore,
\begin{align*}
\lim_{t\to \infty}  \lim_{n \to \infty} p_{G_n}( \hat{\sigma}_{\rm BP}^t )  = \lim_{t \to \infty} \lim_{n \to \infty} q_{T^t}^\ast
= 1- \expect{ Q \left(  \frac{\underline{v} + U }{ \sqrt{ \underline{v}} }\right) }.
\end{align*}
Next, we prove the claim for $p^\ast_{G_n}$. In view of \prettyref{lmm:gaussiandensityevolution},
 \begin{align*}
  \lim_{n \to \infty} \prob{ \Lambda_u^t \ge0 | \tau_u=-} =  \lim_{n \to \infty} \prob{ \Lambda_u^t  \le 0 | \tau_u =+ }  = (1-\alpha) Q \left(  \frac{w_t + \gamma } { \sqrt{w_t} }\right) + \alpha Q \left(  \frac{w_t - \gamma } { \sqrt{w_t} }\right).
 \end{align*}
  Hence, it follows from \prettyref{lmm:accuracyupperbound} that
 \begin{align*}
 \limsup_{n \to \infty} p_{G_n}^\ast \le  \lim_{n\to \infty} p_{T^t}^\ast = 1-  \expect{ Q \left(  \frac{w_t+ U }{ \sqrt{ w_t } }\right) } .
 \end{align*}
Recall that $w_1=\mu^2/4 \ge w_t$.  By the same argument of proving $v_t$ is non-decreasing, one can show that $w_t$ is non-increasing in $t$.
Also, by the same argument of proving $v_t$ is upper bounded by $\underline{v}$, one can show that $w_t$ is lower bounded by $\overline{v}$,
where where $\overline{v}$ is the largest fixed point of $v=\frac{\mu^2}{4} h(v)$. Thus, $\lim_{t \to \infty} w_t$ exists and $\lim_{t \to \infty} w_t= \overline{v}$. 
Therefore,
\begin{align*}
\lim_{t\to \infty}  \limsup_{n \to \infty} p_{G_n}^\ast  \le \lim_{t\to \infty}  \lim_{n\to \infty} p_{T^t}^\ast
 = 1- \expect{ Q \left(  \frac{\overline{v} + U }{ \sqrt{ \overline{v}} }\right) }.
\end{align*}

Finally, notice that
\begin{align*}
w_{t+1}- v_{t+1} = \frac{\mu^2}{4} \left( h(w_t) - h(v_t) \right) \le   \frac{\mu^2}{4} (w_t-v_t),
\end{align*}
where the last inequality holds because $0\le h'(x) \le 1$. If $|\mu| <2$, then $\mu^2/4 \le 1-\epsilon$ for some $\epsilon>0$.
Hence, $(w_{t+1}- v_{t+1})\le(1-\epsilon) (w_t-v_t)$. Since $w_1-v_1=\mu^2 \alpha(1-\alpha)$, it follows that $\lim_{t\to \infty} (w_t-v_t)=0$ and
thus $\underline{v}=\overline{v}$. If instead $|\mu| \ge C$ for some sufficiently large constant $C$ or $\alpha \le \alpha^\ast$ for some sufficiently small constant $0<\alpha^\ast<1/2$,
then  it follows from \prettyref{thm:main} and \prettyref{thm:regularaccurateSI} that $\lim_{t\to \infty}  \lim_{n\to \infty} p_{T^t}^\ast  =  \lim_{t\to \infty}  \lim_{n\to \infty} q_{T^t}^\ast$.
As a consequence,  $\underline{v}=\overline{v}.$

 \end{proof}




\bibliographystyle{abbrv}
\bibliography{graphical_combined,other_refs,other_refs2}

\end{document}